\newtheorem{theorem}{Theorem}
\newtheorem{lemma}{Lemma}
\newtheorem{remark}{Remark}
\numberwithin{equation}{section}
\newcommand{\calR}{\ensuremath{\mathcal{R}}}
\newcommand{\tri}{\ensuremath{\bigtriangleup}}
\newcommand{\norm}[1]{\parallel{#1}\parallel}
\newcommand{\abs}[1]{|{#1}|}
\newcommand{\set}[1]{\left\{{#1}\right\}}
\newcommand{\expec}{\ensuremath{\mathbb{E}}}
\newcommand{\matR}{\ensuremath{\mathbb{R}}}
\newcommand{\prob}{\ensuremath{\mathbb{P}}}
\newcommand{\noteMC}[1]{\textbf{\textcolor{red}{{\small{[Mihai: }}#1]}}}
\def\norm#1{\left\|#1\right\|}
\renewcommand{\bar}{\overline} %
\DeclareMathOperator{\Tr}{Tr}%
\DeclareMathOperator{\vol}{vol} %
\DeclareMathOperator{\Cut}{cut}
\newcommand{\Ln}{L^{-}}
\newcommand{\Lp}{L^{+}}
\newcommand{\Dn}{D^{-}}
\newcommand{\Dp}{D^{+}}
\newcommand{\Tbar }{ \bar{T} }
\newcommand{\ELn }{   \expec{[\Ln]}  }
\newcommand{\ELp }{   \expec{[\Lp]}  }
\newcommand{\EDp }{   \expec{[\Dp]}  }
\newcommand{\EDn }{   \expec{[\Dn]}  }
\newcommand{\An}{A^{-}}
\newcommand{\Ap}{A^{+}}
\newcommand{\EAp }{   \expec{[\Ap]}  }
\newcommand{\EAn }{   \expec{[\An]}  }
\newcommand\mymatrixbraceoffseth{0.5em}
\newcommand\mymatrixbraceoffsetv{0.2em}
\newcommand*\mymatrixbraceright[4][m]{
    \draw[mymatrixbrace] ($(#1.north west)!(#1-#3-1.south west)!(#1.south west)-(\mymatrixbraceoffseth,0)$)
        -- node[left=2pt] {#4} 
        ($(#1.north west)!(#1-#2-1.north west)!(#1.south west)-(\mymatrixbraceoffseth,0)$);
}
\newcommand*\mymatrixbracetop[4][m]{
    \draw[mymatrixbrace] ($(#1.north west)!(#1-1-#2.north west)!(#1.north east)+(0,\mymatrixbraceoffsetv)$)
        -- node[above=2pt] {#4} 
        ($(#1.north west)!(#1-1-#3.north east)!(#1.north east)+(0,\mymatrixbraceoffsetv)$);
}
\newcommand{\tce}{   \tilde{c}_{\varepsilon}  }
\newcommand{\EA }{   \expec{[A]}  }
\newcommand{\EDbar }{   \expec{[\Dbar]}  }
 \newcommand{\Dbar }{ \bar{D} }
\newcommand{\Lbar }{ \bar{L} }
\newcommand{\ELbar }{   \expec{[\Lbar]}  }
\newcommand{\mb}[1]{\mbox{\boldmath$#1$}}
\newcommand{\ones }{ \mb{1} }
\newcommand{\infovec }{w}
\newcommand{\pert }{\tri}
\newcommand{\Gn }{ G^{-} }
\newcommand{\Gp }{ G^{+} }
\newcommand{\taup }{ \tau^{+} }
\newcommand{\taun }{ \tau^{-} }
\def\yen{{\setbox0=\hbox{Y}Y\kern-.97\wd0\vbox{\hrule height.1ex
width.98\wd0\kern.33ex\hrule height.1ex width.98\wd0\kern.45ex}}}
\begin{document}

\twocolumn[

%
\aistatstitle{SPONGE: A generalized eigenproblem for clustering signed networks}


 \aistatsauthor{Mihai Cucuringu \And Peter Davies \And  Aldo Glielmo \And Hemant Tyagi}
 \aistatsaddress{University of Oxford \\ The Alan Turing Institute \\ mihai.cucuringu@stats.ox.ac.uk \And  University of Warwick \\ P.W.Davies@warwick.ac.uk  \And  King's College London \\aldo.glielmo@kcl.ac.uk \And INRIA Lille - Nord Europe\\hemant.tyagi@inria.fr} 
]

\begin{abstract}
We introduce a principled and theoretically sound spectral method for 
$k$-way clustering 
in signed graphs, where the affinity measure between  
nodes 
takes either positive or negative values. Our approach is motivated by social balance theory, where the task of clustering aims to decompose the network into disjoint groups, such that individuals within the same group are connected by as many positive edges as possible, while individuals from different groups are connected by as many negative edges as possible. Our algorithm relies on a generalized eigenproblem formulation inspired by recent work on 
constrained clustering. 
We provide theoretical guarantees for our approach in the setting of a signed stochastic block model, by leveraging tools from matrix perturbation theory and random matrix theory. An extensive set of numerical experiments on both synthetic and real data shows that our approach compares favorably with state-of-the-art methods for signed clustering, especially 
                        for large number of clusters and sparse measurement graphs. 
%
%
\end{abstract}



\vspace{-7mm}
\section{Introduction}
\vspace{-2mm}

Clustering is a 
       popular
unsupervised learning  
            task  
aimed at 
extracting groups of nodes in a weighted graph in such a way that the average connectivity or similarity between pairs of nodes within the same group is larger than that of pairs of nodes from different groups.  While most of the literature has focused on clustering graphs where the edge weights  
are  non-negative, 
the task of clustering signed graphs (whose edge weights can take negative values as well)
%
remained relatively unexplored,  
and has recently become an increasingly important research topic \cite{Leskovec_2010_SNS}. 

The motivation for recent studies arose from a variety of examples from social networks, where users  express relationships of trust-distrust or friendship-enmity, online news and review websites such as Epinions \cite{epinions}  and Slashdot \cite{slashdot} that allow users to approve or denounce others \cite{Leskovec_2010_PPN}, and shopping bipartite networks  encoding 
like-dislike 
preferences between users and products \cite{banerjee2012partitioning}.

\vspace{-1mm}
Another application  
stems from time series analysis, in particular clustering time series  
\cite{aghabozorgi2015time}, a task broadly used for analyzing gene expression data in biology \cite{fujita2012functional}, economic time series that capture macroeconomic variables \cite{focardi2001clustering}, and financial time series corresponding to large baskets of instruments in the stock market \cite{ziegler2010visual, pavlidis2006financial}.
%
In such contexts, a popular similarity measure     
in the literature is given by the Pearson correlation coefficient 
that measures 
linear dependence between variables and takes values in $[-1,1]$. By interpreting the correlation matrix as a weighted network whose (signed) edge weights capture the pairwise correlations, we cluster the multivariate time series by clustering the underlying signed network. To increase robustness, tests of statistical significance are often applied to individual pairwise correlations,  leading to sparse networks after thresholding on the p-value associated to each  individual sample correlation \cite{Ha2015_Network_Threshold_pValue}. 
%
We refer the reader to the popular work of Smith et al. \cite{Smith_2011_FMRI_networks_TimeSeries}  for a detailed survey and comparison of various  
        methods
for turning time series 
into networks. 
Importantly, they conclude that in general correlation-based approaches can be quite successful at estimating the connectivity of brain networks from fMRI time series. 

\vspace{-3mm}
\paragraph{Contributions.} Our contributions are as follows.  
\vspace{-1mm}

\noindent  $\bullet$ We propose a regularized spectral algorithm for clustering signed graphs that is based on solving a generalized eigenproblem. Our approach is scalable and compares favorably to state-of-the-art methods.

\vspace{-1mm} 
\noindent  $\bullet$ We provide a detailed theoretical analysis of our algorithm 
   with respect to 
its robustness against sampling sparsity 
and noise level, under a Signed Stochastic Block Model (SSBM). 

\vspace{-1mm} 
\noindent   $\bullet$ 
To our knowledge, we provide the first theoretical guarantees -- in the SSBM framework -- for the
Signed Laplacian method introduced in the popular work of Kunegis et al. \cite{kunegis2010spectral} for clustering signed graphs.

\vspace{-1mm} 
\noindent  $\bullet$ Finally, we provide extensive numerical experiments on both synthetic and real data, showing that our 
algorithm compares favourably to 
state-of-art methods. In particular, it is able to recover 
partitions  in the regime
where the graph is sparse and the number of clusters $k$ is large, where existing methods completely fail.



\vspace{-3mm}
\paragraph{Paper outline.} 
The remainder of this paper is organized as follows.  
Section \ref{sec:relatedWork} is a summary of related work from the signed clustering literature. 
Section \ref{sec:formulation_SPONGE} formulates our \textsc{SPONGE} (Signed Positive Over Negative Generalized Eigenproblem) algorithm for clustering signed graphs. 
%
%
%
Section \ref{sec:SPONGE_Theory} introduces the Signed Stochastic Block Model (SSBM) and 
contains our theoretical analysis of \textsc{SPONGE} in the SSBM.
Section \ref{sec:Signed_Laplacian_theory} contains a similar theoretical analysis for signed spectral clustering via the Signed Laplacian.
Section \ref{sec:numericalExperiments}  contains numerical experiments on various synthetic and real data sets.
Finally, Section \ref{sec:conclusion} summarizes our findings along with future research directions.

\vspace{-3mm}
\paragraph{Notation.}
For a matrix $A \in \matR^{n \times n}$, we 
denote its eigenvalues and eigenvectors by 
$\lambda_i(A)$ and $v_i(A)$ respectively, $ \forall i = 1,\dots,n$. For symmetric $A$, we assume the ordering $\lambda_1(A) \geq \cdots \geq \lambda_n(A)$. For $A \in \matR^{m \times n}$, 
$\norm{A}_2$ denotes its spectral norm, i.e., the largest singular value of $A$. We denote $ \ones $ to be the all one's column vector. For a matrix $U$,
$\calR(U)$ denotes the range space of its columns.
Throughout, $G=(V,E)$ denotes the signed graph with vertex set $V$, edge set $E$, and adjacency matrix $A \in \{0,\pm 1 \}^{n \times n} $. 
We let $\Gp = (V, E^{+})$ (resp. $\Gn=(V, E^{-})$) 
%
%
denote the unsigned subgraphs of positive (resp. negative) edges with  
adjacency matrices $\Ap$ (resp. $\An$), such that $A = \Ap \hspace{-1mm} - \An$. 
Here, $A_{ij}^+ = \max\set{A_{ij},0}$ and $A_{ij}^- = \max\set{-A_{ij},0}$. Moreover 
$ E^{+}  \hspace{0mm}  \cap  E^{-} \hspace{-1mm} =\emptyset$, 
and 
$ E^{+}  \hspace{0mm}  \cup  E^{-} \hspace{-1mm} = E$.

\vspace{-2mm}

\vspace{-2mm}
\section{Related literature}  \label{sec:relatedWork}


\vspace{-2mm}
The problem of clustering signed graphs can be traced back to the work of Cartwright and Harary from the 1950s on social balance theory \cite{HararySigned,cartwright1956structural}, who explored the concept of \textit{balance}
in signed graphs. A signed graph is said to be balanced iff (i) all the edges are positive, or (ii)  the nodes can be partitioned into two disjoint sets such that positive edges exist only within clusters, and negative edges are only present across clusters.
The ``weak balance theory'' of Davis \cite{DavisWeakBalance} relaxed the balanced relationship -- a signed graph is weakly balanced iff (i) all the edges are positive, or (ii)  the nodes can be partitioned into $k \in \mathbb{N}$ disjoint sets such that positive edges exist only within clusters, and negative edges are only present across clusters.

Motivated by this theory, the $k$-way clustering  problem in signed graphs amounts to finding a partition into $k$ clusters such that most edges within clusters are positive, and most edges across clusters are negative. 
\begin{wrapfigure}{l}{0.5\columnwidth}
\vspace{-4mm}
\centering
\includegraphics[width=0.55\columnwidth]{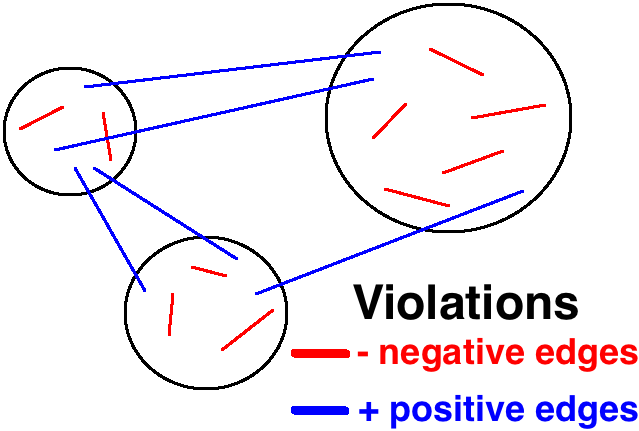} 
\vspace{-10mm}
\end{wrapfigure}
Alternatively, one may seek a partition such that the number of \textbf{violations} is minimized, i.e., negative edges within the cluster and positive edges across clusters, as depicted in the figure above.
In order to avoid partitions where clusters contain only a few nodes, one often wishes to  also 
incentivize clusters of large size or volume. 

A number of algorithms have been proposed for clustering signed graphs. Doieran and Mrvar \cite{DoreianMrvar20091} proposed a local search approach in the spirit of the  Kernighan-Lin algorithm \cite{KernighanLin}. Yang et al. \cite{YangCheungLiu} introduced an agent-based approach by  considering a certain random walk on the graph. 
In recent years, several efforts for the analysis of signed graphs have lead to novel extensions for  various tasks, including 
edge prediction~\cite{kumar2016wsn,Leskovec_2010_PPN},
node classification~\cite{Bosch_2018_nodeclassification,tang2016nodeClassification}, 
node embeddings~\cite{Chiang_2011_ELC,derr_2018_signedGraphConvolutionalNetwork,Kim_2018_SIDE_WWW, Wang_2017_signedNetworkEmbedding}, 
node ranking~\cite{Chung_2013,Shahriari_2014}, 
and clustering~\cite{Chiang_2012_Scalable, kunegis2010spectral, Mercado_2016_Geometric}. We refer the reader to~\cite{tang2015survey} for a recent survey on the topic.

\vspace{-1mm}
\textbf{Spectral methods} on signed networks began with Anchuri et al. \cite{anchuri2012communities}; 
they proposed optimizing modularity and other objective functions in signed 
graphs. Kunegis et al. \cite{kunegis2010spectral} proposed 
solving a 2-way ``signed'' ratio-cut problem via the (combinatorial) Signed Laplacian \cite{HouSignedLap} $\bar{L} = \bar{D} - A$, 
where $\bar{D}$ is a diagonal matrix with $\bar{D}_{ii} = \sum_{i=1}^{n} |A_{ij}|$. 
%
Similar signed extensions also exist for the random-walk Laplacian $\bar{L}_{\text{rw}} = I - \bar{D}^{-1} A$, and the symmetric graph Laplacian $\bar{L}_{\text{sym}} = I - \bar{D}^{-1/2} A \bar{D}^{-1/2}$, 
the latter of which is particularly suitable for skewed degree distributions.
Chiang et al. \cite{DhillonBalNormCut} put forth the claim that the Signed Laplacian $ \Lbar$  faces a fundamental weakness when directly extended to $k$-way clustering for $k>2$. 
%
%
%
They proposed a formulation based on the   \textit{Balanced Normalized Cut} (BNC) objective

\vspace{-4mm}
\begin{equation*}
\operatorname{min}_{ \{x_1,\ldots,x_k\} \in \mathcal{I} } \left(  \sum_{c=1}^{k} \frac{x_c^T(D^{+} - A)x_c }{x_c^T \bar{D} x_c}  \right).
\end{equation*}
\vspace{-4mm}

Here, $\Dp $ denotes the diagonal matrix with degrees  $\Dp_{ii} = \sum_{j=1}^{n} A^{+}_{ij}$; $C_1, \ldots, C_k$ denote the $k$ clusters, and 
$\mathcal{I}$ denotes a $k$-cluster indicator set, 
%
where $ (x_t)_i = 1, \text{if node } i \in C_t $, and $0$ otherwise. 
%
The same authors also consider the closely related  \textit{Balanced Ratio Cut}, which replaces $\bar{D}$ in the denominator by $I$. 
We remark that spectral clustering algorithms (for signed/unsigned graphs) typically have a common pipeline, wherein a suitable graph operator is considered (for eg. Laplacian), its $k$ (or $k-1$) extremal eigenvectors are found, and the resulting points in $\matR^k$ (or $\matR^{k-1}$) are then clustered using $k$-means.

Hsieh et al. \cite{DhillonLowRank} propose performing matrix completion as a preprocessing step 
before clustering using the top $k$ eigenvectors of the completed matrix. Mercado et al. \cite{Mercado_2016_Geometric} present an extended spectral method based on the geometric mean of Laplacians.
For $k=2$, Cucuringu \cite{sync_congress} showed that signed clustering can be cast as an instance of the group synchronization \cite{sync} problem over $\mathbb{Z}_2$. 
%
Finally, we refer the reader to \cite{JeanGallierSurvey} for a recent survey on clustering 
signed and unsigned graphs. 



\vspace{-2mm}
\section{ \textsc{SPONGE}: a signed generalized eigenproblem formulation}  \label{sec:formulation_SPONGE}
\vspace{-3mm}

%
Given an unsigned graph $H$ with adjacency matrix $W$ with non-negative entries, for any cluster $C\subset V$ 
define 
%
  $\Cut_{H}(C,\overline{C}) := \sum_{i\in C, j \in \overline{C} } W_{ij}$
%
as the total weight of  edges crossing from $C$ to $\bar{C}$. 
Also define the volume of $C$, $\vol_H(C) := \sum_{i\in C}\sum_{j = 1}^n W_{ij}$ as 
the sum of degrees of nodes in $C$. Motivated by the approach of \cite{consClust} in the context of constrained clustering, we aim to minimize the following  two 
measures of \textit{badness}  

\vspace{-5mm}
\begin{equation}
\label{eq:Cut_{\Gp}}
 \frac{\Cut_{\Gp}(C,\overline{C})}{\vol_{\Gp}(C)},
\end{equation}
\vspace{-3mm}
\begin{equation}\label{eq:Cut_{\Gn}}
 \Big( \frac{\Cut_{\Gn}(C,\overline{C})}{\vol_{\Gn}(C)} \Big)^{-1} = \frac{\vol_{\Gn}(C)}{\Cut_{\Gn}(C,\overline{C})}. 
\end{equation}
\vspace{-4mm}

Ideally, $C$ 
is 
such that both~\eqref{eq:Cut_{\Gp}} and~\eqref{eq:Cut_{\Gn}} are small. To this end, we first consider ``merging'' 
the objectives ~\eqref{eq:Cut_{\Gp}} and~\eqref{eq:Cut_{\Gn}}, and would like to solve

\vspace{-4mm}
\begin{equation*} 
\min_{C\subset V}\,\, \frac{\Cut_{\Gp}(C,\overline{C}) +  \taun \vol_{\Gn}(C)}{ \Cut_{\Gn}(C,\overline{C}) +   \taup \vol_{\Gp}(C) }, 
\end{equation*}
\vspace{-4mm}

with $ \taup, \taun > 0$ denoting  trade-off or regularization parameters.   
While at first sight this may seem rather ad-hoc in nature, we provide a sound theoretical justification for our approach in later sections.
A natural extension to  
$k > 2$ disjoint clusters $C_1,\hdots,C_k$ 
leads to the following discrete optimization problem

\vspace{-9mm}
\begin{equation}
\label{eq:discrete_optimization_problem}
\min_{C_1,\hdots,C_k} \sum_{i=1}^k \frac{\Cut_{\Gp}(C_i,\overline{C_i}) + \taun  \vol_{\Gn}(C_i)}{\Cut_{\Gn}(C_i,\overline{C_i}) + \taup \vol_{\Gp}(C_i)}.
\end{equation}
\vspace{-6mm}

For a subset $C_i\subset V$, the normalized indicator vector 

\vspace{-7mm}
\begin{equation*} 
\hspace{-3mm}
(x_{C_i})_j = 
 \begin{cases} 
(\Cut_{\Gn}(C_i,\overline{C_i}) + \taup \vol_{\Gp}(C_i))^{-1/2};& \hspace{-1mm} j \in C_i \\
0 ; & \hspace{-1mm} j \notin C_i
\end{cases}
\end{equation*}
\vspace{-6mm}

renders~\eqref{eq:discrete_optimization_problem} 
as the  discrete optimization problem

\vspace{-5mm}
\begin{equation}
\label{eq:discrete_optimization_problem-2}
\min_{C_1,\hdots,C_k} \sum_{i=1}^k \frac{ x_{C_i}^T (\Lp +  \taun \Dn) x_{C_i} }{ x_{C_i}^T (\Ln +  \taup \Dp) x_{C_i} }, 
\end{equation}
\vspace{-4mm}


which is 
NP-hard.  
Here $\Lp$ (resp. $\Ln$) denotes the Laplacian of $\Gp$ (resp. $\Gn$), and  
$\Dp$ (resp. $\Dn$) denotes a diagonal matrix with the degrees of $\Gp$ (resp. $\Gn$). 
A common approach in this situation is to drop the discreteness constraint and allow each $x_{C_i}$ to take values in $\mathbb{R}^n$. To this end, we introduce a new set of vectors $z_1,\ldots,z_k\in\mathbb{R}^n$, such that they are orthonormal with respect to $\Ln+ \taup \Dp$, i.e.,  

\vspace{-4mm}
\begin{itemize}
  \setlength\itemsep{0em}
 \item $z_i^T  (\Ln  +   \taup  \Dp) z_i =1$, and 
 \item $z_i^T(\Ln + \taup   \Dp)z_j= 0$, for $i \neq j$.
\end{itemize}
\vspace{-4mm}

This leads to the following modified version of \eqref{eq:discrete_optimization_problem-2}  

\vspace{-8mm}
\begin{equation}\label{eq:discrete_optimization_problem-z}
 \min_{z_i^T(\Ln+\Dp)z_j=\delta_{ij}} \sum_{i=1}^k \frac{ z_i^T (\Lp + \taun  \Dn) z_i }{ z_i^T (\Ln + \taup  \Dp) z_i }.
\end{equation}
\vspace{-5mm}

The above choice of $(\Ln  +   \taup  \Dp) $-orthonormality of vectors $z_1,\ldots,z_k$ 
is not -- strictly speaking -- a relaxation of \eqref{eq:discrete_optimization_problem-2}. 
But it leads to a suitable eigenvalue problem. Indeed, assuming $\Ln +  \taup  \Dp$ is full rank, consider the change of variables $y_i=(\Ln +  \taup  \Dp)^{1/2}z_i$ 
which changes the orthonormality constraints of~\eqref{eq:discrete_optimization_problem-2}
to  $y_i^T y_j = \delta_{ij}$. Furthermore, denoting $Y=[y_1,\ldots,y_k ]\in\mathbb{R}^{n\times k}$, one can rewrite~\eqref{eq:discrete_optimization_problem-z} as 

\vspace{-7mm}
\begin{align}
\label{eq:discrete_optimization_problem_y__}
\hspace{-3mm}
 \min_{Y^T Y = I} &  \Tr \Big( Y^T (\Ln + \taup \Dp)^{-1/2}  \\ \nonumber
              &  (\Lp + \taun \Dn) (\Ln + \taup \Dp)^{-1/2} Y \Big).    \nonumber
\end{align}
\vspace{-8mm}

The solution to~\eqref{eq:discrete_optimization_problem_y__} is given by the eigenvectors corresponding to the $k$-smallest eigenvalues of $(\Ln +  \taup \Dp)^{-1/2} (\Lp +  \taun \Dn) (\Ln +  \taup \Dp)^{-1/2}$ (see for eg. \cite[Theorem 2.1]{sameh2000}). One can also verify\footnote{
%
Let $A,B$ be symmetric matrices with $A \succ 0$.  
Then $(\lambda,v)$ is an eigenpair of $A^{-1/2}BA^{-1/2}$ iff $(\lambda, A^{-1/2}v)$ is a 
generalized eigenpair of $(B,A)$.
Indeed, for $w=A^{-1/2}v$, 
$ A^{-1/2}BA^{-1/2}v = \lambda v   
\Leftrightarrow Bw = \lambda Aw.$ 
%
%
%
}
that $(\lambda,v)$ is an eigenpair of the previous matrix if and only 
if $(\lambda, (\Ln +  \taup \Dp)^{-1/2}v)$ is a \emph{generalized} eigenpair of 
$( \Lp +  \taun \Dn, \Ln +  \taup \Dp)$. 

Our complete algorithm \textsc{SPONGE} first finds the smallest $k$ generalized eigenvectors of $( \Lp +  \taun \Dn, \Ln +  \taup \Dp)$ for suitably chosen $\taup,\taun > 0$. We then cluster the resulting embedding of the vertices in $\matR^k$ using $k$-means$++$. We also consider a variant of \textsc{SPONGE}, namely \textsc{SPONGE}$_{sym}$, where the embedding is generated using the smallest $k$ generalized eigenvectors of $( \Lp_{sym} +  \taun I, \Ln_{sym} +  \taup I)$, wherein $\Lp_{sym} = (\Dp)^{-1/2} \Lp (\Dp)^{-1/2}$ is the so-called symmetric Laplacian of $G^{+}$ (similarly for $\Ln_{sym}$). 

\begin{remark}
Solving \eqref{eq:discrete_optimization_problem_y__} is computationally expensive in practice as it involves 
computing a matrix-inverse. This is not the case if we solve the generalized eigenproblem version of \eqref{eq:discrete_optimization_problem_y__}. In our experiments, we use LOBPCG \cite{lobpcg_method}, a preconditioned eigensolver\footnote{Locally Optimal Block Preconditioned Conjugate Gradient method.} for solving large positive definite generalized eigenproblems.
\end{remark}




\vspace{-3mm}
\section{Analysis of \textsc{SPONGE} under SSBM}  \label{sec:SPONGE_Theory}
\vspace{-3mm}

We begin by introducing the signed stochastic block model (SSBM) 
in Section \ref{subsec:SSBM_model} and then theoretically analyze the 
performance of \textsc{SPONGE} in Section \ref{subsec:theory_core_sponge}.

\vspace{-2mm}
\subsection{Signed stochastic block model}  \label{subsec:SSBM_model}
\vspace{-2mm}
%

For ease of exposition, we assume $n$ is a multiple of $k$, and partition the vertices of $G$ into 
$k$-equally sized clusters 
$C_1, \dots, C_k$. 
In particular, we assume w.l.o.g 
that 
$C_l = \set{\frac{(l-1)n}{k}+1,\dots,\frac{l n}{k}}$ 
for $l=1,\dots,k$. 
The graph $G$ follows the Erd\H{o}s-R\'enyi random graph model $G(n,p)$ wherein each edge takes value $+1$ if both its endpoints are contained in the same cluster, and $-1$ otherwise. To model noise, 
we flip the sign of each edge independently with probability $\eta \in [0,1/2)$. 

Let $A \in \set{0, \pm 1}^{n \times n}$ denote the adjacency matrix of $G$, then $(A_{ij})_{i \leq j}$ are independent random variables. Recall that $A = A^+ - A^-$,
where $ A^+, A^- \in \set{0,1}^{n \times n}$ are the adjacency matrices of the unsigned graphs 
$\Gp,\Gn$ respectively. Then, $ (A_{ij}^+)_{i \leq j} $ are independent,  
and similarly $ (A_{ij}^-)_{i \leq j}$ are also independent. 
But clearly, for given $i,j \in [n]$ with $i \neq j$, 
$A_{ij}^+$ and  $A_{ij}^-$ are dependent.

\begin{remark} \label{rem:ssbm}
Contrary to stochastic block models for unsigned graphs, we do not require the 
intra-cluster edge probabilities to be different from those of
inter-cluster edges. While this is 
necessary in the unsigned case for detecting clusters (eg. \cite{Mossel2015a,Mossel2015b}), 
it is not the case for signed networks since the sign of the edge already achieves this purpose implicitly. 
In fact, as one would expect, it is the parameter $\eta$ that is crucial for identifiability, 
as shown  
formally in our analysis.
\end{remark}

\vspace{-2mm}
\subsection{Theoretical results for \textsc{SPONGE}} \label{subsec:theory_core_sponge}
\vspace{-2mm}

We now theoretically analyze the performance of \textsc{SPONGE} under the SSBM. 
In particular, we analyze the embedding 
given by  the smallest $k$ eigenvectors of 

\vspace{-8mm}
\begin{equation*}
  T = (L^{-} + \tau^+ D^{+} )^{-1/2} (L^{+}  + \tau^{-} D^{-} ) (L^{-} + \tau^+ D^{+} )^{-1/2}, 
\end{equation*}
\vspace{-8mm}

for parameters $\tau^-, \tau^+ > 0$. 
%
%
Recall that $(\lambda, v)$ is an eigenpair of $T$ 
if and only if $ (\lambda, (L^{-} + \tau^+ D^{+} )^{-1/2}  v ) $ is a generalized eigenpair 
for the matrix pencil  $(L^{+}  + \tau^{-} D^{-}, L^{-} + \tau^+ D^{+})$. 
We 
assume throughout that both $L^{+}  + \tau^{-} D^{-}$  and   $ L^{-} + \tau^+ D^{+}$ are full rank. 
For ease of exposition, we focus on the case $k=2$ but the results can be extended to the general $k \geq 2$ setting (work in progress) using the same proof outline. Denote 

\vspace{-9mm}
\begin{align*} 
  \Tbar & = ( \ELn + \tau^+ \EDp )^{-1/2}   \\
        &  \quad   (\ELp  + \tau^{-}  \EDn )  ( \ELn + \tau^+ \EDp )^{-1/2},    \nonumber
\end{align*}
\vspace{-8mm}

and also denote

\vspace{-8mm}
\begin{equation*} 
V_2(T) = [v_{n}(T) \ v_{n-1}(T)],  
\quad 
V_2(\Tbar) = [v_{n}(\Tbar) \ v_{n-1}(\Tbar)], 
\end{equation*}
\vspace{-8mm}

to be $n \times 2$  matrices consisting of the smallest two (unit $\ell_2$ norm) eigenvectors of $T, \Tbar$ respectively. Let 

\vspace{-5mm}
\begin{equation} \label{eq:k_2_inform_vect}
\infovec = \frac{1}{\sqrt{n}}( \underbrace{ 1, \ldots, 1}_{n/2}, \;\; \underbrace{-1, \ldots, -1}_{n/2})^T \in \matR^n
\end{equation}
\vspace{-5mm}

correspond to the ``ground truth'' or  
``planted clusters'' 
we seek to recover.
Our main result is the following. 
%
\begin{theorem} \label{thm:sponge_k_2_bot_k_short}
For $\eta \in [0, 1/2)$ let $ \tau^+, \tau^- > 0$ satisfy 

\vspace{-5mm}
\begin{equation} \label{eq:tau_cond_sponge_k2bot_k_short}
\tau^- <  \tau^+  \Big( \frac{ \frac{n}{2} -1 + \eta  }{ \frac{n}{2} - \eta } \Big). 
\end{equation}
\vspace{-5mm}

Then it holds that $\set{v_{n-1}(\Tbar), v_{n}(\Tbar)} = \set{\frac{1}{\sqrt{n}}\ones,\infovec}$ where $\infovec$ is defined 
in \eqref{eq:k_2_inform_vect}. 
Moreover, assuming $n \geq 6 $, for given $0 < \varepsilon \leq 1/2$, $ \epsilon \in (0,1) $ and $\varepsilon_{\tau} \in (0,1)$ let $\tau^- \leq \varepsilon_{\tau}  \tau^+  \Big( \frac{ \frac{n}{2} -1 + \eta  }{ \frac{n}{2} - \eta } \Big)$ and 
$p \geq c^{\prime}_1(\varepsilon, \tau^+, \tau^-,\varepsilon_{\tau}, \eta,\epsilon) \frac{\log n}{n}$ 
where $c^{\prime}_1(\varepsilon, \tau^+, \tau^-,\varepsilon_{\tau}, \eta,\epsilon) > 0$ depends only on 
$\varepsilon, \tau^+, \tau^-,\varepsilon_{\tau}, \eta,\epsilon$. 
%
%
%
%
%
%
Then there exists a constant $c_{\varepsilon} > 0$  depending only on $\varepsilon$ such that with probability at least 
$1 - \frac{4}{n} - 2n \exp{ \big( \frac{ - p  n }{c_{\varepsilon}} \big) }$, it holds that

\vspace{-4mm}
\begin{equation*}
	\norm{( I - V_2(T) V_2(T)^T) V_2(\bar{T})}_2  \leq  \frac{ \epsilon }{ 1 - \epsilon}. 
\end{equation*}
\end{theorem}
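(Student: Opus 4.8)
The plan is to establish the two assertions in turn: first the exact eigenstructure of the population matrix $\Tbar$, then the high-probability closeness of the bottom eigenspaces of $T$ and $\Tbar$.

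\textbf{Population eigenstructure.} For the first claim I would compute the expected building blocks under the SSBM with $k=2$ equal clusters. Each intra-cluster edge is positive with probability $p(1-\eta)$ and negative with probability $p\eta$, these roles being reversed across clusters; consequently $\EAp$ and $\EAn$ are block-constant (constant on diagonal blocks and on off-diagonal blocks, with zero diagonal), while $\EDp = d^+ I$ and $\EDn = d^- I$ are scalar multiples of the identity, since every vertex has the same expected signed degree, with $d^+ = p(\tfrac n2 - 1 + \eta)$ and $d^- = p(\tfrac n2 - \eta)$. Thus $\ELp,\ELn,\EDp,\EDn$ are simultaneously diagonalized by the common basis $\{\tfrac1{\sqrt n}\ones,\ \infovec,\ \text{bulk}\}$, where the bulk consists of vectors summing to zero on each cluster; hence so is $\Tbar$, and by the eigenpair equivalence recalled in the excerpt its eigenvalue on a common eigenvector $v$ is $\theta_v = \frac{\ell^+_v + \taun d^-}{\ell^-_v + \taup d^+}$, where $\ell^+_v,\ell^-_v$ are the eigenvalues of $\ELp,\ELn$ on $v$. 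A direct computation gives $\theta_{\ones} = \frac{\taun d^-}{\taup d^+}$, $\theta_{\infovec} = \frac{np\eta + \taun d^-}{np(1-\eta) + \taup d^+}$, and a single bulk value $\theta_{\mathrm b} = \frac{d^+ + p(1-\eta) + \taun d^-}{d^- + p\eta + \taup d^+}$. I would then check that condition \eqref{eq:tau_cond_sponge_k2bot_k_short}, which is exactly $\taun d^- < \taup d^+$, forces both $\theta_{\ones} < \theta_{\mathrm b}$ and $\theta_{\infovec} < \theta_{\mathrm b}$; after cross-multiplication these comparisons reduce to the sign of $\taun d^- - \taup d^+$ and of $2\eta-1$, both negative here. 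Since $\theta_{\mathrm b}$ is the unique eigenvalue of the $(n-2)$-dimensional bulk, $\tfrac1{\sqrt n}\ones$ and $\infovec$ are precisely the eigenvectors of the two smallest eigenvalues of $\Tbar$, proving the first claim.

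\textbf{Davis--Kahan reduction.} For the second claim, write $E = T - \Tbar$ and let $P = V_2(T)V_2(T)^T$ project onto the bottom-two eigenspace of $T$ and $\hat P = V_2(\Tbar)V_2(\Tbar)^T$ onto that of $\Tbar$; the target $\norm{(I-P)V_2(\Tbar)}_2 = \norm{(I-P)\hat P}_2$ is the sine of the largest principal angle between these subspaces. I would obtain $\norm{(I-P)V_2(\Tbar)}_2 \le \frac{\norm{E}_2}{\delta - \norm{E}_2}$ by the Davis--Kahan argument: since $\calR(P)$ is $T$-invariant, $X = (I-P)\hat P$ solves a Sylvester identity $TX - X\Tbar = (I-P)E\hat P$ (read between $\calR(\hat P)$ and $\calR(I-P)$); Weyl's inequality places the spectrum of $T$ on $\calR(I-P)$ in $[\theta_{\mathrm b}-\norm{E}_2,\infty)$ and that of $\Tbar$ on $\calR(\hat P)$ in $(-\infty,\max\{\theta_{\ones},\theta_{\infovec}\}]$, so the two are separated by at least $\delta - \norm{E}_2$, where $\delta = \theta_{\mathrm b}-\max\{\theta_{\ones},\theta_{\infovec}\}>0$ is the population gap. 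Inverting the Sylvester operator gives $\norm{X}_2 \le \norm{E}_2/(\delta - \norm{E}_2)$, and substituting $\norm{E}_2 \le \epsilon\delta$ together with monotonicity of $t\mapsto t/(\delta - t)$ yields exactly $\tfrac{\epsilon}{1-\epsilon}$; here the factor $\varepsilon_{\tau}<1$ in the hypothesis keeps $\taun d^-$ bounded away from $\taup d^+$ and hence $\delta$ bounded below.

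\textbf{Concentration of $\norm{T-\Tbar}_2$ (the main obstacle).} It remains to show $\norm{E}_2 \le \epsilon\delta$ with the stated probability, which is the heart of the argument. Set $M = \Ln + \taup\Dp$, $N = \Lp + \taun\Dn$ and their population counterparts $\bar M,\bar N$, so $T = M^{-1/2}NM^{-1/2}$ and $\Tbar = \bar M^{-1/2}\bar N\bar M^{-1/2}$. The first ingredient is concentration of the blocks: standard spectral-norm bounds for random-graph adjacency matrices give $\norm{\Ap-\EAp}_2,\norm{\An-\EAn}_2 = O(\sqrt{pn})$ on an event of probability $1-\tfrac4n$, and a Chernoff bound unioned over the $n$ vertices and the two signs controls $\max_i|\Dp_{ii}-d^+|$ and $\max_i|\Dn_{ii}-d^-|$ on an event of probability $1-2n\exp(-pn/c_{\varepsilon})$; together these bound $\norm{M-\bar M}_2$ and $\norm{N-\bar N}_2$. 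The second ingredient is a deterministic perturbation estimate for $(M,N)\mapsto M^{-1/2}NM^{-1/2}$, where the regularization $\taup\Dp$ is essential: it forces $M,\bar M \succeq c\,\taup pn\, I$, so $\norm{M^{-1/2}}_2 = O((\taup pn)^{-1/2})$ and $x\mapsto x^{-1/2}$ is Lipschitz on the relevant range, giving $\norm{M^{-1/2}-\bar M^{-1/2}}_2 \lesssim \norm{M-\bar M}_2\,(\taup pn)^{-3/2}$. A telescoping estimate then bounds $\norm{T-\Tbar}_2$ by a sum of terms of the form $\norm{M^{-1/2}}_2^2\norm{N-\bar N}_2 + \norm{N}_2\norm{M^{-1/2}}_2\norm{M^{-1/2}-\bar M^{-1/2}}_2$, each of relative order $O(\sqrt{(\log n)/(pn)})$; choosing $p \ge c_1'\tfrac{\log n}{n}$ with $c_1'$ large enough in terms of $\varepsilon,\taup,\taun,\varepsilon_{\tau},\eta,\epsilon$ makes this at most $\epsilon\delta$. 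The principal difficulty is exactly this step: $T$ is a nonlinear function of the random matrices, so the analysis must control the smallest eigenvalue of $M$ and push the spectral deviations through the inverse square root without losing the scale $pn$ — which is precisely what the regularization and the density assumption $p \gtrsim (\log n)/n$ guarantee.
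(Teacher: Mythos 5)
Your proposal is correct and follows essentially the same route as the paper's proof: identify the common eigenbasis of the expected matrices and compute the three population eigenvalues of $\Tbar$ (with condition \eqref{eq:tau_cond_sponge_k2bot_k_short} being exactly $\taun d^- < \taup d^+$, forcing $\ones$ and $\infovec$ to the bottom), concentrate $A^{\pm}$ via a Bandeira--van Handel-type bound and $D^{\pm}$ via Chernoff, push the deviations through $M\mapsto M^{-1/2}$ using the lower bound $M\succeq c\,\taup pn\,I$ to control $\norm{T-\Tbar}_2$, and finish with Weyl and Davis--Kahan against the population spectral gap. The only substantive deviation is that you bound $\norm{M^{-1/2}-\bar M^{-1/2}}_2$ by a Lipschitz-type estimate of order $\norm{M-\bar M}_2(\taup pn)^{-3/2}$ where the paper uses operator monotonicity of the square root ($\norm{P^{1/2}-\bar P^{1/2}}_2\le\norm{P-\bar P}_2^{1/2}$, yielding an overall $(\log n/(np))^{1/4}$ rate); your version is slightly sharper and both suffice for the stated $p\gtrsim \log n/n$ condition, and the one small slip — attaching the $1-\tfrac{4}{n}$ tail to the adjacency bounds and the $1-2n\exp(-pn/c_{\varepsilon})$ tail to the degree bounds, rather than the reverse — is immaterial to the union bound.
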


\vspace{-4mm}
The theorem states that $\calR(V_2(T))$ is close to $\calR(V_2(\Tbar))$ with high probability provided $n,p$ are suitably large, and $\taun$ is sufficiently small compared to $\taup$. The latter condition is required to ensure that the smallest two eigenvectors of $\Tbar$ are $\set{\frac{1}{\sqrt{n}}\ones, w}$. Also note that the embedding generated by any orthonormal basis for $\calR(V_2(T))$ leads to the same clustering performance\footnote{For a $2 \times 2$ orthogonal matrix $O$, the rows of the matrix $V_2(T) O$ are obtained via the same orthogonal transformation on the corresponding rows of $V_2(T)$.}. Since the embedding corresponding to $V_2(\Tbar)$ leads to perfectly separated (ground truth) clusters, hence the closer $\calR(V_2(T))$ is to $\calR(V_2(\Tbar))$, the better is the clustering performance. Using standard tools, one can actually use bounds 
on subspace recovery to bound the misclustering rate of $k$-means (see for eg. \cite{QinRohe2013}).

\vspace{-3mm}
\paragraph{Proof sketch.} The proof is deferred to the appendix, but the main steps involved are as follows.
We first compute the spectra of $ \expec[\Ln],\expec[\Lp], \expec[\Dn]$, and $\expec[\Dp] $ by finding the eigenvalues and the 
corresponding \textit{relevant} eigenvectors (i.e., associated to 
the smallest two eigenvalues).
We then identify conditions on the parameters $\taup,\taun$ under which the smallest two eigenvectors of 
$\Tbar$ are $\set{\infovec, \frac{1}{\sqrt{n}}\ones}$.  As shown in the proof, $\infovec$ is always one of the smallest two eigenvectors (since $\taup,\taun > 0$). The condition \eqref{eq:tau_cond_sponge_k2bot_k_short} leads 
to $\frac{1}{\sqrt{n}}\ones \in \set{v_{n-1}(\Tbar),v_n(\Tbar)}$. 
%
%
%
Next, we derive concentration bounds  
using tools from random matrix theory for $A^-, A^+, D^-, D^+$ holding with high probability. This in turn leads to a bound on $|| T - \Tbar  ||_2$. 
%
Combining the above results and by controlling the perturbation term $|| T - \Tbar  ||_2$, we obtain via the Davis-Kahan theorem \cite{daviskahan}, a bound on 
%
$\norm{\sin( \Theta(\calR(V_2(T)), \calR(V_2(\Tbar))))}_2$ which equals $\norm{( I - V_2(T) V_2(T)^T) V_2(\bar{T})}_2$.
%
%
Here, $\Theta(\calR(V_2(T)), \calR(V_2(\Tbar)))$ is the diagonal matrix of canonical angles between $\calR(V_2(T))$ and $\calR(V_2(\Tbar))$.
%

\vspace{-2mm}
\paragraph{Selecting only $v_n(T)$.} Alternately, one could consider taking just the smallest eigenvector of $T$, i.e. $v_n(T)$, leading to a one-dimensional embedding. 
The following theorem states that, provided $\taun$ is sufficiently larger than $\taup$, and if $n,p$ are suitably large, then $\calR(v_n(T))$ is close to $\calR(\infovec)$ with high probability.
%
\begin{theorem} \label{thm:sponge_k_2_bot_k1_short}
For $\eta \in [0, 1/2)$ let $ \tau^+, \tau^- > 0$ satisfy 

\vspace{-7mm}
\begin{align} \label{eq:tau_cond_sponge_k2bot_k1_short}
\tau^- >  \left(\frac{\eta}{1-\eta}\right)   
\Big( \frac{ \frac{n}{2} -1 + \eta  }{ \frac{n}{2} - \eta } \Big) \tau^+.
\end{align}
\vspace{-5mm}

Then it holds that $v_{n}(\Tbar) = \infovec$ with $\infovec$ defined in \eqref{eq:k_2_inform_vect}. 

Moreover, assuming $n \geq 6 $, for given $0 < \varepsilon \leq 1/2,  \epsilon \in (0,1) $ and $\varepsilon_{\tau} \in (0,1)$ let 
$\tau^- \geq  \frac{1}{\varepsilon_{\tau}} \left(\frac{\eta}{1-\eta}\right)   
\Big( \frac{ \frac{n}{2} -1 + \eta  }{ \frac{n}{2} - \eta } \Big) \tau^+,$  

\vspace{-4mm}
$$p \geq c^{\prime}_2(\varepsilon, \tau^+, \tau^-,\varepsilon_{\tau}, \eta,\epsilon) \frac{\log n}{n},$$ 
\vspace{-6mm}

where $c^{\prime}_2(\varepsilon, \tau^+, \tau^-,\varepsilon_{\tau}, \eta,\epsilon) > 0$ 
depends only on the indicated parameters. 
%
%
Then there exists a constant $c_{\varepsilon} > 0$  depending only on $\varepsilon$ such that with probability at least 
$\Big( 1 - \frac{4}{n} - 2n \exp{ \big( \frac{ - p  n }{c_{\varepsilon}} \big) }  \Big),$ it holds that

\vspace{-4mm}
\begin{equation*}
	\norm{( I - v_n(T) v_n(T)^T) w}_2  \leq  \frac{ \epsilon }{ 1 - \epsilon}. 
\end{equation*}
\vspace{-4mm}

\end{theorem}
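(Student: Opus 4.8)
The plan is to mirror the two-phase structure of the proof of Theorem~\ref{thm:sponge_k_2_bot_k_short}: first carry out an \emph{exact} spectral analysis of the population matrix $\Tbar$, and then transfer the conclusion to the sampled matrix $T$ via a spectral-norm perturbation bound together with the Davis--Kahan theorem. The only structural difference is that here we track the single smallest eigenvector $v_n(\cdot)$ rather than the bottom-two invariant subspace, so the eigengap that matters is $\gamma := \lambda_{n-1}(\Tbar) - \lambda_n(\Tbar)$, i.e.\ the separation between the $\infovec$-eigenvalue and the smaller of the $\ones$-eigenvalue and the complement-eigenvalue.

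First I would compute the spectrum of $\Tbar$ in closed form. Under the SSBM with $k=2$ equal clusters the expected degree matrices are scalar, $\EDp = p(\tfrac n2 - 1 + \eta)I$ and $\EDn = p(\tfrac n2 - \eta)I$, while $\EAp$ and $\EAn$ are block-constant and hence simultaneously diagonalizable in the common eigenbasis $\set{\tfrac{1}{\sqrt n}\ones,\ \infovec,\ \text{complement}}$. Therefore $\ELp + \taun\EDn$ and $\ELn + \taup\EDp$ share this eigenbasis, and $\Tbar$ has exactly three distinct eigenvalues, each a ratio of the corresponding numerator and denominator eigenvalues. A direct calculation (with $\ELp\ones=\ELn\ones=0$) shows that the $\infovec$-eigenvalue is always strictly below the complement-eigenvalue for $\eta<1/2$, so $\infovec$ is always among the smallest two; comparing the $\infovec$-eigenvalue with the $\ones$-eigenvalue, the mixed $\taup\taun$ terms cancel and the strict inequality reduces \emph{exactly} to~\eqref{eq:tau_cond_sponge_k2bot_k1_short}, giving $v_n(\Tbar)=\infovec$. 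The strengthened hypothesis $\taun \ge \tfrac{1}{\varepsilon_\tau}(\cdots)\taup$ with $\varepsilon_\tau\in(0,1)$ pushes $\taun$ strictly past the threshold and hence yields a quantitative lower bound on $\gamma$ in terms of $\varepsilon,\taup,\taun,\varepsilon_\tau,\eta$.

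Next I would reuse the random-matrix concentration estimates from the proof of Theorem~\ref{thm:sponge_k_2_bot_k_short}: for $p \gtrsim \log n / n$ the matrices $A^+,A^-$ concentrate in spectral norm around $\EAp,\EAn$ with $\norm{A^+-\EAp}_2 = O(\sqrt{pn})$ (and similarly for $A^-$) by matrix Bernstein, and the degrees $D^+,D^-$ concentrate around their means by scalar Bernstein. Since $\taup\EDp$ contributes a positive definite shift of order $pn$, both $\Ln+\taup\Dp$ and its expectation are well conditioned with smallest eigenvalue $\Omega(\taup pn)$ on the relevant event; this lets me control the perturbation of $(\Ln+\taup\Dp)^{-1/2}$ through the Lipschitz behaviour of $M\mapsto M^{-1/2}$ on a spectral interval bounded away from $0$. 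Propagating these bounds through the product yields $\norm{T-\Tbar}_2$ small relative to $\gamma$, with probability at least $1 - \tfrac4n - 2n\exp(-pn/c_\varepsilon)$, exactly as in Theorem~\ref{thm:sponge_k_2_bot_k_short}.

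Finally I would apply the Davis--Kahan $\sin\Theta$ theorem to the one-dimensional bottom eigenspaces: since $\calR(v_n(\Tbar)) = \calR(\infovec)$ is separated from the rest of the spectrum of $\Tbar$ by $\gamma$, we obtain $\norm{(I - v_n(T)v_n(T)^T)\infovec}_2 \le \norm{T-\Tbar}_2/\gamma$. It then remains to choose $c'_2(\varepsilon,\taup,\taun,\varepsilon_\tau,\eta,\epsilon)$ large enough in the lower bound on $p$ so that $\norm{T-\Tbar}_2/\gamma \le \epsilon/(1-\epsilon)$ on the high-probability event, which gives the claim. I expect the main obstacle to be the same as in Theorem~\ref{thm:sponge_k_2_bot_k_short}: establishing the bound on $\norm{T-\Tbar}_2$, and in particular propagating the concentration of $A^\pm, D^\pm$ through the inverse square root while tracking the dependence on $\taup,\taun,\eta$ precisely enough that the final ratio can be driven below $\epsilon/(1-\epsilon)$ for the stated $p$. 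By contrast, the population eigenvalue ordering and the Davis--Kahan step are routine once the three eigenvalues are computed.
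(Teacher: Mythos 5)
Your proposal is correct and follows essentially the same route as the paper: an exact three-eigenvalue computation for $\Tbar$ showing the threshold \eqref{eq:tau_cond_sponge_k2bot_k1_short} is precisely the condition $\Lambda^{(2)}_{\Tbar} < \Lambda^{(1)}_{\Tbar}$ (with $\Lambda^{(2)}_{\Tbar} < \Lambda^{(3)}_{\Tbar}$ automatic for $\eta<1/2$), a quantitative gap bound from the $\varepsilon_\tau$-strengthened hypothesis, reuse of the concentration and $\norm{T-\Tbar}_2$ perturbation lemmas from Theorem~\ref{thm:sponge_k_2_bot_k_short}, and Weyl plus Davis--Kahan on the one-dimensional bottom eigenspace. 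The only deviations are cosmetic choices of tools (matrix Bernstein in place of the Bandeira--van Handel bound, and a Lipschitz estimate for $M\mapsto M^{-1/2}$ in place of operator monotonicity of the square root), which do not change the argument.
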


\vspace{-3mm}
The proof is deferred to the appendix, being similar to that of Theorem \ref{thm:sponge_k_2_bot_k_short}. The main difference is in the conditions on $\taun,\taup$ for ensuring that the smallest (two) eigenvector(s) of $\Tbar$ correspond to the ground truth clustering; these are clearly weaker in Theorem \ref{thm:sponge_k_2_bot_k1_short}
compared to Theorem \ref{thm:sponge_k_2_bot_k_short}. For eg. if $\eta = 0$, 
then any $\taup,\taun > 0$   
imply $v_n(\Tbar) = \infovec$ 
by Theorem \ref{thm:sponge_k_2_bot_k1_short}, while the analogous statement is not true in Theorem \ref{thm:sponge_k_2_bot_k_short}.


Figure  \ref{fig:eigengap_SPONGES_LbarSym} (left) compares the $40$ smallest eigenvalues of \textsc{SPONGE}, \textsc{SPONGE}$_{sym}$, and $\bar{L}_{sym}$ in the scenario $n=10,000$, $p=0.01$, $\eta=0.1$, and $k=10$. \textsc{SPONGE}$_{sym}$ clearly exhibits the largest spectral gap between the $9^{th}$ and $10^{th}$ eigenvalue.
Figure \ref{fig:eigengap_SPONGES_LbarSym} (right) also compares the spectral densities of \textsc{SPONGE}$_{sym}$ for several $ \eta \in \{ 0, 0.1, 0.2 \}$. As expected, the spectral gap decreases as the noise level increases. 
%
\begin{figure}[!htp]
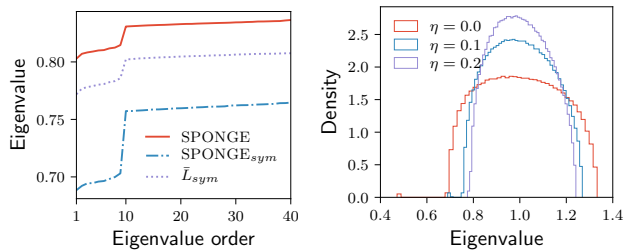
  
\small
\captionsetup[subfigure]{labelformat=empty}
\captionsetup[subfigure]{skip=-3pt}
\hspace{-4mm}
\begin{centering}
\subcaptionbox[Short Subcaption]{ 
}[ 0.50\columnwidth ]
{\includegraphics[width=0.55\columnwidth] {{{Figures/Eigenvalue_Plots/compare_eigengaps}}} }
\subcaptionbox[Short Subcaption]{  
}[ 0.48\columnwidth ]
{\includegraphics[width=0.56\columnwidth] {{{Figures/Eigenvalue_Plots/eigenspectrum_SPONGEsym}}} }
 \end{centering}
\vspace{-5mm} 
\captionsetup{width=1.05\linewidth}
\caption[Short Caption]{\footnotesize \hspace{-2mm} Left: 
Bottom spectrum 
of  \textsc{SPONGE}$_{sym}$,  \textsc{SPONGE},  
and $\bar{L}_{sym}$. Right: Spectrum of \textsc{SPONGE}$_{sym}$ for three values of noise $\eta$,  
($n = 10000$, $p=0.01$, and $k=10$).}
\label{fig:eigengap_SPONGES_LbarSym}
\end{figure}
%
%
Figure \ref{fig:SPONGES_k_km1} compares heatmaps of recovery rates for 
\textsc{SPONGE}  and  \textsc{SPONGE}$_{sym}$ for 
$k=2$ clusters and varying $\taup,\taun > 0$. Observe that \eqref{eq:tau_cond_sponge_k2bot_k1_short} shows up when we consider only the smallest eigenvector for \textsc{SPONGE}. 
%
%
Figure  \ref{fig:tauHeatmaps} shows similar plots for $k=8$, where we observe that 
\textsc{SPONGE}$_{sym}$ allows for a wider choice of $\taup,\taun > 0$ for successful clustering. 
%

At a high level, our proof technique, using tools from 
matrix perturbation and random matrices,  
has been used before for analyzing 
spectral methods for clustering unsigned graphs \cite{QinRohe2013}. 
In the sparse regime where $p \rightarrow 0$ as $n \rightarrow \infty$, Theorems \ref{thm:sponge_k_2_bot_k_short}, 
\ref{thm:sponge_k_2_bot_k1_short} state that $p \gtrsim \frac{\log n}{n}$ ensures that 
the success probability tends to  one. Similar scalings are known for 
unsigned graphs, however there, the intra-cluster and inter-cluster edge probabilities necessarily 
must be different (see Remark \ref{rem:ssbm}). 

\begin{figure}[!htp]
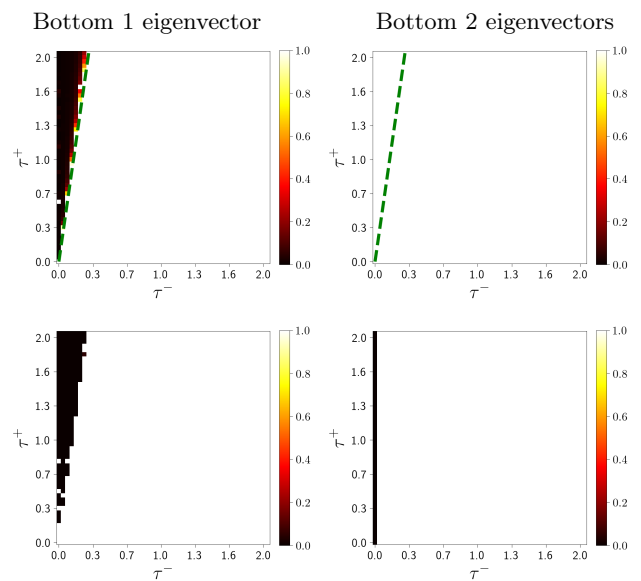

\small
\begin{centering}
\captionsetup[subfigure]{labelformat=empty}
\captionsetup[subfigure]{skip=-7pt}
\hspace{-10mm}
\subcaptionbox{\hspace{4mm} Bottom $1$ eigenvector}[ 0.49\columnwidth ]{}	
\subcaptionbox{ \hspace{4mm}  Bottom $2$ eigenvectors}[ 0.49\columnwidth ]{}
\subcaptionbox{  }[ 0.49\columnwidth ]{\includegraphics[width=0.5\columnwidth]{{{Figures/New_tau_figures/SPONGE_km1_n5000_p0.012_k2_eta0.125_grid50}}}}	
\subcaptionbox{ }[ 0.49\columnwidth ]{\includegraphics[width=0.5\columnwidth]{{{Figures/New_tau_figures/SPONGE_k_n5000_p0.012_k2_eta0.125_grid50}}}}	
 \vspace{-1mm}
\subcaptionbox{}[ 0.49\columnwidth ]{\includegraphics[width=0.5\columnwidth]{{{Figures/New_tau_figures/SPONGE_sym_km1_n5000_p0.012_k2_eta0.125_grid50}}}}	
\subcaptionbox{}[ 0.49\columnwidth ]{\includegraphics[width=0.5\columnwidth]{{{Figures/New_tau_figures/SPONGE_sym_k_n5000_p0.012_k2_eta0.125_grid50}}}}
\end{centering}
\captionsetup{width=1.05\linewidth}
\vspace{-2mm}
\caption{\footnotesize Heatmap of recovery rates for $k=2$ clusters for \textsc{SPONGE} (top) and  \textsc{SPONGE}$_{sym}$ (bottom), with $n=5000, p=0.012$ and $\eta=0.125$, 
via the bottom one or two eigenvectors. 
The green dotted line is the condition \eqref{eq:tau_cond_sponge_k2bot_k1_short}.
}
\label{fig:SPONGES_k_km1}
\end{figure}
\vspace{-3mm}
\begin{figure}
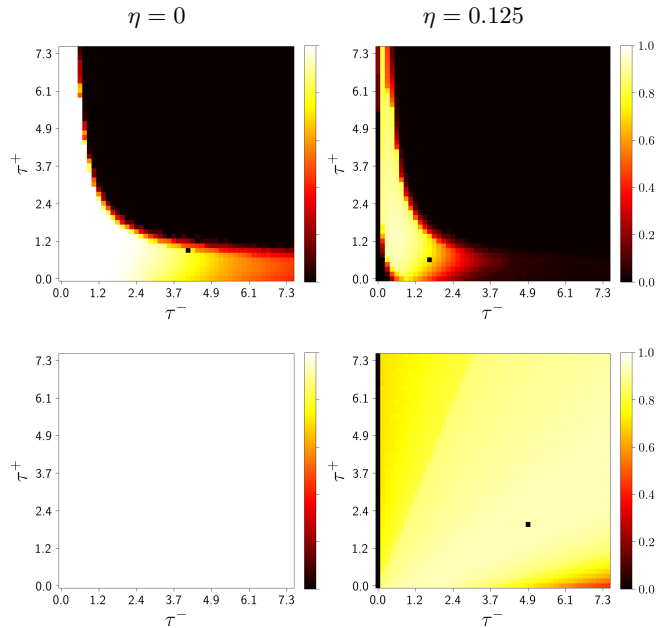

\small 
\begin{centering}
\captionsetup[subfigure]{labelformat=empty}
\captionsetup[subfigure]{skip=-7pt}
\subcaptionbox{  $ \eta=0$ }[ 0.49\columnwidth ]{}	
\subcaptionbox{  $ \eta=0.125$}[ 0.49\columnwidth ]{}
\subcaptionbox{ }[ 0.49\columnwidth ]{\includegraphics[width=0.55\columnwidth]{{{Figures/New_tau_figures/SPONGE_km1_n5000_p0.012_k8_eta0.0_grid50}}}}	
\subcaptionbox{   }[ 0.49\columnwidth ]{\includegraphics[width=0.55\columnwidth]{{{Figures/New_tau_figures/SPONGE_km1_n5000_p0.012_k8_eta0.125_grid50}}}}
 \vspace{-1mm}
\subcaptionbox{  }[ 0.49\columnwidth ]{\includegraphics[width=0.55\columnwidth]{{{Figures/New_tau_figures/SPONGE_sym_km1_n5000_p0.012_k8_eta0.0_grid50}}}}	
\subcaptionbox{ }[ 0.49\columnwidth ]{\includegraphics[width=0.55\columnwidth]{{{Figures/New_tau_figures/SPONGE_sym_km1_n5000_p0.012_k8_eta0.125_grid50}}}}
\end{centering}
\captionsetup{width=1.05\linewidth}
\vspace{-2mm}
\caption{\footnotesize Heatmap of recovery rates for \textsc{SPONGE (top)}  and  \textsc{SPONGE}$_{sym}$ (bottom), using the bottom $k-1$ eigenvectors, as we vary $\tau^+,\tau^-$, 
with $n=5000$, 
$p=0.012$, $k=8$ and $\eta \in \{ 0, 0.125 \}$.} 
\label{fig:tauHeatmaps}
\end{figure}
\vspace{-3mm}









\vspace{1mm}
\section{Theoretical analysis of the Signed Laplacian $\bar{L}$ under SSBM} \label{sec:Signed_Laplacian_theory}
\vspace{-1mm}

In this section, we theoretically analyze the popular Signed Laplacian based method of 
Kunegis et al. \cite{kunegis2010spectral} for clustering signed graphs under the SSBM. 
This method is particularly appealing due to its simplicity, but to our knowledge, there do not exist 
any theoretical guarantees on the performance of this approach. We fill this gap by providing a detailed 
analysis for the $k = 2$ case. This choice is for ease of exposition, but the proof outline clearly 
extends\footnote{This is part of work currently in progress.} to any $k \geq 2$.

Recall that for a signed graph $G$ with adjacency matrix $A \in \set{0,\pm 1}^{n \times n}$, and with the 
diagonal matrix $\Dbar$ consisting of the degree terms defined as $\Dbar_{ii} := \sum_{j=1}^n \abs{A_{ij}}$, 
the Signed Laplacian of $G$, denoted by $\Lbar \in \matR^{n \times n}$, is given by $\Lbar = \Dbar - A$.
Kunegis et al. \cite{kunegis2010spectral} showed that $\Lbar$ is positive semi-definite for any 
graph (see \cite[Theorem 4.1]{kunegis2010spectral}). Moreover, they also showed that $\Lbar$ is positive 
definite iff the graph is unbalanced \cite[Theorem 4.4]{kunegis2010spectral}. 
Kunegis et al. proposed using $\Lbar$ to first compute a lower dimensional embedding of the 
graph -- obtained from the smallest $k$ eigenvectors of $\Lbar$ (in fact, as we will see, taking only $k-1$ is sufficient and more effective in signed graphs) and then clustering the obtained 
points in $\matR^k$ (or $\matR^{k-1}$) using any standard clustering method (e.g. $k$-means).

Our main result for the Signed Laplacian based clustering 
approach of Kunegis et al. \cite{kunegis2010spectral} is stated below, 
and the proof is deferred to the appendix.
%
\begin{theorem} \label{thm:k_2_signed_laplacian}
Assuming $0 \leq \eta < 1/2$, it holds that $v_n(\ELbar) = w$, where $\infovec$ is defined in \eqref{eq:k_2_inform_vect}. 
Moreover, let $n \geq 2$ and for given $0 < \epsilon < 1$, $0 < \varepsilon \leq 1/2$ let 

\vspace{-5mm}
$$p \geq \frac{4((1+\varepsilon)2\sqrt{2} + 1)^2}{\epsilon^2(1-2\eta)^2} \frac{\log n}{n}.$$
\vspace{-5mm}

Then there exists a constant $c_{\varepsilon}  > 0$ depending only on $\varepsilon$ such that with probability at least $1 - \frac{2}{n} - n \exp(- \frac{pn}{4 c_{\varepsilon}})$ it holds that $\norm{(I - \infovec \infovec^T) v_n(\Lbar)}_2 \leq \frac{\epsilon}{1-\epsilon}$. 
\end{theorem}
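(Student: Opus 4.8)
The plan is to follow the perturbation-theoretic template used for \textsc{SPONGE}: diagonalise the expected Signed Laplacian $\ELbar$ exactly, control the deviation $\norm{\Lbar - \ELbar}_2$ with high probability, and then transfer closeness of the matrices to closeness of their bottom eigenvectors via Davis--Kahan.

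First I would compute $\ELbar = \EDbar - \EA$ in closed form. Setting $\alpha = p(1-2\eta) \ge 0$, an entrywise calculation gives $\EDbar = (n-1)p\,I$ and $\EA = \alpha\,(n\,\infovec\infovec^T - I)$, because the off-diagonal means are $+\alpha$ within a cluster and $-\alpha$ across clusters while the diagonal vanishes. Hence
\begin{equation*}
\ELbar = \big[(n-1)p + \alpha\big]\,I - \alpha n\,\infovec\infovec^T,
\end{equation*}
a rank-one update of a scalar matrix. Its spectrum consists of $2p\eta(n-1)$ with eigenvector $\infovec$ and $p(n-2\eta)$ on $\infovec^{\perp}$ (multiplicity $n-1$). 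Since $2\eta(n-1) < n - 2\eta$ exactly when $\eta < 1/2$, the former is strictly smallest, so $v_n(\ELbar) = \infovec$, which proves the first claim; moreover the bottom spectral gap is $\delta := p(n-2\eta) - 2p\eta(n-1) = pn(1-2\eta) > 0$.

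Next I would bound the perturbation through $\norm{\Lbar - \ELbar}_2 \le \norm{A - \EA}_2 + \norm{\Dbar - \EDbar}_2$, splitting $\Lbar - \ELbar$ into its centred off-diagonal (adjacency) part and its diagonal (degree-fluctuation) part. For the adjacency term I would apply a spectral-norm concentration bound for symmetric matrices with independent, mean-zero, bounded entries of variance at most $p$, giving $\norm{A - \EA}_2 \le (1+\varepsilon)\,2\sqrt{2}\,\sqrt{pn}$ on an event of probability at least $1 - \tfrac{2}{n}$ once $pn \gtrsim \log n$. For the diagonal term, each $\Dbar_{ii}$ is a sum of $n-1$ independent Bernoulli$(p)$ variables, so a Bernstein bound together with a union bound over the $n$ coordinates controls $\norm{\Dbar - \EDbar}_2 = \max_i \abs{\Dbar_{ii} - (n-1)p}$ on an event of probability at least $1 - n\exp(-pn/(4 c_\varepsilon))$, contributing the additive ``$+1$'' to the constant $(1+\varepsilon)2\sqrt2 + 1$. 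The hypothesis $p \ge \frac{4((1+\varepsilon)2\sqrt2 + 1)^2}{\epsilon^2(1-2\eta)^2}\,\frac{\log n}{n}$ is then precisely what forces the resulting bound $\big((1+\varepsilon)2\sqrt2 + 1\big)\sqrt{pn}$ below $\epsilon\,\delta = \epsilon\,pn(1-2\eta)$.

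Finally, since $v_n(\ELbar) = \infovec$ is separated from the remaining spectrum of $\ELbar$ by the gap $\delta$, the Davis--Kahan $\sin\Theta$ theorem applied to the bottom eigenvector yields
\begin{equation*}
\norm{(I - \infovec\infovec^T)\,v_n(\Lbar)}_2 \le \frac{\norm{\Lbar - \ELbar}_2}{\delta - \norm{\Lbar - \ELbar}_2} \le \frac{\epsilon\,\delta}{\delta - \epsilon\,\delta} = \frac{\epsilon}{1-\epsilon},
\end{equation*}
which is the claim. I expect the main obstacle to be the concentration step: both the degree fluctuations and the adjacency spectral norm must be driven below a fixed fraction of the gap $\delta = pn(1-2\eta)$, and making the constants match the stated threshold on $p$ and the stated failure probability requires careful bookkeeping. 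The factor $(1-2\eta)$ in $\delta$, which vanishes as $\eta \to 1/2$, is exactly what ties the admissible sparsity to the noise level and reflects the breakdown of identifiability at $\eta = 1/2$.
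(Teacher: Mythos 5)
Your plan follows the paper's proof essentially verbatim: the same rank-one diagonalisation of $\ELbar$ (bottom eigenvalue $2\eta(n-1)p$ with eigenvector $\infovec$, the rest equal to $(n-2\eta)p$, gap $pn(1-2\eta)$), the same split $\norm{\Lbar-\ELbar}_2\le\norm{A-\EA}_2+\norm{\Dbar-\EDbar}_2$ handled by matrix concentration plus Chernoff, and the same Davis--Kahan finish. The only slip is in the bookkeeping you flag yourself: the union bound over the $n$ degrees forces $\norm{\Dbar-\EDbar}_2\lesssim\sqrt{pn\log n}$ rather than $O(\sqrt{pn})$, which is exactly where the $\log n$ in the threshold on $p$ and the paper's overall bound $2((1+\varepsilon)2\sqrt2+1)\sqrt{np\log n}$ come from; moreover the ``$+1$'' in the constant and the failure probability $n\exp(-pn/(4c_\varepsilon))$ belong to the adjacency concentration (the additive $t=\sqrt{np}$ term), while the $2/n$ belongs to the degree term, i.e.\ you have the two events' roles swapped.
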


\vspace{-2mm}
Theorem \ref{thm:k_2_signed_laplacian} states that for $n, p$ suitably large, $\calR(v_n(\Lbar)) \approx \calR(w)$ with high probability. 
In particular, if $\eta$ is bounded away from $1/2$, then in the sparse regime where $p \rightarrow 0$ as $n \rightarrow \infty$, the success probability approaches one if $p \gtrsim \frac{\log n}{n}$. 
As seen in the proof, $\ELbar$ is positive definite if $\eta \neq 0$, and positive semi-definite otherwise. This makes sense since for $\eta = 0$, the generated graph (under the SSBM) is balanced by construction and thus is positive semi-definite \cite[Theorem 4.4]{kunegis2010spectral}. The fact that $\ELbar$ is positive definite 
for $\eta \neq 0$ tells us that the resulting graph will be unbalanced with high probability. Finally, we note that as $\eta$ approaches $1/2$, the condition on $p$ becomes stricter since the expected number of intra-cluster positive edges is almost the same as the number of inter-cluster positive edges (similarly for negative edges). Hence, to get a non-trivial lower bound on $p$, we require $n$ to be sufficiently large.

\vspace{-4mm}
\section{Numerical experiments}  \label{sec:numericalExperiments}

%

\vspace{-3mm}
This section contains numerical experiments comparing our   
\textsc{SPONGE} and \textsc{SPONGE}$_{sym}$ algorithms\footnote{Our current Python implementations are available at \url{https://github.com/alan-turing-institute/signet}} (setting $\taup = \taun = 1$), with several existing spectral signed clustering techniques based on: 
the adjacency matrix $A$, the Signed Laplacian matrix $\bar L$, its symmetrically normalized version $\bar L_{sym}$ \cite{kunegis2010spectral}, and the two algorithms introduced in \cite{DhillonBalNormCut} that optimize the  Balanced Ratio Cut and the Balanced Normalized Cut objectives.
In all cases, the bottom $k-1$ (or top $k-1$ in the case of adjacency matrix $A$) eigenvectors of the relevant matrix or generalized eigenvalue problem are considered as an embedding, and kmeans$++$ is applied to obtain a $k$-clustering.
%
%
%
%
Section \ref{subsec:experiments_1} contains numerical experiments on synthetic graphs generated under the SSBM, while  Section \ref{subsec:experiments_2} details the results obtained on four different real-world data sets. 
Additional experiments are available in the appendix.


\vspace{-3mm}
\subsection{Signed stochastic block model }
\label{subsec:experiments_1}
\vspace{-3mm}

This section compares all algorithms on a variety of synthetic graphs generated from the SSBM.
%
%
Since the ground truth partition is available, we measure accuracy by the Adjusted Rand Index (ARI) \cite{ARI_JMLR_Gates_Ahn},  
            an improved version of the 
popular  Rand Index  \cite{rand1971}. Both measures indicate how well the recovered partition matches 
   ground truth, 
with a value close to 1, resp. 0, indicating an almost perfect recovery, resp. an almost random assignment of the nodes into clusters.


\begin{figure} 
\small
\captionsetup[subfigure]{skip=1pt}
	\begin{centering}
	\captionsetup{skip=5pt} 
		\subcaptionbox[]{ $ k=2, p=0.001 $ }[ 0.48\columnwidth ]{\includegraphics[width=0.54\columnwidth]{{{Figures/SSBM_graphs/eta_recovery_n10000_p0.001_k2_Suniform}}}}	
		\subcaptionbox[]{ $ k=5, p=0.001 $ }[ 0.48\columnwidth ]{\includegraphics[width=0.54\columnwidth]{{{Figures/SSBM_graphs/eta_recovery_n10000_p0.001_k5_Suniform}}}}
		\subcaptionbox[]{ $ k=20, p=0.01 $ }[ 0.49\columnwidth ]{\includegraphics[width=0.54\columnwidth]{{{Figures/SSBM_graphs/eta_recovery_n10000_p0.01_k20_Suniform}}}}
		\subcaptionbox[]{ $ k=50, p=0.1 $ }[ 0.49\columnwidth ]{\includegraphics[width=0.54\columnwidth]{{{Figures/SSBM_graphs/eta_recovery_n10000_p0.1_k50_Suniform}}}}
	\end{centering}
	\vspace{0mm}
	\captionsetup{width=0.99\linewidth}
	\caption{\footnotesize ARI recovery scores  
	versus  $\eta$ for increasing $k$, with communities of equal size and $n=10000$. }
	\label{fig:SSBMa}
\end{figure}
\vspace{-1mm}

The SSBM considered here and introduced in Section  \ref{subsec:SSBM_model} has four parameters: $n$, $k$, $p$ and $\eta$. 
In our experiments, we fix $n = 10000$, and let $k \in \{ 2, 5, 10, 20, 50 \}$ with clusters chosen of equal size $n/k$.
%
We analyze the performance of all algorithms by plotting mean and standard error, over 20 repetitions, of the ARI as a function of $\eta$ for $p \in \{ 0.001, 0.01, 0.1 \}$.
%
The results are reported in Figure \ref{fig:SSBMa}. 
When 
$k=2$ (Figure \ref{fig:SSBMa} (a)), $\Lbar_{sym}$ performs slightly better than all other algorithms. 
As $k$ increases, the \textsc{SPONGE} algorithms 
start to significantly outperform all 
other methods. 
In particular, while for intermediate values of $k$ (Figure \ref{fig:SSBMa} (b)) \textsc{SPONGE}  was  
the best performer, once $k=20$ or $k=50$ (Figure \ref{fig:SSBMa} (c) and (d))  \textsc{SPONGE}$_{sym}$  was 
greatly superior, being able to perfectly recover the cluster structure ($\text{ARI} = 1$) when all other methods 
completely fail 
($\text{ARI} \approx 0$). 
%
%
We remark that similar results, showing excellent recovery for large  $k$ via \textsc{SPONGE}$_{sym}$, hold true over a wider range of values of the sparsity $p$, and are reported in the appendix.

\vspace{-1mm}
\begin{figure}
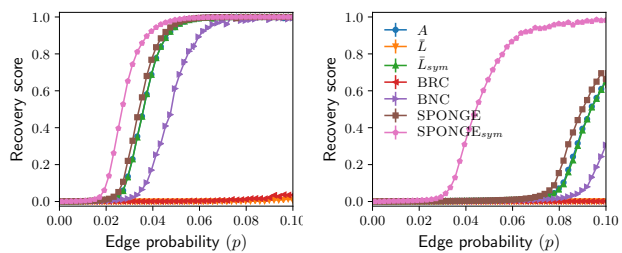
  
\captionsetup[subfigure]{skip=0pt}
	\begin{centering}
		\subcaptionbox{ $k=20, \eta=0.2$ }[ 0.49\columnwidth ]{\includegraphics[width=0.54\columnwidth]{{{Figures/SSBM_graphs/p_recovery_n10000_eta0.2_k20_Suniform}}}}
		\subcaptionbox{  $k=50, \eta=0.1$ }[ 0.49\columnwidth ]{\includegraphics[width=0.54\columnwidth]{{{Figures/SSBM_graphs/p_recovery_n10000_eta0.1_k50_Suniform}}}}
	\par\end{centering}
	\captionsetup{width=0.99\linewidth}
	\caption{\footnotesize ARI recovery scores as a function of the edge probability $p$, for $k=20$ and $k=50$ at two different noise levels. The communities are of equal size, and $n=10000$.}
\label{fig:SSBMb}	
\end{figure}
\vspace{-1mm}

\smallskip
We  
also tested the algorithms on SSBM graphs with clusters of unequal sizes, with the probability of each node to be part of a given cluster being uniformly sampled  
        in $[0,1]$, 
and subsequently normalized, which typically lead to widely different cluster sizes.  
%
%
Under this setting (see Figure \ref{fig:SSBMc}),
\textsc{SPONGE}$_{sym}$ was still 
the best performer, 
although the extent of the performance gap was 
less pronounced. 
Interestingly, the performance of \textsc{BNC} often matched (but rarely overcame) that of  \textsc{SPONGE}$_{sym}$.
Overall, we find that  
for large enough $k \geq 5$, 
SPONGE and especially SPONGE$_{sym}$, outperform all state-of-art algorithms across a broad range of values for $p,\eta$, and for 
$n$ sufficiently large for a clustering to be recoverable. 


\vspace{-2mm}
\begin{figure}
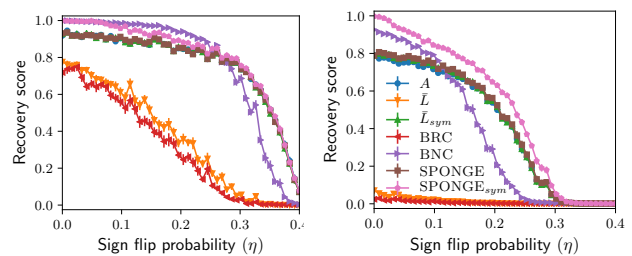
  
\captionsetup[subfigure]{skip=0pt}
	\begin{centering}
		\subcaptionbox{ $k=20, p=0.1$ }[ 0.49\columnwidth ]{\includegraphics[width=0.54\columnwidth]{{{Figures/SSBM_graphs/eta_recovery_n10000_p0.1_k20_Suneven}}}}
		\subcaptionbox{ $k=50, p=0.1$}[ 0.49\columnwidth ]{\includegraphics[width=0.55\columnwidth]{{{Figures/SSBM_graphs/eta_recovery_n10000_p0.1_k50_Suneven}}}}
	\end{centering}
	\captionsetup{width=0.99\linewidth}
	\caption{\footnotesize ARI recovery scores of all algorithms, as a function of the noise $\eta$, for $ k \in \{20,50 \}$ clusters of randomly chosen sizes, and fixed edge density $p=0.1$. 
}
\label{fig:SSBMc}	
\end{figure}
\vspace{-2mm}

\subsection{Real data}
\label{subsec:experiments_2}
\vspace{-2mm}
This section details the outcomes of experiments on a variety of real-world signed network data sets. Due to space constraints, we show results for the four algorithms that performed best on the synthetic experiments: \textsc{SPONGE}, \textsc{SPONGE}$_{sym}$,  \textsc{BNC} and $\bar{L}_{sym}$.
Since we no longer have ground truth, 
we compare the output of the algorithms by plotting the network adjacency matrix sorted by membership of the clusters produced. For our time series data applications, we also demonstrate visually that our algorithms have recovered meaningful information in their clusterings.

\vspace{-2mm}
\paragraph{Wikipedia elections.} We 
consider the classic data set of Wikipedia Requests for Adminship  \cite{west2014exploiting}  
from  
SNAP \cite{snapnets}; a network of positive, neutral, and negative votes between Wikipedia editors running in adminship elections.  
%
We construct a signed, undirected, weighted graph 
using the sums of edge weights for each pair of nodes. We then discard $0$-weighted edges and consider only the largest connected component of the resulting graph.  
Thus, we obtain a graph on $n=11,259$ nodes with $132,412$ (resp. $37,423$) positive (resp. negative) edges.  
Figure \ref{fig:wiki} shows the resulting adjacency matrix sorted by cluster membership 
   with  $k=6$, 
where blue (resp. red) denotes positive (resp. negative)  edges. 
Previous work on signed networks 
\cite{Mercado_2016_Geometric},   
also succeeded in finding clustering structure in this data. However, the majority of the nodes are placed in a single large cluster which is very sparse and does not exhibit discernible associations. A major advantage of the clustering 
in Figure \ref{fig:wiki} is that all clusters demonstrate a significantly higher ratio of positive to negative internal edges, compared to that of the graph as a whole. 

\vspace{-1mm}
\begin{figure}[!htp]
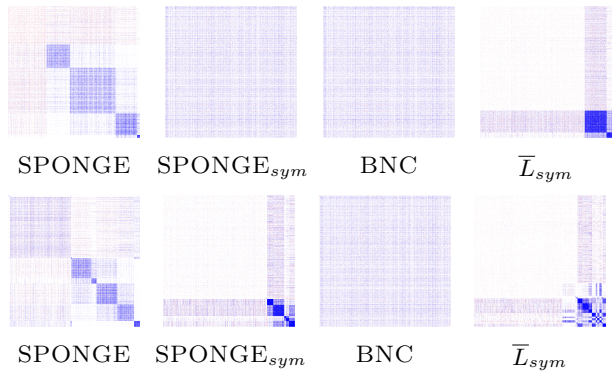

\captionsetup[subfigure]{labelformat=empty}
\begin{centering}
\subcaptionbox{\textsc{SPONGE} }{\includegraphics[width=0.24\columnwidth]{{{Figures/real_data_graphs_low_k/WIKIadjGEadd_k6}}}}	
\subcaptionbox{\textsc{SPONGE}$_{sym}$ }{\includegraphics[width=0.24\columnwidth]{{{Figures/real_data_graphs_low_k/WIKIadjGEmul_k6}}}}	
\subcaptionbox{\textsc{BNC} }{\includegraphics[width=0.24\columnwidth]{{{Figures/real_data_graphs_low_k/WIKIadjBNC_k6}}}}	
\subcaptionbox{$\bar{L}_{sym}$ }{\includegraphics[width=0.24\columnwidth]{{{Figures/real_data_graphs_low_k/WIKIadjLap_k6}}}}
\end{centering}
\begin{centering}
\subcaptionbox{\textsc{SPONGE} }{\includegraphics[width=0.24\columnwidth]{{{Figures/real_data_graphs_high_k/WIKIadjGEadd_k50}}}}	
\subcaptionbox{\textsc{SPONGE}$_{sym}$ }{\includegraphics[width=0.24\columnwidth]{{{Figures/real_data_graphs_high_k/WIKIadjGEmul_k50}}}}	
\subcaptionbox{\textsc{BNC} }{\includegraphics[width=0.24\columnwidth]{{{Figures/real_data_graphs_high_k/WIKIadjBNC_k50}}}}	
\subcaptionbox{$\bar{L}_{sym}$ }{\includegraphics[width=0.24\columnwidth]{{{Figures/real_data_graphs_high_k/WIKIadjLbar_k50}}}}	
\end{centering}
\captionsetup{width=0.99\linewidth}
\caption{ Sorted adjacency matrix of the Wikipedia graph
for $k=6$ (top row) and $k=50$ (bottom row).
}
\label{fig:wiki}
\end{figure}
\vspace{-1mm}

\vspace{-2mm}
\paragraph{Correlations of financial market returns.}
We consider 
daily prices for $n=1500$ stocks in the S\&P 
1500 Index, during 2003-2015, and build correlation matrices from market excess returns.  
We refer the reader to the appendix, for a detailed overview of our steps. 
Figure \ref{fig:SP1500_k_10_30} shows that, for $k \in \{ 10, 30 \}$, we are able to find a meaningful segmentation of the market. In Figure \ref{fig:spsec}, we interpret  our results in light of the popular GICS sector decomposition \cite{GICS_citation}.  

\vspace{-1mm}
\begin{figure}[!htp]
\captionsetup[subfigure]{labelformat=empty}
\begin{centering}
\subcaptionbox{\textsc{SPONGE}
}{\includegraphics[width=0.24\columnwidth]{{{Figures/real_data_graphs_low_k/SP1500adjGEadd_k10}}}}	
\subcaptionbox{ \small  \textsc{SPONGE}$_{sym}$ 
}{\includegraphics[width=0.24\columnwidth]{{{Figures/real_data_graphs_low_k/SP1500adjGEmul_k10}}}}	
\subcaptionbox{\textsc{BNC}  
}{\includegraphics[width=0.24\columnwidth]{{{Figures/real_data_graphs_low_k/SP1500adjBNC_k10}}}}	
\subcaptionbox{$\bar{L}_{sym}$ 
}{\includegraphics[width=0.24\columnwidth]{{{Figures/real_data_graphs_low_k/SP1500adjLbar_k10}}}}	
\end{centering}
\begin{centering}
\subcaptionbox{\textsc{SPONGE}}{\includegraphics[width=0.24\columnwidth]{{{Figures/real_data_graphs_high_k/SP1500adjGEadd_k30}}}}	
\subcaptionbox{\textsc{SPONGE}$_{sym}$  
}{\includegraphics[width=0.24\columnwidth]{{{Figures/real_data_graphs_high_k/SP1500adjGEmul_k30}}}}	
\subcaptionbox{\textsc{BNC} 
}{\includegraphics[width=0.24\columnwidth]{{{Figures/real_data_graphs_high_k/SP1500adjBNC_k30}}}}	
\subcaptionbox{$\bar{L}_{sym}$  
}{\includegraphics[width=0.24\columnwidth]{{{Figures/real_data_graphs_high_k/SP1500adjLbar_k30}}}}	
\par\end{centering}
\captionsetup{width=0.99\linewidth}
\caption{\footnotesize Adjacency matrix of the S\&P 1500 data, sorted by cluster membership;  
$k=10$ (top) and $k=30$ (bottom). 
}
\label{fig:SP1500_k_10_30}
\end{figure}
%
%
%

\vspace{-1mm}
\begin{figure}[h]
\centerline{\includegraphics[width=0.93\linewidth]{./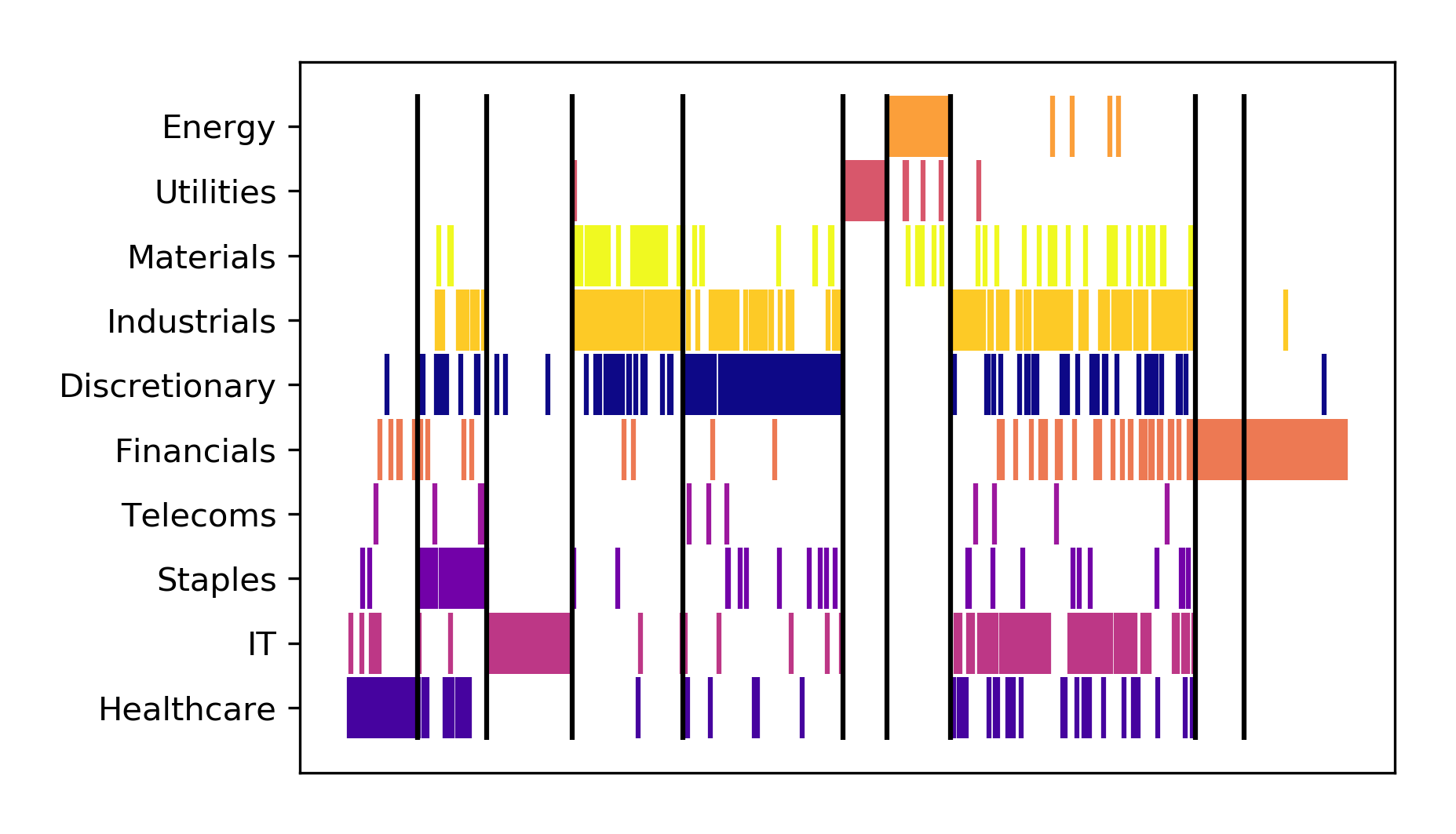}}
\captionsetup{width=1.03\linewidth}
\caption{\footnotesize 
GICS  
decomposition 
for \textsc{SPONGE$_{sym}$} clusters. 
}
\label{fig:spsec}
\end{figure}
\vspace{-2mm}


\vspace{-1mm}
\paragraph{Correlations of Australian rainfalls.}
We also consider time series 
of historical  rainfalls in locations throughout Australia. 
Edge weights are obtained 
from the pairwise Pearson correlation, 
leading to a complete signed graph on $n=306$ nodes.
\begin{wrapfigure}{l}{0.49\columnwidth}
\vspace{-3mm}
\centering
\includegraphics[width=0.52\columnwidth]{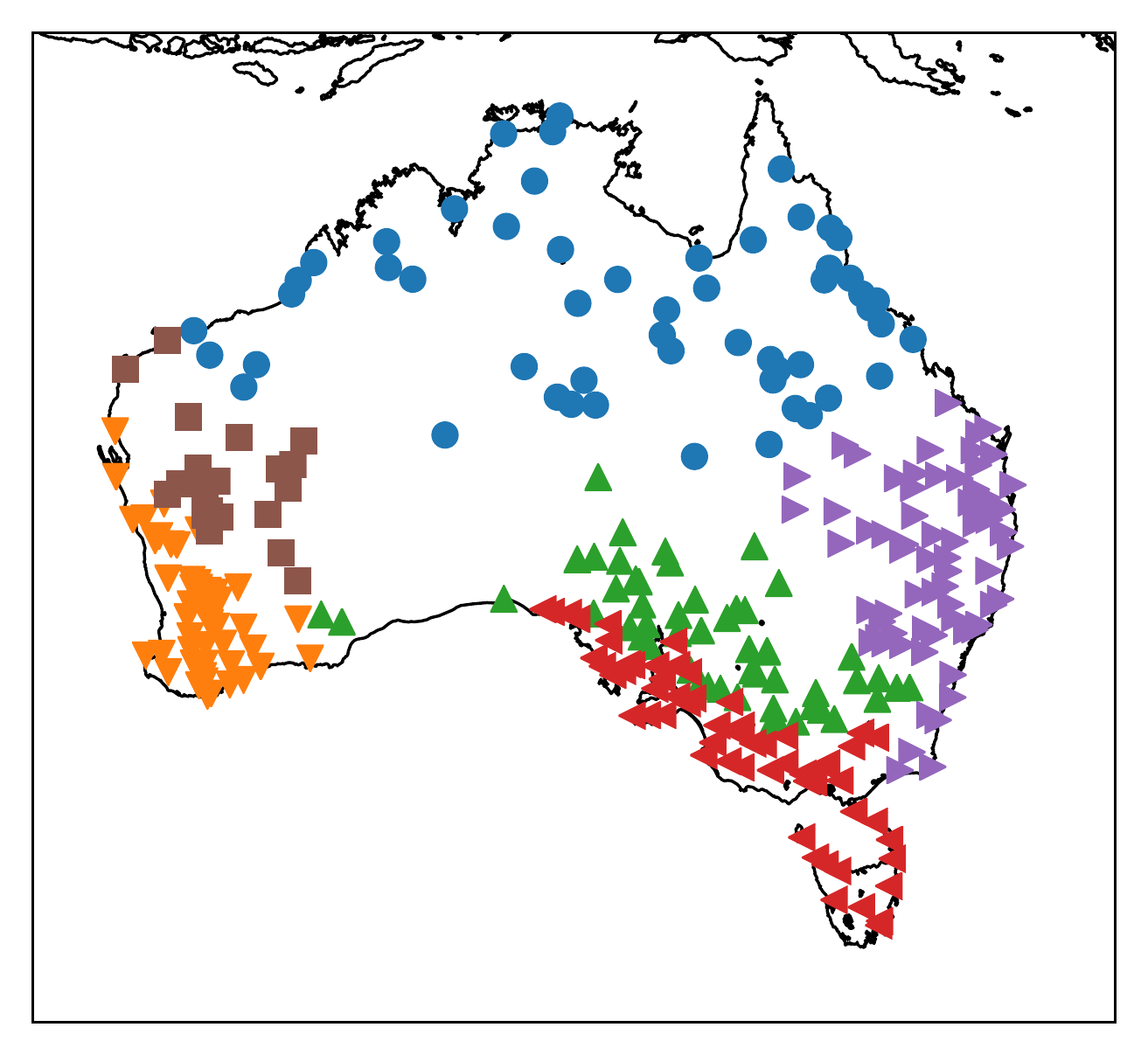} 
\caption{ \footnotesize
\textsc{SPONGE}: $k = 6$, Australian rainfalls data. 
}
\label{fig:rainmap}
\end{wrapfigure}
\vspace{0mm}
Figure \ref{fig:rainclus} shows a clear clustering structure, for $k=\{6,10\}$ clusters, and Figure \ref{fig:rainmap} plots the points onto the corresponding geographic locations.
\textsc{SPONGE} has very effectively identified geographic regions with similar climate, based only on the correlations of the rainfall measurements.

\vspace{-1mm}
\begin{figure}
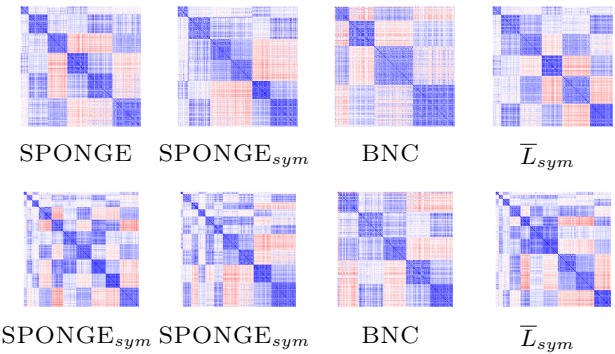

\captionsetup[subfigure]{labelformat=empty}
\begin{centering}
\subcaptionbox{ \textsc{SPONGE}
}{\includegraphics[width=0.24\columnwidth]{{{Figures/real_data_graphs_low_k/RAINadjGEadd_k6}}}}	
\subcaptionbox{\textsc{SPONGE}$_{sym}$
}{\includegraphics[width=0.24\columnwidth]{{{Figures/real_data_graphs_low_k/RAINadjGEmul_k6}}}}	
\subcaptionbox{\textsc{BNC}
}{\includegraphics[width=0.24\columnwidth]{{{Figures/real_data_graphs_low_k/RAINadjBNC_k6}}}}	
\subcaptionbox{$\bar{L}_{sym}$
}{\includegraphics[width=0.24\columnwidth]{{{Figures/real_data_graphs_low_k/RAINadjLap_k6}}}}	
\par\end{centering}
\begin{centering}
\subcaptionbox{ \textsc{SPONGE}$_{sym}$ 
}{\includegraphics[width=0.24\columnwidth]{{{Figures/real_data_graphs_high_k/RAINadjGEadd_k10}}}}	
\subcaptionbox{ \textsc{SPONGE}$_{sym}$ 
}{\includegraphics[width=0.24\columnwidth]{{{Figures/real_data_graphs_high_k/RAINadjGEmul_k10}}}}	
\subcaptionbox{\textsc{BNC}
}{\includegraphics[width=0.24\columnwidth]{{{Figures/real_data_graphs_high_k/RAINadjBNC_k10}}}}	
\subcaptionbox{$\bar{L}_{sym}$
}{\includegraphics[width=0.24\columnwidth]{{{Figures/real_data_graphs_high_k/RAINadjLbar_k10}}}}	
\par\end{centering}
\captionsetup{width=0.99\linewidth}
\caption{ \footnotesize  Sorted adjacency matrix of the Australian
rainfall data set, with  
$k=6$ (top) and $k=10$ (bottom).
}
\label{fig:rainclus}
\end{figure}
\vspace{-1mm}


\vspace{-2mm}
\section{ Discussion and future directions}   \label{sec:conclusion}
\vspace{-2mm}

We introduced a principled spectral algorithm (\textsc{SPONGE})  for clustering signed graphs, 
that amounts to solving a generalized eigenvalue problem,  and provided a theoretical analysis  
for $k=2$ clusters. 
Extensive numerical experiments  
demonstrate its robustness to noise and sampling sparsity. In particular, for very sparse graphs and large number of clusters
$k$, we  are able to 
 recover clusterings when all state-of-the-art methods completely fail. 

There are several directions for future work such as:
(i) considering a more general SSBM that allows for different edge 
sampling probabilities and noise levels;
%
(ii) handling the challenging setting of very sparse graphs, 
where $p = \Theta(\frac{1}{n})$; 
%
%
(iii) exploring the usefulness of the \textsc{SPONGE} embedding 
as a dimensionality reduction tool in multivariate time series analysis;
%
(iv) exploring semidefinite programming based approaches, inspired by recent work 
on community detection \cite{guedonVershynin2016community};  
and (v) investigating graph-based diffuse interface models utilizing the Ginzburg-Landau functionals, based on the MBO scheme \cite{signed_MBO_Yves,merkurjevBertozzi2013mbo}.  

\vspace{-3mm}
\paragraph{Acknowledgments.} We thank Sanjay Chawla, Ioannis Koutis, and Pedro Mercado for useful discussions. This work was funded by EPSRC grant EP/N510129/1 at The Alan Turing Institute.

\newpage
\bibliographystyle{plain}
\bibliography{AAA_references}


%
%
%
%

\appendix

\clearpage

\onecolumn

\section{Matrix perturbation analysis} \label{app:sec_perturb_theory}
Let $A \in \mathbb{C}^{n \times n}$ be Hermitian with eigenvalues $\lambda_1 \geq \lambda_2 \geq \cdots \geq \lambda_n$ 
and corresponding eigenvectors $v_1,v_2,\dots,v_n \in \mathbb{C}^n$. 
Let $\widetilde{A} = A + W$ be a perturbed version of $A$, with the perturbation matrix 
$W \in \mathbb{C}^{n \times n}$ being Hermitian. Let us denote the eigenvalues of $\tilde{A}$ and $W$ by
$\tilde{\lambda}_1 \geq \cdots \geq \tilde{\lambda}_n$ and 
$\epsilon_1 \geq \epsilon_2 \geq \cdots \geq \epsilon_n$ respectively.

To begin with, we would like to quantify the perturbation of the eigenvalues of $\widetilde{A}$ with respect to the 
eigenvalues of $A$. Weyl's inequality \cite{Weyl1912} is a very useful result in this regard.
%
\begin{theorem} [Weyl's Inequality \cite{Weyl1912}] \label{thm:Weyl} 
For each $i = 1,\dots,n$, it holds that
\begin{equation}
 \lambda_i + \epsilon_n  \leq  \tilde{\lambda}_i \leq \lambda_i + \epsilon_1.
 \end{equation}
In particular, this implies that $\tilde{\lambda}_i \in [\lambda_i - \norm{W}_2, \lambda_i + \norm{W}_2]$.
\end{theorem}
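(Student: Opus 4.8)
The plan is to establish both inequalities at once via the Courant--Fischer variational (min--max) characterization of the eigenvalues of a Hermitian matrix, which is the single key ingredient. Recall that for a Hermitian $M \in \mathbb{C}^{n \times n}$ with eigenvalues $\mu_1 \geq \cdots \geq \mu_n$, one has
\[
\mu_i = \max_{\substack{S \subseteq \mathbb{C}^n \\ \dim S = i}} \; \min_{\substack{x \in S \\ \norm{x}_2 = 1}} x^{*} M x .
\]
First I would apply this to $\widetilde{A} = A + W$. For any unit vector $x$, the Rayleigh quotient of the Hermitian perturbation $W$ obeys the uniform bounds $\epsilon_n \leq x^{*} W x \leq \epsilon_1$, since $\epsilon_1$ and $\epsilon_n$ are respectively its largest and smallest eigenvalues. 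Hence $x^{*} \widetilde{A} x = x^{*} A x + x^{*} W x \leq x^{*} A x + \epsilon_1$ for every unit $x$; taking the inner minimum over an arbitrary $i$-dimensional subspace $S$ and then the outer maximum over all such $S$ yields $\tilde{\lambda}_i \leq \lambda_i + \epsilon_1$.

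For the matching lower bound I would argue symmetrically, feeding the complementary inequality $x^{*} \widetilde{A} x \geq x^{*} A x + \epsilon_n$ into the same min--max expression to obtain $\tilde{\lambda}_i \geq \lambda_i + \epsilon_n$. Equivalently, one may write $A = \widetilde{A} + (-W)$, observe that the largest eigenvalue of $-W$ is $-\epsilon_n$, and invoke the upper bound already proved. Either route gives $\lambda_i + \epsilon_n \leq \tilde{\lambda}_i \leq \lambda_i + \epsilon_1$, which is the first claim.

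Finally, for the spectral-norm consequence I would use that a Hermitian $W$ has $\norm{W}_2 = \max(\abs{\epsilon_1}, \abs{\epsilon_n})$, so that $\epsilon_1 \leq \norm{W}_2$ and $\epsilon_n \geq -\norm{W}_2$. Substituting these into the two-sided bound immediately gives $\tilde{\lambda}_i \in [\lambda_i - \norm{W}_2, \lambda_i + \norm{W}_2]$. The argument is essentially routine once Courant--Fischer is in hand; the only point requiring a moment of care is the commuting of the additive constant past the $\min$ and $\max$, which is legitimate precisely because the bounds $x^{*} W x \leq \epsilon_1$ and $x^{*} W x \geq \epsilon_n$ hold \emph{uniformly} over all unit vectors and hence survive both the inner minimization and the outer maximization. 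I do not anticipate any genuine obstacle here — the entire content of the result is the variational principle itself.
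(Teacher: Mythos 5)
Your argument is correct and complete: the uniform Rayleigh-quotient bounds $\epsilon_n \leq x^{*}Wx \leq \epsilon_1$ do pass through the inner minimum and outer maximum of the Courant--Fischer characterization, and the identity $\norm{W}_2 = \max(\abs{\epsilon_1},\abs{\epsilon_n})$ for Hermitian $W$ gives the final containment. Note, however, that the paper offers no proof to compare against --- it states Weyl's inequality as a classical result and simply cites the reference --- so what you have written is the standard textbook derivation, and it is the right one.
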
 
One can also quantify the perturbation of the subspace spanned by eigenvectors of $A$, this was established 
by Davis and Kahan \cite{daviskahan}. Before introducing the theorem, we need some definitions. 
Let $U,\widetilde{U} \in \mathbb{C}^{n \times k}$ (for $k \leq n$) have orthonormal columns respectively and 
let $\sigma_1 \geq \dots \geq \sigma_k$ denote the singular values of $U^{*}\widetilde{U}$. Also, 
let us denote $\calR(U)$ to be the range space of the columns of $U$, same for $\calR(\widetilde U)$. 
Then the $k$ principal angles between $\calR(U), \calR(\widetilde{U})$ are 
defined as $\theta_i := \cos^{-1}(\sigma_i)$ for $1 \leq i \leq k$, with each $\theta_i \in [0,\pi/2]$. 
It is usual to define $k \times k$ diagonal matrices 
$\Theta(\calR(U), \calR(\widetilde{U})) := \text{diag}(\theta_1,\dots,\theta_k)$ 
and $\sin \Theta(\calR(U), \calR(\widetilde{U})) := \text{diag}(\sin \theta_1,\dots,\sin \theta_k)$. 
Denoting $||| \cdot |||$ to be any unitarily invariant norm (Frobenius, spectral, etc.), 
the following relation holds (see for eg., \cite[Lemma 2.1]{li94}, \cite[Corollary I.5.4]{stewart1990matrix}).
\begin{equation*} 
|||  \sin \Theta(\calR(U), \calR(\widetilde{U}))  |||  =  ||| (I - \tilde{U}  \tilde{U}^{*} ) U |||.
\end{equation*}
With the above notation in mind, we now introduce a version of the Davis-Kahan theorem taken from \cite[Theorem 1]{dkuseful} 
(see also \cite[Theorem V.3.6]{stewart1990matrix}).
%
\begin{theorem}[Davis-Kahan] \label{thm:DavisKahan} 
Fix $1 \leq r \leq s \leq n$, let $d = s-r+1$, and let 
$U = (u_r,u_{r+1},\dots,u_s) \in \mathbb{C}^{n \times d}$ and 
$\widetilde{U} = (\widetilde{u}_r,\widetilde{u}_{r+1},\dots,\widetilde{u}_s) \in \mathbb{C}^{n \times d}$. Write
\begin{equation*}
 \delta = \inf\set{\abs{\hat\lambda - \lambda}: \lambda \in [\lambda_s,\lambda_r], \hat\lambda \in (-\infty,\widetilde\lambda_{s+1}] \cup [\widetilde \lambda_{r-1},\infty)}
\end{equation*}
where we define $\widetilde\lambda_0 = \infty$ and $\widetilde\lambda_{n+1} = -\infty$ and assume that $\delta > 0$. Then
\begin{equation*}
 ||| \sin \Theta(\calR(U), \calR(\widetilde{U}))|||  = ||| (I - \tilde{U}  \tilde{U}^{*} ) U ||| \leq  \frac{ ||| W ||| }{ \delta}.
 \end{equation*}
\end{theorem}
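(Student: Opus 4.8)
The plan is to establish the $\sin\Theta$ bound via the classical Sylvester-equation argument, specializing the generic spectral-separation hypothesis to the interval structure encoded in $\delta$. First I would split each space into the relevant block and its orthogonal complement: let $U$ carry the eigenvectors $u_r,\dots,u_s$ of $A$ (eigenvalues in $[\lambda_s,\lambda_r]$) and let $U_0$ carry the remaining eigenvectors, and similarly write $\widetilde U$ and its orthogonal complement $\widetilde U_0$ for $\widetilde A = A + W$. Since $\widetilde U \widetilde U^* + \widetilde U_0 \widetilde U_0^* = I$ and $\widetilde U_0$ has orthonormal columns, the identity recalled just before the theorem gives $\norm{(I - \widetilde U \widetilde U^*)U} = \norm{\widetilde U_0 \widetilde U_0^* U} = \norm{\widetilde U_0^* U}$, so it suffices to bound $S := \widetilde U_0^* U$ in the relevant norm.

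Next I would derive a Sylvester equation for $S$. Writing $\Lambda_1 = \mathrm{diag}(\lambda_r,\dots,\lambda_s)$ for the inner eigenvalues of $A$ and $\widetilde\Lambda_0$ for the outer eigenvalues of $\widetilde A$ attached to the columns of $\widetilde U_0$, I use $AU = U\Lambda_1$ and the adjoint eigenrelation $\widetilde U_0^* \widetilde A = \widetilde\Lambda_0 \widetilde U_0^*$, together with $\widetilde A = A + W$, to evaluate $\widetilde U_0^* A U$ in two ways. The first gives $\widetilde U_0^* U\Lambda_1 = S\Lambda_1$; the second gives $\widetilde U_0^*(\widetilde A - W)U = \widetilde\Lambda_0 S - \widetilde U_0^* W U$. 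Equating the two yields
\begin{equation*}
\widetilde\Lambda_0 S - S\Lambda_1 = \widetilde U_0^* W U.
\end{equation*}

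The crux is then to invert the Sylvester operator $X \mapsto \widetilde\Lambda_0 X - X\Lambda_1$ with the correct norm control. Because $\Lambda_1$ and $\widetilde\Lambda_0$ are diagonal, this operator acts entrywise as multiplication by $(\widetilde\Lambda_0)_{ii} - (\Lambda_1)_{jj}$, and by construction every such scalar has modulus at least $\delta$: the eigenvalues of $\Lambda_1$ lie in $[\lambda_s,\lambda_r]$ while those of $\widetilde\Lambda_0$ lie in $(-\infty,\widetilde\lambda_{s+1}] \cup [\widetilde\lambda_{r-1},\infty)$, which is exactly the separation measured by $\delta$. For the Frobenius norm this is immediate, giving $\norm{S}_F \le \delta^{-1}\norm{\widetilde U_0^* W U}_F$, and combined with $\norm{\widetilde U_0^* W U} \le \norm{W}$ (pre- and post-multiplication by matrices with orthonormal columns is norm-nonincreasing) it delivers the stated bound. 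I expect the main obstacle to be upgrading this inverse estimate from the Frobenius (or spectral) norm to an arbitrary unitarily invariant norm $|||\cdot|||$: this is precisely the Davis--Kahan--Weinberger type result that the Sylvester solution operator has norm at most $\delta^{-1}$ in every unitarily invariant norm, which I would either invoke as a known lemma or prove separately via the integral-representation and pinching arguments available for such norms.
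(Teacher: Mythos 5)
The paper does not actually prove this theorem: it is quoted from the literature (see \cite{dkuseful} and \cite{stewart1990matrix}), so there is no in-paper argument to compare against. Your Sylvester-equation derivation is the standard proof of the cited result and is correct in outline. The reduction $|||(I-\widetilde U\widetilde U^{*})U||| = |||\widetilde U_0^{*}U|||$ is valid for any unitarily invariant norm because $\widetilde U_0$ has orthonormal columns (so the nonzero singular values of $\widetilde U_0\widetilde U_0^{*}U$ and of $\widetilde U_0^{*}U$ coincide); the identity $\widetilde\Lambda_0 S - S\Lambda_1 = \widetilde U_0^{*}WU$ follows exactly as you say from $AU = U\Lambda_1$ and $\widetilde U_0^{*}\widetilde A = \widetilde\Lambda_0\widetilde U_0^{*}$; and the $\delta$-separation of the two diagonal spectra is precisely what the definition of $\delta$ encodes, since every eigenvalue attached to $\widetilde U_0$ lies in $(-\infty,\widetilde\lambda_{s+1}]\cup[\widetilde\lambda_{r-1},\infty)$ while every diagonal entry of $\Lambda_1$ lies in $[\lambda_s,\lambda_r]$. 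You are also right to isolate the one nontrivial ingredient: dividing entrywise by $(\widetilde\Lambda_0)_{ii}-(\Lambda_1)_{jj}$ is a Schur multiplication whose $1/\delta$ bound is immediate only for the Frobenius norm, and for the spectral norm (the case the paper actually uses downstream, e.g.\ in \eqref{eq:dk_useful}) or a general unitarily invariant norm one must invoke the Davis--Kahan--Weinberger/Bhatia--Davis--McIntosh bound on the inverse of the Sylvester operator. It is worth noting that in the configuration at hand --- one spectrum inside the interval $[\lambda_s,\lambda_r]$ and the other outside its open $\delta$-neighbourhood --- that lemma yields the constant $1$ rather than the $\pi/2$ that appears for arbitrary separated sets, which is why the stated bound carries no extra factor. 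With that lemma invoked, your argument is complete.
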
 
For instance, if $r = s = j$, then by using the spectral norm $\norm{\cdot}_2$, we obtain 
\begin{equation} \label{eq:dk_useful}
\sin \Theta(\calR(\widetilde{v}_j), \calR(v_j)) = \norm{(I - v_j v_j^{*})\widetilde{v}_j}_2 \leq \frac{\norm{W}_2}{\min\set{\abs{\widetilde{\lambda}_{j-1}-\lambda_j},\abs{\widetilde\lambda_{j+1}-\lambda_j}}}.
\end{equation}
%

\section{Useful concentration inequalities}
\subsection{Chernoff bounds} \label{app:subsec_chernoff_bern}
Recall the following Chernoff bound for sums of independent Bernoulli random variables.
\begin{theorem}[{\cite[Corollary 4.6]{upfal05}}] \label{thm:chernoff_bern}
Let $X_1,\dots,X_n$ be independent Bernoulli random variables with $\prob(X_i = 1) = p_i$. Let $X =\sum_{i=1}^n X_i$ and $\mu = \expec[X]$. For $\delta \in (0,1)$, 
\begin{equation*}
\prob(\abs{X - \mu} \geq \delta \mu) \leq 2 \exp(-\mu \delta^2 / 3).
\end{equation*}
\end{theorem}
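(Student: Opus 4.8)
The plan is to prove the upper and lower tails separately via the exponential moment (Chernoff) method and then combine them. First I would split the two-sided event,
\begin{equation*}
\prob(\abs{X - \mu} \geq \delta \mu) \leq \prob(X \geq (1+\delta)\mu) + \prob(X \leq (1-\delta)\mu),
\end{equation*}
and show that each term on the right is bounded by $\exp(-\mu \delta^2 / 3)$, which immediately yields the factor of $2$ in the claimed bound.

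For the upper tail I would apply Markov's inequality to $e^{tX}$ with $t > 0$. Since the $X_i$ are independent, $\expec[e^{tX}] = \prod_{i=1}^n \expec[e^{tX_i}] = \prod_{i=1}^n (1 + p_i(e^t - 1))$, and the elementary inequality $1 + x \leq e^x$ together with $\sum_i p_i = \mu$ gives $\expec[e^{tX}] \leq \exp(\mu(e^t - 1))$. Hence $\prob(X \geq (1+\delta)\mu) \leq \exp(\mu(e^t - 1) - t(1+\delta)\mu)$, and optimizing over $t$ by setting $t = \ln(1+\delta)$ yields the classical form $\prob(X \geq (1+\delta)\mu) \leq \left( e^\delta / (1+\delta)^{1+\delta} \right)^\mu$. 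Analogously, for the lower tail I would apply Markov to $e^{-tX}$ with $t > 0$, and after optimizing at $t = -\ln(1-\delta)$ obtain $\prob(X \leq (1-\delta)\mu) \leq \left( e^{-\delta} / (1-\delta)^{1-\delta} \right)^\mu$.

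The remaining, slightly more delicate, step is to control these two scalar ratios by $e^{-\delta^2/3}$ for every $\delta \in (0,1)$. For the upper tail, taking logarithms, it suffices to show $g(\delta) := \delta - (1+\delta)\ln(1+\delta) + \delta^2/3 \leq 0$ on $[0,1)$; I would verify this by noting $g(0) = 0$ and computing $g'(\delta) = -\ln(1+\delta) + 2\delta/3$, then checking via $g''$ that $g' \leq 0$ throughout $[0,1)$, so $g$ is non-increasing and hence non-positive. For the lower tail one obtains the even stronger estimate $e^{-\delta}/(1-\delta)^{1-\delta} \leq e^{-\delta^2/2} \leq e^{-\delta^2/3}$ by an entirely analogous calculus/Taylor-series argument on $\ln(1-\delta)$.

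Combining the two tail bounds gives $\prob(\abs{X - \mu} \geq \delta \mu) \leq 2\exp(-\mu \delta^2/3)$, as required. I expect the main obstacle to be purely the elementary-but-fiddly verification of the scalar inequality $e^\delta/(1+\delta)^{1+\delta} \leq e^{-\delta^2/3}$ on $(0,1)$ (and its lower-tail counterpart); the rest is a routine application of the moment-generating-function bound together with independence of the $X_i$.
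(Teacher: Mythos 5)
Your proposal is correct, and it is essentially the argument behind the result as it appears in the source the paper cites: the paper offers no proof of this statement at all (it is imported verbatim as \cite[Corollary 4.6]{upfal05}), so your tail-splitting MGF argument with the scalar estimates $e^{\delta}/(1+\delta)^{1+\delta} \leq e^{-\delta^2/3}$ and $e^{-\delta}/(1-\delta)^{1-\delta} \leq e^{-\delta^2/2}$ reconstructs the standard textbook proof. One small point of care in your sketch: $g''(\delta) = 2/3 - 1/(1+\delta)$ changes sign at $\delta = 1/2$, so the verification that $g' \leq 0$ on $[0,1)$ is not that $g'$ is monotone, but that $g'$ first decreases and then increases while $g'(0) = 0$ and $g'(1) = 2/3 - \ln 2 < 0$.
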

%

%
%

\subsection{Spectral norm of random matrices} \label{app:subsec_spec_rand_mat}
We will make use of the following result for bounding the spectral norm of 
symmetric matrices with independent, centered and bounded random variables.
\begin{theorem}[{\cite[Corollary 3.12, Remark 3.13]{bandeira2016}}] \label{app:thm_symm_rand}
Let $X$ be an $n \times n$ symmetric matrix whose entries $X_{ij}$ $(i \leq j)$ are 
independent, centered random variables. There there exists for any $0 < \varepsilon \leq 1/2$ 
a universal constant $c_{\varepsilon}$ such that for every $t \geq 0$, 
\begin{equation} \label{eq:afonso_conc}
\prob(\norm{X}_2 \geq (1+\varepsilon) 2\sqrt{2}\tilde{\sigma} + t) 
\leq n\exp\left(-\frac{t^2}{c_{\varepsilon}\tilde{\sigma}_{*}^2} \right)
\end{equation}
where
\begin{equation*} 
\tilde{\sigma}:= \max_{i} \sqrt{\sum_{j} \expec[X_{ij}^2]}, 
\quad \tilde{\sigma}_{*}:= \max_{i,j} \norm{X_{ij}}_{\infty}.
\end{equation*}
\end{theorem}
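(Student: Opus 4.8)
The plan is to prove this by the moment (trace) method, reducing the entire statement to a single bound on the even moments $\expec\,\Tr(X^{2p})$ and then converting that bound into a tail via Markov's inequality; this is the route of Bandeira--van Handel, and the sharp constant $2\sqrt2$ forces us to be careful rather than invoking an off-the-shelf inequality. First I would record the elementary reduction $\norm{X}_2^{2p} = \max_i|\lambda_i(X)|^{2p} \le \sum_i \lambda_i(X)^{2p} = \Tr(X^{2p})$, valid for any positive integer $p$ since $X$ is symmetric, so it suffices to estimate $\expec\,\Tr(X^{2p})$ and optimize over $p$. Expanding, $\expec\,\Tr(X^{2p}) = \sum \expec[X_{i_1 i_2}X_{i_2 i_3}\cdots X_{i_{2p}i_1}]$, a sum over closed walks of length $2p$ in the complete graph on $[n]$. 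Independence and centering kill every walk that traverses some edge exactly once, and boundedness controls the rest through $\expec[X_{ij}^{2k}] \le \tilde\sigma_*^{\,2k-2}\,\expec[X_{ij}^2]$ for $k\ge1$: each traversal beyond a simple pairing is paid for by a factor $\tilde\sigma_*$, while each pairing contributes $\expec[X_{ij}^2]$, whose row-sums are governed by $\tilde\sigma^2 = \max_i\sum_j\expec[X_{ij}^2]$.

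Second --- and this is the crux --- I would carry out the combinatorial estimate so as to obtain the \emph{dimension-free} leading constant. A naive walk count, or equivalently matrix Bernstein / an $\varepsilon$-net over the sphere, yields only $\expec\norm{X}_2 \lesssim \tilde\sigma\sqrt{\log n} + \tilde\sigma_*\log n$, which carries a spurious $\sqrt{\log n}$ on the leading term. Removing it requires counting the contributing walks by their pairing structure essentially exactly: the dominant contribution comes from tree-like (Catalan-type) pairings, whose sum produces a clean factor $(2\sqrt2\,\tilde\sigma)^{2p}$ once one performs the symmetrization needed to pass from general centered entries to a Gaussian-comparable model --- the extra $\sqrt2$, giving $2\sqrt2$ rather than the Gaussian $2$, is precisely the price of this reduction. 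The non-tree pairings and all excess traversals are absorbed into a lower-order term $c_\varepsilon\,\tilde\sigma_*\sqrt p$. Concretely, the target is
$$\big(\expec\,\Tr(X^{2p})\big)^{1/2p} \le n^{1/2p}\big((1+\varepsilon)\,2\sqrt2\,\tilde\sigma + c_\varepsilon'\,\tilde\sigma_*\sqrt{p}\big),$$
where the prefactor $n^{1/2p}$ is nothing but the number of eigenvalues summed in the trace.

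Finally I would convert this into the stated tail. By Markov applied to the $2p$-th moment, for any $s>0$,
$$\prob(\norm{X}_2 \ge s) \le s^{-2p}\,\expec\,\Tr(X^{2p}) \le n\Big(\tfrac{(1+\varepsilon)2\sqrt2\,\tilde\sigma + c_\varepsilon'\,\tilde\sigma_*\sqrt p}{s}\Big)^{2p}.$$
Setting $s = (1+\varepsilon)2\sqrt2\,\tilde\sigma + t$ annihilates the leading term, and choosing the free integer $p \asymp t^2/(c\,\tilde\sigma_*^2)$ so that $c_\varepsilon'\tilde\sigma_*\sqrt p$ is a small fraction of $t$ turns the bracket into $\exp(-\Theta(t^2/\tilde\sigma_*^2))$, yielding exactly $n\exp(-t^2/(c_\varepsilon\tilde\sigma_*^2))$. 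The $n$ prefactor is inherited directly from the trace and the $\tilde\sigma_*^2$ in the denominator from the boundedness control on the higher moments of the entries.

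The hard part is entirely Step two: obtaining the leading constant $2\sqrt2\,\tilde\sigma$ instead of $\tilde\sigma\sqrt{\log n}$ demands the precise walk/pairing combinatorics (or, equivalently, the Gaussian interpolation argument of Bandeira--van Handel). Steps one and three are routine reductions, and the $\varepsilon$ in the statement is exactly the slack that lets the combinatorial estimate tolerate the lower-order $\tilde\sigma_*\sqrt p$ corrections.
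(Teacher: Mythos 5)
You should first note that the paper itself contains no proof of this statement: it is imported verbatim, with attribution, from Corollary 3.12 and Remark 3.13 of \cite{bandeira2016}, and the only thing the paper adds after the statement is the observation that upper bound estimates on $\tilde\sigma$ and $\tilde\sigma_*$ may be substituted into \eqref{eq:afonso_conc}. So your proposal can only be compared with the cited source, and in outline you have reconstructed its strategy correctly: the trace/moment method, domination of the walk sum by tree-like (Catalan) pairings with excess traversals paid for through $\expec[X_{ij}^{2k}] \le \tilde\sigma_*^{2k-2}\expec[X_{ij}^2]$, symmetrization as the origin of the constant $2\sqrt2$ in place of the Gaussian $2$, and a Markov-on-moments conversion to a tail bound.

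The genuine gap is in your conversion step. Write $a=(1+\varepsilon)2\sqrt2\tilde\sigma$. Starting from $\expec\Tr(X^{2p})\le n\bigl(a+c_\varepsilon'\tilde\sigma_*\sqrt p\bigr)^{2p}$ and applying Markov at $s=a+t$ with $p$ chosen so that $c'_\varepsilon\tilde\sigma_*\sqrt p\le t/2$, the bracket is $\frac{a+t/2}{a+t}=1-\frac{t/2}{a+t}$, which is \emph{not} bounded away from $1$ when $t\ll a$; with $p\asymp t^2/\tilde\sigma_*^2$ you obtain
\begin{equation*}
\prob\bigl(\norm{X}_2\ge a+t\bigr)\;\le\; n\exp\Bigl(-\,c\,\frac{t^3}{\tilde\sigma_*^2\,(a+t)}\Bigr),
\end{equation*}
a cubic rather than sub-Gaussian exponent. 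In the window $\tilde\sigma_*\sqrt{\log n}\lesssim t\ll\tilde\sigma$ --- nonempty whenever $\tilde\sigma\gg\tilde\sigma_*\sqrt{\log n}$, e.g.\ Wigner matrices, or the SSBM matrices of this very paper when $np\gg\log n$ --- this is strictly weaker than the claimed bound, and the regime where $n e^{-t^2/c_\varepsilon\tilde\sigma_*^2}\ge 1$ makes the claim trivial only covers $t\lesssim\tilde\sigma_*\sqrt{\log n}$, not this window. The missing idea is that the $\varepsilon$-slack must be spent in the conversion, not only in the combinatorics: prove the moment bound at level $\varepsilon/2$, i.e.\ with leading term $a_{\varepsilon/2}=(1+\varepsilon/2)2\sqrt2\tilde\sigma$, and state the tail at level $\varepsilon$, so that the effective deviation variable is $s=\varepsilon\sqrt2\tilde\sigma+t$. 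Since $s\ge\varepsilon\sqrt2\tilde\sigma$ forces $a_{\varepsilon/2}\le(3/\varepsilon)s$, the exponent becomes $s^3/\bigl(\tilde\sigma_*^2(a_{\varepsilon/2}+s)\bigr)\gtrsim_\varepsilon s^2/\tilde\sigma_*^2\ge t^2/\tilde\sigma_*^2$, which is the stated bound and is precisely why $c_\varepsilon$ must blow up as $\varepsilon\downarrow0$. (An alternative standard repair for bounded entries is to combine the expectation bound with Talagrand's concentration inequality for convex Lipschitz functions.) Incidentally, in the paper's own application one has $t=\sqrt{np}\ge\tilde\sigma$, so your weaker cubic bound would suffice there; but it does not prove the theorem as stated for all $t\ge0$.
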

Note that it suffices to employ upper bound estimates on $\tilde{\sigma},\tilde{\sigma}_{*}$ in
\eqref{eq:afonso_conc}. Indeed, if $\tilde{\sigma} \leq \tilde{\sigma}^{(u)}$ and $\tilde{\sigma}_{*} \leq \tilde{\sigma}_{*}^{(u)}$, then
\begin{equation*}
    \prob(\norm{X}_2 \geq (1+\varepsilon) 2\sqrt{2}\tilde{\sigma}^{(u)} + t) \leq \prob(\norm{X}_2 \geq (1+\varepsilon) 2\sqrt{2}\tilde{\sigma} + t) \leq n\exp\left(-\frac{t^2}{c_{\varepsilon}{\tilde{\sigma}_{*}}^2} \right) 
    \leq n\exp\left(-\frac{t^2}{c_{\varepsilon} (\tilde{\sigma}_{*}^{(u)})^2} \right).  
\end{equation*}
%
%
\section{Signed stochastic block model (SSBM)} \label{appsec:ssbm_defs}
Let $A \in \set{0, \pm 1}^{n \times n}$ denote the adjacency matrix of $G$, with  
$A_{ii} = 0$, and $A_{ij} = A_{ji}$. Under the SSBM, we observe for each $i < j$ that

\vspace{2mm}
\begin{minipage}{0.49\linewidth}  
$ \quad  \quad  \quad$ \underline{$i,j$ lie in \textbf{same} cluster}
\begin{equation}
 A_{ij}= \left\{
\begin{array}{rl}
1 \quad ; & \text{w. p } \quad p (1-\eta)  \\
-1\quad ; & \text{w. p }  \quad p\eta \\
0 \quad ; & \text{w. p } \quad (1- p)
\end{array} \right.
\label{eq:defA_same}
\end{equation}
%
\end{minipage}  \hspace{0.2cm}  
\begin{minipage}{0.49\linewidth}  
$ \quad  \quad \quad$  \underline{$i,j$ lie in \textbf{different} clusters}
\begin{equation}
  A_{ij}= \left\{
\begin{array}{rl}
 1  \quad ; & \text{w. p } \quad p  \eta  \\
-1 \quad ; & \text{w. p }  \quad p (1-\eta) \\
 0  \quad ; & \text{w. p } \quad (1- p)
\end{array} \right. .
\label{eq:defA_different}
\end{equation}
\end{minipage}
\vspace{2mm}

In particular, $(A_{ij})_{i \leq j}$ are independent random variables. 
Next, we recall that $A$ can be decomposed as
%
\begin{equation}\label{eq:defA}
	A = A^+ - A^-,
\end{equation}
where $ A^+, A^- \in \set{0,1}^{n \times n}$ are the adjacency matrices of the unsigned graphs 
$\Gp,\Gn$ respectively. For any given $ i < j$, we have 

\begin{minipage}{0.49\linewidth}  
$ \quad  \quad  \quad $ \underline{$i,j$ lie in \textbf{same} cluster}
\begin{equation}
 A_{ij}^+= \left\{
\begin{array}{rl}
1 \quad ; & \text{w. p } \quad p (1-\eta)  \\
0 \quad ; & \text{w. p } \quad 1 - p (1-\eta)
\end{array} \right. ,
\label{eq:Aijp_same}
\end{equation}

\begin{equation}
 A_{ij}^-= \left\{
\begin{array}{rl}
1 \quad ; & \text{w. p } \quad p \eta  \\
0 \quad ; & \text{w. p } \quad 1 - p \eta
\end{array} \right. ,
\label{eq:Aijn_same}
\end{equation}

\end{minipage}  \hspace{0.2cm}  
\begin{minipage}{0.49\linewidth}  
$ \quad  \quad \quad$ \underline{$i,j$ lie in \textbf{different} clusters}
\begin{equation}
  A_{ij}^+= \left\{
\begin{array}{rl}
 1  \quad ; & \text{w. p } \quad p  \eta  \\
0 \quad ; & \text{w. p }  \quad 1 - p  \eta 
\end{array} \right. ,
\label{eq:Aijp_different}
\end{equation}
%
\begin{equation}
  A_{ij}^-= \left\{
\begin{array}{rl}
 1  \quad ; & \text{w. p } \quad p  (1-\eta)  \\
0 \quad ; & \text{w. p }  \quad 1 -p  (1-\eta)  
\end{array} \right. .
\label{eq:Aijn_different}
\end{equation}
\end{minipage} 
\vspace{2mm}

Since $\Ap_{ij} = \max\set{A_{ij},0}$, therefore $ (A_{ij}^+)_{i \leq j} $ are independent 
random variables. Similarly $ (A_{ij}^-)_{i \leq j}$ are also independent. 
But clearly, for given $i,j \in [n]$ with $i \neq j$, the entries 
$A_{ij}^+$ and  $A_{ij}^-$ are dependent random variables. 
%
%

\section{Proof of Theorem \ref{thm:sponge_k_2_bot_k_short}}
We will prove the following more precise version of Theorem \ref{thm:sponge_k_2_bot_k_short} in this section.
%
\begin{theorem} \label{thm:sponge_k_2_bot_k}
Assuming $\eta \in [0, 1/2)$ let $ \tau^+, \tau^- > 0$ satisfy 
$\tau^- <  \tau^+  \Big( \frac{ \frac{n}{2} -1 + \eta  }{ \frac{n}{2} - \eta } \Big)$. 
Then it holds that $\set{v_{n-1}(\Tbar), v_{n}(\Tbar)} = \set{\ones,\infovec}$ where $\infovec$ is defined 
in \eqref{eq:k_2_inform_vect}. 
Moreover, assuming $n \geq 6 $, for given $0 < \varepsilon \leq 1/2$, $ \epsilon \in (0,1) $ and $\varepsilon_{\tau} \in (0,1)$ let $\tau^- \leq \varepsilon_{\tau}  \tau^+  \Big( \frac{ \frac{n}{2} -1 + \eta  }{ \frac{n}{2} - \eta } \Big)$ and 
\begin{align*}
p & \geq   
 \max  \bigg\{ 
24,  
\frac{36 \tce^2}{ (\tau^+)^2},   
\frac{36 \tce^2}{ (\tau^-)^2}, 
\Big(  \frac{ \bar{c}(\varepsilon, \tau^+, \tau^-)    }{  \epsilon   \min  \Big\{   \frac{2}{3} \frac{ (1-\varepsilon_{\tau}) } { (1 + \tau^+  ) },  \frac{(1-2 \eta)}{3}   \frac{ (3 + \tau^+ + \tau^- ) }{ (1+\tau^+)^2 }    \Big\}   }    \Big)^4 
\bigg\}     
\Big( \frac{\log n}{n} \Big), 
\end{align*}
where $ \tce	= (1 + \varepsilon) 2 \sqrt{2} +1  +\sqrt{3}$, and 
\begin{align*}
\bar{c}(\varepsilon, \tau^+, \tau^-)
& = \frac{ 3^{3/2} \sqrt{2} \; \tce^{1/2} (1+\tau^-) } {  (\tau^+)^{3/2}   } + 
\frac{3 \tce }{ \tau^+ }  
+  \frac{6^{3/2} \; \tce^{3/2} }{ (\tau^+)^{3/2} }  
  + \frac{18 \; \tce^{2} }{ (\tau^+)^{2} } + \frac{9 \; \tce (1+\tau^-) }{ (\tau^+)^{2} }.   
\end{align*}
Then for $c_{\varepsilon} > 0$  depending only on $\varepsilon$, 
%
%
it holds  with probability at least 
$\Big( 1 - \frac{4}{n} - 2n \exp{ \big( \frac{ - p n }{c_{\varepsilon}} \big) }  \Big)$ that

\vspace{-3mm}
\begin{equation*}
	\norm{( I - V_2(T) V_2(T)^T) V_2(\bar{T})}_2  \leq  \frac{ \epsilon }{ 1 - \epsilon}. 
\end{equation*}
\end{theorem}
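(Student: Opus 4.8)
The plan is to prove the deterministic statement by exactly diagonalising $\Tbar$, and the high-probability statement by bounding $\norm{T - \Tbar}_2$ via matrix concentration and then invoking the Davis--Kahan theorem (Theorem~\ref{thm:DavisKahan}). First I would compute the relevant spectra. Under the SSBM with two equal clusters, symmetry forces $\expec[\Dp] = d^+ I$ and $\expec[\Dn] = d^- I$ with $d^+ = p(\tfrac n2 - 1 + \eta)$ and $d^- = p(\tfrac n2 - \eta)$, while $\expec[\Ap], \expec[\An], \expec[\Lp], \expec[\Ln]$ are all simultaneously diagonalised by the orthonormal system consisting of $\tfrac{1}{\sqrt n}\ones$, $\infovec$, and the $(n-2)$-dimensional subspace $\mathcal{Z}$ of vectors summing to zero on each cluster. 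Consequently $\Tbar$ is diagonalised by the same system, with
\begin{equation*}
\Tbar\,\tfrac{1}{\sqrt n}\ones = \frac{\taun d^-}{\taup d^+}\,\tfrac{1}{\sqrt n}\ones, \qquad \Tbar\,\infovec = \frac{np\eta + \taun d^-}{np(1-\eta) + \taup d^+}\,\infovec, \qquad \Tbar\, u = \frac{\tfrac{pn}{2} + \taun d^-}{\tfrac{pn}{2} + \taup d^+}\,u\quad (u \in \mathcal{Z}).
\end{equation*}
Using $\eta < 1/2$ (so that $np\eta < \tfrac{pn}{2} < np(1-\eta)$), a short calculation shows the $\infovec$-eigenvalue is strictly below the $\mathcal{Z}$-eigenvalue for every $\taup,\taun > 0$, so $\infovec$ always lies among the smallest two; and the $\tfrac{1}{\sqrt n}\ones$-eigenvalue drops below the $\mathcal{Z}$-eigenvalue exactly when $\taun d^- < \taup d^+$, which is precisely~\eqref{eq:tau_cond_sponge_k2bot_k_short} since $d^+/d^- = (\tfrac n2 - 1 + \eta)/(\tfrac n2 - \eta)$. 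This proves $\set{v_{n-1}(\Tbar), v_n(\Tbar)} = \set{\tfrac{1}{\sqrt n}\ones, \infovec}$ and exhibits the gap $\gamma := \lambda_{n-2}(\Tbar) - \lambda_{n-1}(\Tbar)$ separating the informative pair from the bulk; the slack factors $\varepsilon_{\tau}$ and $1 - 2\eta$ keep $\gamma$ bounded below by the $\min\set{\cdot,\cdot}$ quantity in the hypothesis on $p$.

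Next I would establish that, with probability at least $1 - \tfrac{4}{n} - 2n\exp(-pn/c_{\varepsilon})$,
\begin{equation*}
\max\set{\norm{\Ap - \expec[\Ap]}_2, \norm{\An - \expec[\An]}_2, \norm{\Dp - \expec[\Dp]}_2, \norm{\Dn - \expec[\Dn]}_2} \leq \tce\sqrt{pn\log n}.
\end{equation*}
For the adjacency terms I would apply Theorem~\ref{app:thm_symm_rand} to $\Ap - \expec[\Ap]$ and $\An - \expec[\An]$, whose entries are independent, centred, bounded by $1$ and of variance at most $p$; hence $\tilde\sigma \leq \sqrt{pn}$, $\tilde\sigma_* \leq 1$, and the choice $t = \sqrt{pn}$ gives the $(1+\varepsilon)2\sqrt 2 + 1$ part of $\tce$ with failure probability $n\exp(-pn/c_{\varepsilon})$ each. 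For the diagonal terms I would apply the Chernoff bound (Theorem~\ref{thm:chernoff_bern}) to each degree, a sum of independent Bernoullis, and union bound over the $n$ coordinates; this yields the $\sqrt 3$ part of $\tce$ and contributes $\tfrac{4}{n}$. Writing $\Lp - \expec[\Lp] = (\Dp - \expec[\Dp]) - (\Ap - \expec[\Ap])$ and likewise for $\Ln$, the triangle inequality bounds $\norm{\Lp - \expec[\Lp]}_2$ and $\norm{\Ln - \expec[\Ln]}_2$ by $\tce\sqrt{pn\log n}$ as well.

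With $M = \Ln + \taup\Dp$, $N = \Lp + \taun\Dn$ and $\bar M, \bar N$ their expected counterparts, the previous step gives $\norm{M - \bar M}_2 \leq (1+\taup)\tce\sqrt{pn\log n}$ and $\norm{N - \bar N}_2 \leq (1+\taun)\tce\sqrt{pn\log n}$. Since $\lambda_{\min}(\bar M) = \taup d^+ \geq \taup pn/3$ (the assumption $n \geq 6$ gives $\tfrac n2 - 1 + \eta \geq \tfrac n3$), Weyl's inequality (Theorem~\ref{thm:Weyl}) together with the lower bounds on $p$ keeps $\lambda_{\min}(M)$ of order $\taup pn$, so $M \succ 0$. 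I would then telescope
\begin{equation*}
T - \Tbar = M^{-1/2}(N - \bar N)M^{-1/2} + (M^{-1/2} - \bar M^{-1/2})\bar N M^{-1/2} + \bar M^{-1/2}\bar N (M^{-1/2} - \bar M^{-1/2}),
\end{equation*}
and bound each summand using $\norm{M^{-1/2}}_2 = \lambda_{\min}(M)^{-1/2}$, $\norm{\bar N}_2 = \tfrac{pn}{2} + \taun d^-$, and the square-root perturbation estimate $\norm{M^{-1/2} - \bar M^{-1/2}}_2 \leq (\norm{M^{-1}}_2\norm{\bar M^{-1}}_2\norm{M - \bar M}_2)^{1/2}$, which follows from the resolvent identity and the operator-H\"older inequality $\norm{X^{1/2} - Y^{1/2}}_2 \leq \norm{X - Y}_2^{1/2}$ for positive semidefinite $X,Y$. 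Collecting the resulting powers of $\taup,\taun,\tce$ yields exactly the constant $\bar{c}(\varepsilon,\taup,\taun)$ and the estimate $\norm{T - \Tbar}_2 \leq \bar{c}\,(\log n/(pn))^{1/4}$ (the quarter-power coming from the H\"older square-root bound, the other, faster-decaying terms being absorbed using $p \geq 24\log n/n$). Finally, applying Theorem~\ref{thm:DavisKahan} with $r = n-1$, $s = n$ and $W = T - \Tbar$, and using Weyl once more to see $\delta \geq \gamma - \norm{T - \Tbar}_2$,
\begin{equation*}
\norm{(I - V_2(T)V_2(T)^T)V_2(\Tbar)}_2 = \norm{\sin\Theta(\calR(V_2(T)), \calR(V_2(\Tbar)))}_2 \leq \frac{\norm{T - \Tbar}_2}{\gamma - \norm{T - \Tbar}_2},
\end{equation*}
and the hypothesis $p \geq (\bar{c}/(\epsilon\min\set{\cdot,\cdot}))^4\,\tfrac{\log n}{n}$ forces $\norm{T - \Tbar}_2 \leq \epsilon\gamma$, giving the claimed bound $\tfrac{\epsilon}{1-\epsilon}$.

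The main obstacle I expect is the control of $\norm{T - \Tbar}_2$ in the third step, caused by the inverse square roots in $T = M^{-1/2}NM^{-1/2}$: one must guarantee via Weyl that $M$ stays safely positive definite (this is where the invertibility conditions on $p$, e.g. $p \geq 36\tce^2/(\taup)^2 \cdot \tfrac{\log n}{n}$, come from), and then track how the H\"older square-root factor $(\norm{M - \bar M}_2/\lambda_{\min}(M))^{1/2}$ combines with $\norm{\bar N}_2 \sim pn$ and $\norm{M^{-1/2}}_2 \sim (\taup pn)^{-1/2}$. Keeping these powers of $\lambda_{\min}(M)^{-1}$, $\taup$, $\taun$ and $\tce$ straight is exactly what produces the unwieldy constant $\bar{c}(\varepsilon,\taup,\taun)$ and dictates the quarter-power, hence the fourth-power dependence in the final lower bound on $p$.
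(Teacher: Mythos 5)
Your proposal is correct and follows essentially the same route as the paper: exact diagonalization of $\Tbar$ in the basis $\{\tfrac{1}{\sqrt n}\ones,\infovec,\mathcal{Z}\}$ with the same eigenvalue comparisons and spectral gap, concentration of $A^{\pm},D^{\pm}$ via Theorem~\ref{app:thm_symm_rand} and Chernoff, the operator-monotonicity bound $\norm{P^{1/2}-\bar P^{1/2}}_2\leq\norm{P-\bar P}_2^{1/2}$ to control the inverse square roots, and finally Weyl plus Davis--Kahan. The only cosmetic difference is that you telescope $T-\Tbar$ into three terms whereas the paper expands $(\bar P^{-1/2}+E_P)(\bar Q+E_Q)(\bar P^{-1/2}+E_P)$ into seven, so your constant would not come out literally equal to $\bar c(\varepsilon,\tau^+,\tau^-)$, but the quarter-power rate and the structure of the final condition on $p$ are identical.
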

%
%
The proof is outlined in the following steps.
%
\subsection{Step 1: Analysis of the spectra of $ \ELn$, $ \ELp$, $\EDn$ and $\EDp$}
%
%
\begin{lemma} \label{lem:expecs_posneg_mats} 
With $\infovec$ as defined in \eqref{eq:k_2_inform_vect}, the following holds true regarding the spectra of $ \ELp$ and $\EDp$.
\begin{enumerate}
\item $\EDp = d^+ I = p \left( \frac{n}{2} -1 + \eta \right) I$.
\item $\lambda_n^+ = \lambda_n(\ELp) = 0$,  $v_n^+ = v_n(\ELp) = \frac{1}{\sqrt{n}}\mb{1}$.
\item $\lambda_{n-1}^+ = \lambda_{n-1}(\ELp) = p n \eta$,  $v_{n-1}^+ = v_{n-1}(\ELp) = \infovec$. 
\item $\lambda_{l}^+ =  \lambda_{l}(\ELp) = \frac{n}{2} p$, $\forall l = 1, \ldots, n-2$.
\end{enumerate}
%
Similarly, the following holds for the spectra of $\ELn$ and $\EDn$. 
\begin{enumerate}
\item $\EDn = d^- I = p \left( \frac{n}{2} - \eta \right) I$.
\item $\lambda_n^- = \lambda_n(\ELn) = 0,  \quad  v_n^- = v_n(\ELn) = \frac{1}{\sqrt{n}}\mb{1}$.
\item $\lambda_{1}^- = \lambda_{1}(\ELn) = p n (1-\eta)$, $v_{1}^- = v_1(\ELn) = \infovec$.
\item $\lambda_{l}^- =  \lambda_{l}(\ELn) = \frac{n}{2} p$,   $\forall l = 2, \ldots, n-1$. 
\end{enumerate}
\end{lemma}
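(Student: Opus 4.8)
The plan is to reduce the whole lemma to the diagonalization of the two expected adjacency matrices, exploiting the fact that for $k=2$ equally-sized clusters these matrices are simultaneously diagonalized by an explicit basis that is independent of $p$ and $\eta$. First I would read off the entrywise expectations from the SSBM definitions \eqref{eq:Aijp_same}--\eqref{eq:Aijn_different}: for $i \neq j$ in the same cluster $\expec[\Ap_{ij}] = p(1-\eta)$ and $\expec[\An_{ij}] = p\eta$, while for $i \neq j$ in different clusters $\expec[\Ap_{ij}] = p\eta$ and $\expec[\An_{ij}] = p(1-\eta)$, with vanishing diagonals. Writing $J_n = \ones\ones^T$ for the all-ones matrix and $M = \mathrm{blkdiag}(J_{n/2}, J_{n/2})$ for the same-cluster indicator (with ones on the diagonal), this lets me express
\begin{equation*}
\EAp = p(1-\eta)(M - I) + p\eta(J_n - M), \qquad \EAn = p\eta(M - I) + p(1-\eta)(J_n - M),
\end{equation*}
where the $M - I$ correction encodes the zero diagonal. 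Collecting terms gives $\EAp = p(1-2\eta)M + p\eta J_n - p(1-\eta)I$, and $\EAn$ is obtained from this by the substitution $\eta \mapsto 1-\eta$, so the two halves of the lemma are the same computation run twice.

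The key observation is that $M$, $J_n$ and $I$ share the three invariant subspaces $\mathrm{span}\{\ones\}$, $\mathrm{span}\{\infovec\}$, and their orthogonal complement $\calU$ (vectors summing to zero on each cluster separately), of dimensions $1$, $1$ and $n-2$, which are orthogonal and span $\matR^n$. On these I would record the elementary actions $M\ones = \tfrac n2 \ones$, $M\infovec = \tfrac n2 \infovec$, $Mu = 0$; $J_n\ones = n\ones$, $J_n\infovec = 0$, $J_n u = 0$; and $I$ acts as the identity. Substituting into the expression for $\EAp$ yields eigenvalue $d^+ := p(\tfrac n2 - 1 + \eta)$ on $\ones$, eigenvalue $p(\tfrac n2 - n\eta - 1 + \eta)$ on $\infovec$, and eigenvalue $-p(1-\eta)$ on $\calU$. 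Since $\ones$ is then an eigenvector of $\EAp$ with eigenvalue equal to the constant row sum $d^+$, the expected degrees are all equal and $\EDp = d^+ I$, which is item~1; the analogous computation gives $\EDn = d^- I$ with $d^- = p(\tfrac n2 - \eta)$.

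Next I would pass from $\EAp$ to $\ELp = \EDp - \EAp = d^+ I - \EAp$, which shares the eigenvectors of $\EAp$ and replaces each eigenvalue $\mu$ by $d^+ - \mu$. This turns the three eigenvalues into $0$ on $\ones$, $pn\eta$ on $\infovec$, and $\tfrac n2 p$ on $\calU$; the substitution $\eta \mapsto 1-\eta$ turns the negative case into $0$, $pn(1-\eta)$, and $\tfrac n2 p$. The final step is to order these under the standing assumption $\eta \in [0,1/2)$: for $\ELp$ one has $0 \le pn\eta < \tfrac n2 p$, so the smallest two eigenvalues are $0$ (eigenvector $\tfrac{1}{\sqrt n}\ones$) and $pn\eta$ (eigenvector $\infovec$), with $\tfrac n2 p$ of multiplicity $n-2$ above them, matching items 2--4; for $\ELn$ the inequality $1-\eta > 1/2$ instead makes $pn(1-\eta) > \tfrac n2 p$ the \emph{largest} eigenvalue $\lambda_1^-$, with eigenvector $\infovec$, again matching the claim.

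The content here is a finite-dimensional eigenvalue computation, so there is no genuine analytic obstacle; the only points requiring care are the diagonal correction (using $M - I$ rather than $M$, since $A_{ii}=0$) and the two ordering steps, both of which hinge precisely on $\eta < 1/2$. I expect the bookkeeping of keeping $\infovec$ versus $\tfrac{1}{\sqrt n}\ones$ assigned to the correct eigenvalue — and hence to the correct index $\lambda_{n-1}^+$ in the positive case but $\lambda_1^-$ in the negative case — to be the easiest place to slip, so I would verify each eigenvalue ordering explicitly against the bulk value $\tfrac n2 p$.
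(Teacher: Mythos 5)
Your proposal is correct and follows essentially the same route as the paper: compute the entrywise expectations, express $\EAp$ and $\EAn$ as a rank-two block structure minus a multiple of the identity, diagonalize on $\mathrm{span}\{\ones\}$, $\mathrm{span}\{\infovec\}$ and their orthogonal complement, and then shift by $d^{\pm} I$ to pass to the Laplacians. The only cosmetic differences are that you make the $(n-2)$-dimensional bulk eigenspace explicit where the paper invokes a rank-$2$ argument, and that you obtain the negative-subgraph case by the substitution $\eta \mapsto 1-\eta$ rather than repeating the computation; all eigenvalues, eigenvectors, and the $\eta<1/2$ ordering checks match the paper's.
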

Before going to the proof, we can see from Lemma \ref{lem:expecs_posneg_mats} that 
$\ELp$ and  $\ELn$ have the same eigenspaces. In particular, the following decomposition holds true
\begin{equation} \label{eq:ELp_spec_decomp}
\ELp=  \left[
\begin{array}{c|c|c}
& & \\
& & \\
\underbrace{v_n^+}_{ \mb{1}}  & \underbrace{v_{n-1}^+ }_{ w}  & \tilde{V}_{n \times (n-2)}   \\
& & \\
& &
\end{array} 
\right]   
\left[ \begin{array}{cccc}
\lambda_n^+ &  &   &  \\
 & \lambda_{n-1}^+ &   &  \\
 &  & \ddots &   \\
 &  &  &      \\
\end{array} \right]
\begin{bmatrix}
          (v_n^+)^T \\
          \\
          (v_{n-1}^+)^T  \\
                    \\
           \tilde{V}^T
         \end{bmatrix}
          = U \Lambda^+ U^T
\end{equation}
\begin{equation} \label{eq:ELn_spec_decomp}
\ELn=  \left[
\begin{array}{c|c|c}
& & \\
& & \\
\underbrace{v_n^-}_{ \mb{1}}  &  \underbrace{v_{1}^-}_{w}  & \tilde{V}_{n \times (n-2)}   \\
& & \\
& &
\end{array} 
\right]   
\left[ \begin{array}{cccc}
\lambda_n^- &  &   &  \\
 & \lambda_{1}^- &   &  \\
 &  & \ddots &   \\
 &  &  &      \\
\end{array} \right]
\begin{bmatrix}
          (v_n^-)^T \\
          \\
          (v_{1}^-)^T  \\
            \\
           \tilde{V}^T
         \end{bmatrix}
          = U \Lambda^- U^T.
\end{equation}
%
%
\begin{proof}
To begin with, let us note that for every $i \neq j$,
\begin{equation*}
\expec{ [ A^+_{ij} ] } = \left\{
\begin{array}{rl}
p (1-\eta)  \quad ; & \text{if } \quad i,j  \text{ lie in same cluster }  \\
p \eta  \quad ; & \text{if } \quad i,j  \text{ lie in different clusters} 
\end{array} \right.,    
\end{equation*}
\begin{equation*}
\text{ and } \quad \expec{ [ A^-_{ij} ] } = \left\{
\begin{array}{rl}
p \eta  \quad ; & \text{if } \quad i,j  \text{ lie in same cluster }  \\
p ( 1 - \eta) \quad ; & \text{if } \quad i,j   \text{ lie in different clusters} 
\end{array} \right. .
\end{equation*}
This leads to the following block structure for the matrices $\EAp, \EAn$.
%
\[
\EAp  =  
\begin{tikzpicture}[mymatrixenv]
   \matrix[mymatrix] (m)  {
        p (1-\eta) \ones\ones^T & p \eta \ones\ones^T  \\
        p \eta \ones\ones^T  &   p (1-\eta) \ones\ones^T \\
    };
    \mymatrixbraceright{1}{1}{$n/2$}
    \mymatrixbracetop{2}{2}{$n/2$}
\end{tikzpicture} - p(1 - \eta) I = M^+ - p(1 - \eta) I,
\]
and similarly 
\begin{equation*}
\EAn =
\left[
\begin{array}{c|c}
 p \eta \ones\ones^T & p (1-\eta)  \ones\ones^T   \\  
\hline
p (1-\eta) \ones\ones^T  &    p \eta \ones\ones^T 
\end{array} 
\right] - p\eta I = M^- - p\eta I.
\end{equation*} 
We can observe that both $ M^+  $ and $M^-  $ are rank-2 matrices.

\paragraph{Computing $\EDp, \EDn$.}
It can be easily verified that
\begin{equation*}
	\EAp \mb{1} = \left\{   \frac{n}{2} \left[  p (1-\eta) + p \eta ) \right]  - p ( 1 - \eta )  \right\}   \mb{1} = p \Big[ \frac{n}{2} - 1 + \eta \Big]     \; \mb{1}
\end{equation*}
\begin{equation*}
\text{ and so, } \quad \EDp = p \Big( \frac{n}{2} - 1 + \eta \Big) I.
\end{equation*}
Similarly, one can also verify that
\begin{equation*} 
	\EAn \mb{1} = \left\{ \frac{n}{2} \left[  p \eta + p (1-\eta)  \right]  - p  \eta   \right\}  
	\mb{1} = p \Big[ \frac{n}{2} - \eta \Big]   \mb{1} 
\end{equation*}
\begin{equation*}
\text{ and so, } \EDn = p \Big( \frac{n}{2} -  \eta \Big) I.
\end{equation*}

\paragraph{Spectra of $\EAp, \EAn$.}
From the preceding calculations, we easily see that 
\begin{equation*}
	\EAp \mb{1} = p \Big[ \frac{n}{2} - 1 + \eta \Big]  \ones \quad   \Rightarrow   \quad 
		\lambda_1(\EAp) = p \Big[ \frac{n}{2} - 1 + \eta \Big], \quad   \quad   v_1(\EAp) = \frac{1}{\sqrt{n}}\ones.
\end{equation*}
Recall that the informative vector is defined as 
$\infovec := \frac{1}{\sqrt{n}}( \underbrace{ 1, \ldots, 1}_{n/2}, \;\; \underbrace{-1, \ldots, -1}_{n/2})^T.$ Clearly, 
\begin{equation*}
 	M^+ \; \; \infovec=  \left[
\begin{array}{c|c}
p (1-\eta) \ones\ones^T & p \eta \ones\ones^T  \\  
\hline
  p \eta \ones\ones^T &  p (1-\eta) \ones\ones^T 
\end{array} 
\right]   
 \infovec
= \frac{n}{2} \Big[ p (1 - 2 \eta) \Big] \infovec.      
\end{equation*}
Therefore 
\begin{align*}
	\EAp  \infovec  
	& = \bigg(  \frac{n}{2}  \Big[ p (1-\eta) - p \eta \Big] - p (1-\eta) \bigg)  \infovec   \\
	& = \bigg(  \frac{n}{2}   p (1-2\eta) - p (1-\eta) \bigg)  \infovec 	 \\
	& = \underbrace{ p \bigg(  \frac{n}{2} (1-2\eta) - (1-\eta) \bigg) }_{ \lambda_2(\EAp) }   \underbrace{\infovec  }_{v_2(\EAp)}
\end{align*}
with $ \lambda_1(\EAp) \geq \lambda_2(\EAp) >0 $. Since $M^{+}$ is rank $2$, therefore  
$\lambda_3(\EAp) =  \ldots =  \lambda_n(\EAp)  = - p ( 1 - \eta)$.

Next, we repeat the above same procedure for $A^-$. Firstly, 
\begin{equation*}
	\EAn \ones = p \Big( \frac{n}{2} - \eta  \Big)    \ones   
	\quad   \Rightarrow   \quad   \lambda_1(\EAn) = p \Big( \frac{n}{2} -  \eta   \Big), \quad   \quad   v_1(\EAn) = \frac{1}{\sqrt{n}}\ones.
\end{equation*}
Moreover, 
\begin{align*}
	M^-  \infovec 
=  \left[
\begin{array}{c|c}
	p \eta \ones\ones^T 		& 		p (1-\eta) \ones\ones^T  \\  
\hline 
	p (1-\eta) \ones\ones^T &  p  \eta \ones\ones^T
\end{array} 
\right]  \infovec  
=  \frac{n}{2} [ p \eta - p ( 1- \eta ) ]  \infovec  
=  \frac{n}{2}  p (2 \eta -  1)  \infovec
\end{align*}
which leads to   
\begin{align*}
	\EAn \infovec 
	& = \Big[  \frac{n}{2} p (2 \eta - 1)  - p \eta  \Big]   \; \infovec   
	=  p \Big[  \frac{n}{2} (2 \eta - 1)  -  \eta   \Big]  \infovec \\ \Rightarrow  
	\lambda_n(\EAn) &= p \Big[  \underbrace{ \frac{n}{2}  ( 2 \eta - 1 )  -\eta}_{<0}  \Big],  
				\quad      v_n(\EAn) = \infovec.
\end{align*}
Since $M^{-}$ is rank $2$, hence 
\begin{equation*}
\lambda_2(\EAn) =  \ldots =  \lambda_{n-1}(\EAn)  = - p  \eta.
\end{equation*}

\paragraph{Spectra of $\ELp, \ELn$.}
Since $\ELp = \EDp  - \EAp  =  \big(  \frac{n}{2} - \eta +1 \big) p I - \EAp$,  
therefore the smallest two largest eigenvalue of $\ELp$ are given by 
\begin{align*}
  \lambda_n(\ELp) &= p \big(  \frac{n}{2}  -1 + \eta  \big)  - p \big(  \frac{n}{2}  -1 + \eta  \big)  = 0, \\
  \lambda_{n-1}(\ELp) &= p \big(  \frac{n}{2}  -1 + \eta  \big)  - p \big[  \frac{n}{2}(1- 2 \eta) - (1-\eta) \big]  = n p \eta.
\end{align*} 
For $1 \leq l \leq n-2$, the remaining eigenvalues are given by
\begin{equation*}
  \lambda_{l}(\ELp) = p \Big(  \frac{n}{2}  -1 + \eta  \Big)  + p (1-\eta)  = \frac{n}{2} p.
\end{equation*}
Note that, since $\eta < \frac{1}{2}$, it holds true that
$\lambda_{l}(\ELp) > \lambda_{n-1}(\ELp), \quad \forall 1 \leq l \leq n-2.$
Also note that the eigenvectors of $\ELp$ are the same as for $\EAp$.
 
Repeating the process for $ \ELn $ using $\ELn = p \big(  \frac{n}{2}  - \eta \big)I  - \EAn$,
we obtain
\begin{align*}
	\lambda_{n}(\ELn) &= 0 = p \Big(  \frac{n}{2} - \eta  \Big)  - p \Big(  \frac{n}{2} - \eta \Big), \\
	\lambda_{1}(\ELn) &= p \Big( \frac{n}{2}  - \eta  \Big)  - p  \Big[   \frac{n}{2} (2 \eta - 1) - \eta \Big]  = n p (1 - \eta), \\
	\lambda_{l}(\ELn) &= p \Big( \frac{n}{2}  - \eta  \Big)  + p  \eta = \frac{np}{2}    \quad \Big( < \lambda_1(\ELn)\Big), \quad \forall l = 2, \ldots, n-1.
\end{align*}
\end{proof}

\subsection{Step 2: Analyzing the spectra of $\Tbar$} 
%
%
\begin{lemma} \label{lem:sponge_Tbar_spectra}
Let $\tau^+, \tau^- >0$ satisfy 
$\tau^-  <  \tau^+  \bigg( \frac{  \frac{n}{2} - 1 + \eta }{ \frac{n}{2} - \eta  } \bigg)$ 
and let $\eta < \frac{1}{2}$.  Then the following is true.
\begin{enumerate}
\item $\Big\{  \lambda_{n}(\Tbar),   \lambda_{n-1}(\Tbar)  \Big\}
 =  \Bigg\{ \frac{ \tau^- (\frac{n}{2} - \eta)  }{ \tau^+ ( \frac{n}{2} - 1 + \eta ) },   \frac{ n \eta +  \tau^- (\frac{n}{2} - \eta)  }{ n (1-\eta) + \tau^+ ( \frac{n}{2} - 1 + \eta ) } \Big\}$ and
$\lambda_{l}(\Tbar) = \frac{ n + 2 \tau^- ( \frac{n}{2} - \eta )}{ n + 2 \tau^+ ( \frac{n}{2} - 1 +  \eta ) },$ for $l=1,\dots,n-2$.

\item $\set{ v_n(\Tbar), v_{n-1}(\Tbar)} = \set{\frac{1}{\sqrt{n}}\ones, \infovec} $.
\end{enumerate}
Moreover, if $ n \geq 6$ and $ \tau^- \leq \varepsilon_{\tau}  \tau^+ \Big(  \frac{ \frac{n}{2}-1+\eta  }{\frac{n}{2}-\eta}  \Big),$ 
then the \textit{spectral gap} ($\lambda_{gap}$) between $\Big\{\lambda_{n}(\Tbar),   \lambda_{n-1}(\Tbar)  \Big\}$ 
and $\lambda_i(\Tbar) (i=n-2,\ldots,1)$ satisfies 
\begin{equation*}
\lambda_{gap} =  \lambda_{n-2}(\Tbar) -  \lambda_{n-1}(\Tbar) \geq \min 
\Big\{    \frac{2 (1 - \varepsilon_{\tau})}{ 3(1+\tau^+) },  \;\;   
\frac{ (1 - 2 \eta) }{ 3 }  \frac{ (3 + \tau^+ + \tau^-) }{ ( 1 + \tau^+ )^2 }   \Big\}.
\end{equation*}
\end{lemma}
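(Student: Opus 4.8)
The plan is to exploit the structural fact, visible in the spectral decompositions \eqref{eq:ELp_spec_decomp} and \eqref{eq:ELn_spec_decomp}, that $\ELp$ and $\ELn$ are simultaneously diagonalized by a common orthonormal basis $U = [\,\tfrac{1}{\sqrt n}\ones \mid \infovec \mid \tilde V\,]$, while by Lemma \ref{lem:expecs_posneg_mats} the expected degree matrices $\EDp = d^+ I$ and $\EDn = d^- I$ are scalar. Consequently $\ELn + \tau^+\EDp = U(\Lambda^- + \tau^+ d^+ I)U^T$ and $\ELp + \tau^-\EDn = U(\Lambda^+ + \tau^- d^- I)U^T$ share the basis $U$, so $\Tbar = U\,(\Lambda^- + \tau^+ d^+ I)^{-1/2}(\Lambda^+ + \tau^- d^- I)(\Lambda^- + \tau^+ d^+ I)^{-1/2}\,U^T$ is itself diagonal in $U$. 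The eigenvalue of $\Tbar$ attached to a common eigenvector with $\ELp$-eigenvalue $\mu^+$ and $\ELn$-eigenvalue $\mu^-$ is then $\frac{\mu^+ + \tau^- d^-}{\mu^- + \tau^+ d^+}$. I would read off the three values $\alpha,\beta,\gamma$ of claim 1 by plugging in the pairs $(\mu^+,\mu^-)$ from Lemma \ref{lem:expecs_posneg_mats}: $(0,0)$ for $\tfrac{1}{\sqrt n}\ones$, $(pn\eta, pn(1-\eta))$ for $\infovec$, and $(\tfrac{n}{2}p,\tfrac{n}{2}p)$ for each column of $\tilde V$. This also pins down the eigenvectors, yielding claim 2 once the ordering is settled.

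To order the eigenvalues I would show that $\gamma$, the eigenvalue of multiplicity $n-2$ coming from $\tilde V$, is strictly the largest. Writing $a = \tau^- d^-/p = \tau^-(\tfrac n2-\eta)$ and $b = \tau^+ d^+/p = \tau^+(\tfrac n2-1+\eta)$, a direct subtraction gives $\gamma - \alpha = \frac{n(b-a)}{b(n+2b)}$, which is positive exactly under the hypothesis $\tau^- < \tau^+\frac{n/2-1+\eta}{n/2-\eta}$ (equivalently $a<b$). For $\gamma - \beta$ I would compare the two ratios termwise: since $\eta<\tfrac12$, the $\infovec$-value $\beta$ has strictly smaller numerator $pn\eta + \tau^- d^-$ and strictly larger denominator $pn(1-\eta)+\tau^+ d^+$ than $\gamma$, so $\beta < \gamma$. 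Hence $\lambda_{n-2}(\Tbar)=\gamma$ and $\{\lambda_{n-1}(\Tbar),\lambda_n(\Tbar)\} = \{\alpha,\beta\}$, establishing claims 1 and 2.

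For the spectral gap I would observe that $\lambda_{gap} = \gamma - \max\{\alpha,\beta\} = \min\{\gamma-\alpha,\ \gamma-\beta\}$ and bound the two differences separately. Using $a \le \varepsilon_{\tau} b$ (the hypothesis on $\tau^-$) in $\gamma-\alpha = \frac{n(b-a)}{b(n+2b)}$ gives $\gamma-\alpha \ge \frac{n(1-\varepsilon_{\tau})}{n+2b}$, and since $b \le \tau^+\tfrac n2$ one has $n+2b \le n(1+\tau^+)$, whence $\gamma-\alpha \ge \frac{1-\varepsilon_{\tau}}{1+\tau^+} \ge \frac{2(1-\varepsilon_{\tau})}{3(1+\tau^+)}$. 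The delicate term is $\gamma-\beta$, where I would carry out the one genuinely nontrivial computation: expanding the cross terms to obtain $\gamma-\beta = \frac{\frac n2(1-2\eta)(n+a+b)}{(\frac n2+b)(n(1-\eta)+b)}$, in which the factor $(1-2\eta)$ emerges cleanly. Bounding the denominator factors by $\tfrac n2(1+\tau^+)$ and $n(1+\tau^+)$ (again via $b\le\tau^+\tfrac n2$) reduces the claim to $\frac{n+a+b}{n} \ge \frac{3+\tau^++\tau^-}{3}$, i.e. $\frac{a+b}{n}\ge\frac{\tau^++\tau^-}{3}$; this is precisely where $n\ge 6$ enters, since then $\frac12-\frac{\eta}{n}\ge\frac13$ and $\frac12-\frac{1-\eta}{n}\ge\frac13$ for $\eta\in[0,\tfrac12)$. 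Taking the minimum of the two bounds gives the stated estimate. I expect the algebra producing the factorization $\tfrac n2(1-2\eta)(n+a+b)$ in the numerator of $\gamma-\beta$ to be the main obstacle, everything else being either structural or a routine monotonicity estimate.
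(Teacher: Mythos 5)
Your proposal is correct and follows essentially the same route as the paper: simultaneous diagonalization of $\ELp+\tau^-\EDn$ and $\ELn+\tau^+\EDp$ in the common eigenbasis $U$, reading off the three eigenvalues $\frac{\mu^++\tau^-d^-}{\mu^-+\tau^+d^+}$, ordering them under the stated condition on $\tau^\pm$, and bounding the gap via the same factorization $\gamma-\beta=\frac{\frac n2(1-2\eta)(n+a+b)}{(\frac n2+b)(n(1-\eta)+b)}$ with $\frac{n}{2}-1+\eta,\ \frac n2-\eta\ge\frac n3$ for $n\ge 6$. Your termwise numerator/denominator comparison for $\beta<\gamma$ and the cancellation of $b$ in $\gamma-\alpha$ are marginally cleaner than the paper's cross-multiplications, but the argument is the same.
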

%
%
\begin{proof}
Using \eqref{eq:ELp_spec_decomp}, \eqref{eq:ELn_spec_decomp} and Lemma \ref{lem:expecs_posneg_mats}, we can write $\Tbar$ as
\begin{align*}
\Tbar & = (\ELn + \tau^+ \EDp )^{-1/2}   (\ELp  + \tau^{-} \EDn )  (\ELn + \tau^+ \EDp )^{-1/2} \\
   & = (U \Lambda^- U^T + \tau^+ d^+ I )^{-1/2}  (U \Lambda^+ U^T + \tau^- d^- I )  (U \Lambda^- U^T + \tau^+ d^+ I )^{-1/2}  \\
   & = U \underbrace{ (\Lambda^- + \tau^+ d^+ I)^{-1} ( \Lambda^+ + \tau^- d^- I )}_{\Lambda_{\Tbar}} U^T .
\end{align*}
$\Lambda_{\Tbar}$ has at most three distinct values which we denote as  
\begin{align*}
\underbrace{\Lambda_{\Tbar}^{(1)}}_{ \text{eigenvector } \frac{1}{\sqrt{n}}\ones} 
&= \frac{ \lambda_n^+ + \tau^- d^-  }{ \lambda_n^- + \tau^+ d^+ } 
= \frac{ \tau^- (\frac{n}{2} - \eta)  }{ \tau^+ ( \frac{n}{2} - 1 + \eta ) },  \\
\underbrace{ \Lambda_{\Tbar}^{(2)} }_{ \text{ eigenvector }  \infovec} 
&= \frac{ \lambda_{n-1}^+ + \tau^- d^-  }{ \lambda_1^- + \tau^+ d^+ } 
= \frac{ n \eta +  \tau^- (\frac{n}{2} - \eta)  }{ n (1-\eta) + \tau^+ ( \frac{n}{2} - 1 + \eta ) },	\\
\Lambda_{\Tbar}^{(3)} 
&= \frac{ \lambda_l^+ + \tau^- d^-  }{  \lambda_{l'}^- + \tau^+ d^+  } 
= \frac{ n + 2 \tau^- ( \frac{n}{2} - \eta )}{ n + 2 \tau^+ ( \frac{n}{2} - 1 +  \eta ) } 
\quad \text{ (for any $1 \leq l \leq n-2$, $2 \leq l' \leq n-1$)}. 
\end{align*}
We would like to ensure that $\Lambda_{\Tbar}^{(1)}, 	\Lambda_{\Tbar}^{(2)} < 	\Lambda_{\Tbar}^{(3)}$ holds 
in order to obtain the right embedding. To this end, we consider next the following cases.
\begin{enumerate}[nolistsep]
\item   $\mathbf{ \Lambda_{\Tbar}^{(1)} > \Lambda_{\Tbar}^{(2)} }.$ This is equivalent to

\begin{equation}
  \frac{ \tau^- (\frac{n}{2} - \eta)  }{ \tau^+ ( \frac{n}{2} - 1 + \eta ) } 
 >    \frac{  n \eta +  \tau^- (\frac{n}{2} - \eta)  }{ n (1-\eta) + \tau^+ ( \frac{n}{2} - 1 + \eta ) } 
\Longleftrightarrow 
 \tau^-  > \tau^+  \bigg( \frac{ \eta  ( \frac{n}{2} - 1 + \eta )   }{  (1-\eta) ( \frac{n}{2} - \eta )  } \bigg).
 \label{Stage2_a1}
\end{equation}

\item   $ \mathbf{\Lambda_{\Tbar}^{(1)} < \Lambda_{\Tbar}^{(3)} }.$ This is equivalent to 
\begin{equation}
  \frac{ \tau^- (\frac{n}{2} - \eta)  }{ \tau^+ ( \frac{n}{2} - 1 + \eta ) } 
 <    \frac{  n + 2 \tau^- (\frac{n}{2} - \eta)  }{ n  + 2 \tau^+ ( \frac{n}{2} - 1 + \eta ) } 
\Longleftrightarrow 
 \tau^-  <  \tau^+  \bigg( \frac{  \frac{n}{2} - 1 + \eta }{ \frac{n}{2} - \eta  }   \bigg).
 \label{Stage2_b2}
\end{equation}

\item   $ \mathbf{\Lambda_{\Tbar}^{(2)}  <  \Lambda_{\Tbar}^{(3)}}.$ This is equivalent to
\begin{align}
 &\frac{  n \eta +  \tau^- (\frac{n}{2} - \eta)  }{ n (1-\eta) + \tau^+ ( \frac{n}{2} - 1 + \eta ) } < \frac{ n + 2 \tau^- (\frac{n}{2} - \eta)  }{   n  + 2 \tau^+ ( \frac{n}{2} - 1 + \eta )  } \nonumber \\
& \Leftrightarrow
 n^2 \eta + 2 n \eta \tau^+ \Big(\frac{n}{2}-1+ \eta \Big) + n \tau^-  \Big(\frac{n}{2}- \eta \Big) <   
 n^2(1- \eta )   + 2 n \tau^- (1-\eta) \Big( \frac{n}{2} - \eta \Big)  + n \tau^+ \Big( \frac{n}{2} -1 +\eta  \Big) \nonumber \\
 & \Leftrightarrow
n (1 - 2\eta) + \tau^+ \Big( \frac{n}{2} -1 + \eta \Big) (1 - 2 \eta) + \tau^- \Big( \frac{n}{2} - \eta \Big) (1 - 2 \eta)  > 0, 
 \label{Stage2_c3}
\end{align}
which holds true since $n \geq 2$ and $ \eta < \frac{1}{2}$. 
\end{enumerate}
Therefore, we can conclude that if $ \eta < \frac{1}{2} $, and 
if $ \tau^-  <  \tau^+  \bigg( \frac{  \frac{n}{2} - 1 + \eta }{ \frac{n}{2} - \eta  } \bigg) $, then 
  $\Lambda_{\Tbar}^{(1)},    \Lambda_{\Tbar}^{(2)} <    \Lambda_{T}^{(3)}$. 
We would like to lower bound the following \textit{spectral gap},  
$\lambda_{gap} := \min \{   \Lambda_{\Tbar}^{(3)} - \Lambda_{\Tbar}^{(2)},  \Lambda_{\Tbar}^{(3)} - \Lambda_{\Tbar}^{(1)}  \}$,
a quantity we analyze next.  
\begin{enumerate}
\item \textbf{Lower bounding $\Lambda_{\Tbar}^{(3)} - \Lambda_{\Tbar}^{(1)}$.}
\begin{align*}
\Lambda_{\Tbar}^{(3)} - \Lambda_{\Tbar}^{(1)} 
& = \frac{ n + 2 \tau^-  (\frac{n}{2} - \eta) }{ n + 2 \tau^+ ( \frac{n}{2} -1+ \eta ) } - \frac{ \tau^- (\frac{n}{2} - \eta) }{ \tau^+  (\frac{n}{2} -1 + \eta) }  \\
& = \frac{  n \Big[ \tau^+ (\frac{n}{2} -1+ \eta)  - \tau^- (\frac{n}{2} - \eta) \Big] }{ \Big[ n + 2 \tau^+ (\underbrace{\frac{n}{2} -1 +\eta}_{\leq n/2}) \Big] \tau^+ (\underbrace{\frac{n}{2} -1+ \eta}_{\leq n/2})  } \\
& \geq  \frac{ n \Big[ \tau^+ (\frac{n}{2} -1+ \eta) - \tau^- (\frac{n}{2} - \eta) \Big] }{ (n + \tau^+ n) \tau^+ \frac{n}{2} }  \\
& = \frac{ 2 \Big[ \tau^+ (\frac{n}{2} -1+ \eta)  - \tau^- (\frac{n}{2} - \eta) \Big]  }{ n (1+\tau^+) \tau^+ }  \\
& \geq \frac{ 2 \Big[ \tau^+ (\frac{n}{2} -1+ \eta) (1-\varepsilon_{\tau}) \Big]  }{ n (1+\tau^+) \tau^+ }  
\quad \left(\text{using $ \tau^- \Big(\frac{n}{2} - \eta \Big) \leq  \varepsilon_{\tau}  \tau^+ \Big(\frac{n}{2} -1+ \eta \Big) $ for $ \varepsilon_{\tau} \in (0,1)$} \right) \\
& \geq \frac{ 2 \tau^+ \frac{n}{3} (1-\varepsilon_{\tau}) }{ n (1+\tau^+) \tau^+ } 
\quad \left(\text{since $\frac{n}{2} -1+ \eta \geq \frac{n}{3}$ if $n \geq 6$ }\right) \\
&= \frac{2 (1 - \varepsilon_{\tau})}{ 3(1+\tau^+) }. 
\end{align*}
%

\item \textbf{Lower bounding $\Lambda_{\Tbar}^{(3)} - \Lambda_{\Tbar}^{(2)}$.} 
\begin{align*}
\Lambda_{\Tbar}^{(3)} - \Lambda_{\Tbar}^{(2)}  
& = \frac{ n + 2 \tau^-  (\frac{n}{2} - \eta) }{ n + 2 \tau^+ ( \frac{n}{2} -1+ \eta ) } - \frac{ n \eta  + \tau^- (\frac{n}{2} - \eta) }{ n (1-\eta) + \tau^+  (\frac{n}{2} -1 + \eta) }  \\
& = n  \frac{ n (1-2\eta) + \tau^+ (\frac{n}{2} -1+ \eta) (1-2\eta) + \tau^- (\frac{n}{2} - \eta) (1 - 2\eta)  } {  \Big[  n + 2 \tau^+ (\frac{n}{2} -1+ \eta)  \Big]  \Big[   n (1-\eta) + \tau^+  (\frac{n}{2} -1+ \eta)   \Big]}  \\
& = n (1-2\eta)    \frac{ n + \tau^+ (\frac{n}{2} -1+ \eta) + \tau^- (\frac{n}{2} - \eta)}{  \Big[ n + 2\tau^+ \underbrace{(\frac{n}{2} -1+ \eta)}_{\leq n/2} \Big]  \Big[  \underbrace{ n (1-\eta)}_{\leq n} + \tau^+  \underbrace{(\frac{n}{2} -1+ \eta)}_{\leq n}  \Big]  }    \\
& \geq  \frac{ n (1-2\eta) ( n + \frac{\tau^+ n}{3} + \frac{\tau^- n }{3} ) }{ n^2 (1+ \tau^+)^2 }  \quad  \quad  (\text{if } n \geq 6)  \\
& = \frac{ (1 - 2 \eta) }{ 3 }  \frac{ (3 + \tau^+ + \tau^-) }{ ( 1 + \tau^+ )^2 }.
\end{align*}
\end{enumerate}
We conclude that if $\eta < 1/2$, $n \geq 6$ and 
$\tau^- (\frac{n}{2} - \eta) \leq  \varepsilon_{\tau}  \tau^+ (\frac{n}{2} -1+ \eta) $ for $ \varepsilon_{\tau} \in (0,1)$, 
then 
\begin{align*}
\lambda_{gap} 
&= \min \set{\Lambda_{\Tbar}^{(3)} - \Lambda_{\Tbar}^{(2)},  \Lambda_{\Tbar}^{(3)} - \Lambda_{\Tbar}^{(1)}} \\
&= \lambda_{n-2}(\Tbar) - \lambda_{n-1}(\Tbar) 
\geq \min \set{\frac{ (1 - 2 \eta) }{ 3 }  \frac{ (3 + \tau^+ + \tau^-) }{ ( 1 + \tau^+ )^2 }, 
\frac{2 (1 - \varepsilon_{\tau})}{ 3(1+\tau^+) }}. 
\end{align*}
This completes the proof.
\end{proof}

\subsection{Step 3: Perturbation of $\Tbar$}
%
%
\begin{lemma} [Perturbation of $\Tbar$] \label{lem:sponge_Tbar_pert} 
Let us denote 
\begin{align*}
P &= \Ln + \tau^+ \Dp,    \quad  \quad  \bar{P} = \ELn + \tau^+  \EDp \\
Q &= \Lp + \tau^- \Dn,    \quad  \quad \bar{Q} = \ELp + \tau^-  \EDn.
\end{align*}
Assume that $ ||  A^{\pm}  - \expec{ [A^{ \pm }] }    ||_2 \leq  \Delta_A  $ and 
$ ||  D^{\pm}  - \expec{ [D^{ \pm }] }    ||_2 \leq  \Delta_D$ holds. 
Moreover, let the perturbation terms $\Delta_A, \Delta_D$ satisfy
\begin{equation*}
\underbrace{ \Delta_A + \Delta_D (1+\tau^+) }_{\Delta_{AD}^{+}}    \leq   \frac{\tau^+ p}{2}    \left( \frac{n}{2} -1+ \eta \right), \quad \quad 
\underbrace{ \Delta_A + \Delta_D (1+\tau^-) }_{\Delta_{AD}^{-} }   \leq   \frac{\tau^- p}{2}    \left( \frac{n}{2} - \eta \right). 
\end{equation*}
Then, $ P, Q \succ 0 $, and the following holds true.
\begin{align*}
|| \underbrace{ P^{-1/2} Q  P^{-1/2} }_{T} -  \underbrace{ \bar{P}^{-1/2}   \bar{Q}   \bar{P}^{-1/2} }_{ \Tbar } ||_2   
& \leq  \frac{ 2\sqrt{2} ( \Delta_{AD}^+)^{1/2}    }{ [\tau^+ p ( \frac{n}{2} -1+ \eta )]^{3/2}  }  \Big(  \frac{n}{2}p + \tau^- p (\frac{n}{2} -\eta )  \Big)  
+ \frac{ \Delta_{AD}^{-} }{ \tau^+ p (\frac{n}{2} -1 +\eta ) }  \\
&  +   
\frac{ 2\sqrt{2}  \Delta_{AD}^-   ( \Delta_{AD}^+)^{1/2} }{ [\tau^+ p ( \frac{n}{2} -1+ \eta )]^{3/2}  }   
+  \quad +  \frac{ 2 \Delta_{AD}^+   \Delta_{AD}^- }{ [\tau^+ p ( \frac{n}{2} -1+ \eta )]^{2} } \\
&    + \frac{ 2 \Delta_{AD}^+  }{ [\tau^+ p ( \frac{n}{2} -1+ \eta )]^{2}  } \Big[ \frac{n}{2} p + \tau^- p ( \frac{n}{2} - \eta)  \Big].
\end{align*}
\end{lemma}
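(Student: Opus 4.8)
The plan is to reduce the claim to a spectral-norm perturbation estimate for the product $P^{-1/2}QP^{-1/2}$. Abbreviate $m := \tau^+ p(\tfrac n2 - 1 + \eta)$. First I would convert the assumed concentration bounds on $A^{\pm},D^{\pm}$ into bounds on $P-\bar P$ and $Q-\bar Q$. Since $\Ln = \Dn - \An$, we have $\Ln - \ELn = (\Dn - \EDn) - (\An - \EAn)$, so $\norm{\Ln - \ELn}_2 \le \Delta_A + \Delta_D$; as $P - \bar P = (\Ln - \ELn) + \tau^+(\Dp - \EDp)$, the triangle inequality gives $\norm{P - \bar P}_2 \le \Delta_A + \Delta_D(1+\tau^+) = \Delta_{AD}^+$, and symmetrically $\norm{Q - \bar Q}_2 \le \Delta_{AD}^-$. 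By Lemma~\ref{lem:expecs_posneg_mats}, $\bar P = \ELn + \tau^+ p(\tfrac n2 - 1 + \eta)I$, so $\lambda_{\min}(\bar P) = m$ and $\norm{\bar P^{-1}}_2 = m^{-1}$, while $\lambda_{\min}(\bar Q) = \tau^- p(\tfrac n2 - \eta)$. Weyl's inequality (Theorem~\ref{thm:Weyl}) together with the hypotheses $\Delta_{AD}^+ \le m/2$ and $\Delta_{AD}^- \le \tfrac12\tau^- p(\tfrac n2 - \eta)$ then yields $\lambda_{\min}(P) \ge m - \Delta_{AD}^+ \ge m/2 > 0$ and $\lambda_{\min}(Q) \ge \tfrac12\tau^- p(\tfrac n2 - \eta) > 0$, establishing $P,Q \succ 0$ and, crucially, $\norm{P^{-1}}_2 \le 2/m$.

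Next I would introduce $E := P^{-1/2} - \bar P^{-1/2}$ and $F := Q - \bar Q$, and expand $T - \Tbar$ through the telescoping identity
\begin{equation*}
T - \Tbar = E\,Q\,P^{-1/2} + \bar P^{-1/2}Q\,E + \bar P^{-1/2}F\,\bar P^{-1/2},
\end{equation*}
which one verifies by substituting $P^{-1/2} = \bar P^{-1/2} + E$ and $Q = \bar Q + F$. The third summand is controlled immediately by submultiplicativity of $\norm{\cdot}_2$: $\norm{\bar P^{-1/2}F\bar P^{-1/2}}_2 \le \norm{\bar P^{-1/2}}_2^2\,\norm{F}_2 \le \Delta_{AD}^-/m$, giving the second term of the claimed bound. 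For the first two summands I would use $\norm{Q}_2 \le \norm{\bar Q}_2 + \Delta_{AD}^-$ with $\norm{\bar Q}_2 = \tfrac n2 p + \tau^- p(\tfrac n2 - \eta)$ (the top eigenvalue of $\bar Q$ from Lemma~\ref{lem:expecs_posneg_mats}, using $\eta < 1/2$), together with $\norm{P^{-1/2}}_2 \le \norm{\bar P^{-1/2}}_2 + \norm{E}_2$, to obtain
\begin{equation*}
\norm{E\,Q\,P^{-1/2}}_2 + \norm{\bar P^{-1/2}Q\,E}_2 \le \norm{Q}_2\Big(\tfrac{2}{\sqrt m}\norm{E}_2 + \norm{E}_2^2\Big).
\end{equation*}

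The heart of the argument, and the step I expect to be the main obstacle, is a sharp bound on $\norm{E}_2$. I would obtain it by writing $E = (P^{-1})^{1/2} - (\bar P^{-1})^{1/2}$ and invoking the operator power inequality $\norm{X^{1/2} - Y^{1/2}}_2 \le \norm{X - Y}_2^{1/2}$ for positive definite $X,Y$, combined with the resolvent identity $P^{-1} - \bar P^{-1} = P^{-1}(\bar P - P)\bar P^{-1}$. This gives
\begin{equation*}
\norm{E}_2 \le \norm{P^{-1} - \bar P^{-1}}_2^{1/2} \le \big(\norm{P^{-1}}_2\,\norm{P - \bar P}_2\,\norm{\bar P^{-1}}_2\big)^{1/2} \le \Big(\tfrac{2\Delta_{AD}^+}{m^2}\Big)^{1/2} = \tfrac{\sqrt2\,(\Delta_{AD}^+)^{1/2}}{m}.
\end{equation*}
Substituting this, and $\norm{E}_2^2 \le 2\Delta_{AD}^+/m^2$, into the previous display and splitting $\norm{Q}_2 \le \norm{\bar Q}_2 + \Delta_{AD}^-$ separates out precisely the four remaining summands of the stated bound (two carrying $\norm{\bar Q}_2 = \tfrac n2 p + \tau^- p(\tfrac n2 - \eta)$ and two carrying $\Delta_{AD}^-$).

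The genuinely delicate points are therefore two. The first is justifying the operator power inequality for the spectral norm, which I would record as a short auxiliary lemma (its scalar analogue being $\abs{\sqrt a - \sqrt b} \le \sqrt{\abs{a-b}}$, extended to matrices via the spectral calculus). The second is bookkeeping the lower bound $\lambda_{\min}(P) \ge m/2$ carefully through $\norm{P^{-1}}_2 \le 2/m$ and $\norm{P^{-1/2}}_2 \le m^{-1/2} + \norm{E}_2$, since this is exactly what produces the constants $2\sqrt2$ and $2$ in the final expression; everything else is routine norm algebra.
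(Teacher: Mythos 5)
Your proposal is correct and follows essentially the same route as the paper's proof: both reduce the problem to bounding $E_P = P^{-1/2}-\bar{P}^{-1/2}$ and $E_Q = Q-\bar{Q}$, use Weyl's inequality plus operator monotonicity of the square root to obtain $\|E_P\|_2 \leq \sqrt{2}\,(\Delta_{AD}^{+})^{1/2}/\big(\tau^+ p(\tfrac{n}{2}-1+\eta)\big)$, and then expand $T-\Tbar$ and apply submultiplicativity of the spectral norm. The only differences are cosmetic — you apply the square-root inequality to $P^{-1},\bar{P}^{-1}$ via the resolvent identity where the paper applies it to $P,\bar{P}$ via the factorization $P^{-1/2}-\bar{P}^{-1/2}=P^{-1/2}(\bar{P}^{1/2}-P^{1/2})\bar{P}^{-1/2}$, and you group the expansion into three summands rather than seven — but both yield the identical five-term bound.
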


%
\begin{proof}
To begin with, we have via triangle inequality that 
\begin{equation*}
   ||  L^+  - \ELp ||_2  \leq  \Delta_A + \Delta_D, \quad ||  L^- - \ELn ||_2  \leq  \Delta_A + \Delta_D.
\end{equation*}
This in turn implies the bounds
\begin{align}
 ||  P - \bar{P} ||_2   &\leq  \Delta_A + \Delta_D + \tau^+ \Delta_D
 =  \Delta_A + \Delta_D (1+ \tau^+)  \quad (=: \Delta_{AD}^+), \label{eq:p_pbar_bd}\\
 ||  Q - \bar{Q} ||_2   &\leq  \Delta_A + \Delta_D + \tau^- \Delta_D
 =  \Delta_A + \Delta_D (1+ \tau^-)  \quad (=: \Delta_{AD}^-). \label{eq:q_qbar_bd}
\end{align}
By Weyl's inequality \cite{Weyl1912} (see Theorem \ref{thm:Weyl}) , it follows 
for each $l=1,\dots,n$ that 
\begin{align} \label{eq:lamda_pq_bound}
\lambda_l(P)  \in [\lambda_l(\bar{P})  \pm ( \Delta_A + \Delta_D(1+\tau^+) ),  \quad 
\lambda_l(Q)  \in [\lambda_l(\bar{Q})  \pm ( \Delta_A + \Delta_D(1+\tau^-) ).
\end{align}
By inspection, the eigenvalues of $\bar{P}, \bar{Q}$ are easily derived as detailed below.
\begin{enumerate}
\item $ \lambda_1(\bar{P})  = \lambda_1(\ELn) + \tau^+ p \Big( \frac{n}{2}-1+\eta \Big)   
 =   p n (1-\eta)  +   \tau^+  p \Big( \frac{n}{2}-1+\eta \Big).$ 

\item $\lambda_n(\bar{P}) = 0 + \tau^+ p \Big( \frac{n}{2}-1+\eta \Big)  =  \tau^+  p \Big( \frac{n}{2}-1+\eta \Big)$.

\item $\lambda_l(\bar{P}) =  \frac{n}{2} p +   \tau^+  p \Big( \frac{n}{2}-1+\eta \Big), \;\;  \forall l=2, \ldots, n-1.$ 

\item $\lambda_l(\bar{Q}) = \lambda_l(\ELp) + \tau^- p \Big( \frac{n}{2} - \eta \Big)   
=   \frac{n}{2} p   +   \tau^-  p \Big( \frac{n}{2}- \eta \Big), \;\;  \forall l=1, \ldots, n-2.$   
 
\item $\lambda_{n-1}(\bar{Q}) = p n \eta  + \tau^-  p \Big( \frac{n}{2}- \eta \Big)$ and 
$\lambda_{n}(\bar{Q}) = \tau^-  p \Big( \frac{n}{2}- \eta \Big).$ 
\end{enumerate}
Now using \eqref{eq:lamda_pq_bound} we can bound the extremal eigenvalues of $P, Q$ as follows. 
\begin{enumerate}
\item 
\begin{align*}
  \lambda_n(P) 
	& \geq  \lambda_n( \bar{P} ) - ( \Delta_A + \Delta_D (1+\tau^+)) \\ 
	& =  \tau^+ p \Big(  \frac{n}{2}-1+ \eta \Big) - ( \Delta_A + \Delta_D (1+\tau^+))
	\geq  \frac{ \tau^+ p }{ 2}  \Big(  \frac{n}{2}-1+ \eta \Big)  > 0 
\end{align*}
if $ ( \Delta_A + \Delta_D (1+\tau^+)) \leq    \frac{ \tau^+ p }{ 2}  \Big(  \frac{n}{2}-1+ \eta \Big)$ and $n \geq 2$.

\item
\begin{align*}
 \lambda_1(Q)  
  \leq    \lambda_1( \bar{Q} ) + ( \Delta_A + \Delta_D (1 - \tau^-)) 
  = \frac{n}{2} p + \tau^- p  \Big( \frac{n}{2} - \eta \Big) + ( \Delta_A + \Delta_D (1 - \tau^-)).
\end{align*}

\item
$ \lambda_n(Q) \geq  \lambda_n( \bar{Q} )     - ( \Delta_A + \Delta_D (1 + \tau^-))  
 =  \tau^- p (\frac{n}{2} - \eta)  - ( \Delta_A + \Delta_D (1 - \tau^-))   
  \geq  \frac{\tau^- p}{2}  \Big( \frac{n}{2} - \eta \Big)  > 0 $
if  $( \Delta_A + \Delta_D (1 + \tau^-)) \leq \frac{\tau^- p}{2}   \Big( \frac{n}{2} - \eta \Big)$. 

\item 
$ \lambda_1(P) \leq \lambda_1( \bar{P} ) + ( \Delta_A + \Delta_D (1 + \tau^+))
  =  p n (1-\eta)   +  \tau^+ p \Big( \frac{n}{2} -1+ \eta \Big)  +( \Delta_A + \Delta_D (1 + \tau^+)).$
\end{enumerate}
Next, we would like to bound the following quantity
\begin{equation*}
|| \underbrace{ P^{-1/2} Q  P^{-1/2} }_{T} -  \underbrace{ \bar{P}^{-1/2}   \bar{Q}   \bar{P}^{-1/2} }_{ \Tbar } ||_2
\end{equation*}
where $P, Q, \bar{P}, \bar{Q} \succ  0 $. Before proceeding, let us observe that 
as a consequence of the bounds on the spectra of $P, \bar{P}$, we obtain
\begin{equation} \label{eq:bd_invnor_P}
 ||  P^{-1/2} ||_2 \leq \Big(\frac{ 2 }{  \tau^+ p ( \frac{n}{2}-1+ \eta ) } \Big)^{1/2}, \quad 
|| \bar{P}^{-1/2}  ||_2 = \Big(\frac{ 1 }{  \tau^+ p ( \frac{n}{2}-1+ \eta ) } \Big)^{1/2}.
\end{equation}
Moreover, since $P, \bar{P} \succ 0$, therefore 
\begin{equation} \label{eq:op_monotone}
		\norm{P^{1/2}  - \bar{P}^{1/2}}_2 \leq || P - \bar{P}  ||_2^{1/2} 
\end{equation}
holds as $(\cdot)^{1/2}$ is operator monotone (see \cite[Theorem X.1.1]{bhatia1996matrix}). 
With these observations in mind, we obtain the bound  
\begin{align*}
||  P^{-1/2}  - \bar{P}^{-1/2} ||_2  
& = ||  P^{-1/2}  (  P^{1/2}  - \bar{P}^{1/2} )  \bar{P}^{-1/2}  ||_2  \\
& \leq  ||P^{-1/2} ||_2 || P^{1/2}  - \bar{P}^{1/2} ||_2  || \bar{P}^{-1/2}  ||_2  
\quad \text{(submultiplicativity of $\norm{\cdot}_2$ norm)} \\
&\leq ||  P^{-1/2} ||_2 || P - \bar{P}  ||_2^{1/2} || \bar{P}^{-1/2}  ||_2  \quad \text{(due to \eqref{eq:op_monotone})}\\
& \leq  \Big(\frac{ 2 }{  \tau^+ p ( \frac{n}{2}-1+ \eta ) } \Big)^{1/2}
           \Big(\frac{ 1 }{  \tau^+ p ( \frac{n}{2}-1+ \eta ) } \Big)^{1/2}
           	(\underbrace{ \Delta_A + \Delta_D (1 + \tau^+) }_{ \Delta_{AD}^+ })^{1/2}  \quad \text{(due to \eqref{eq:bd_invnor_P},\eqref{eq:p_pbar_bd})} \\
&  =   \frac{  \sqrt{2}   (\Delta_{AD}^+)^{1/2}   }{ \tau^+ p ( \frac{n}{2}-1+ \eta )}.
\end{align*}
Therefore, denoting  $ P^{-1/2} = \bar{P}^{-1/2}  + E_P  $ and $ Q = \bar{Q} + E_Q  $, we have shown thus far
\begin{equation*}
\norm{E_P}_2 \leq   \frac{ \sqrt{2} }{\tau^+ p ( \frac{n}{2} - 1 + \eta )} ( \Delta_{AD}^+ )^{1/2},  \quad     
\norm{E_Q}_2 \leq   \Delta_{AD}^-.  
\end{equation*}
Also, from the spectra of $\bar{Q}$ computed earlier, we see that 
$|| \bar{Q} ||_2 = \left( \frac{n}{2} p  +  \tau^- p  ( \frac{n}{2} - \eta )\right).$
Using these bounds, we can now upper bound $\norm{T - \Tbar}_2$ as follows. 
\begin{align*}
\norm{T - \Tbar}_2 &=  ||  (\bar{P}^{-1/2} + E_P) (\bar{Q} + E_Q) (\bar{P}^{-1/2} + E_P) 
- \bar{P}^{-1/2} \bar{Q} \bar{P}^{-1/2})  ||_2  \\
 &  \leq  
    || \bar{P}^{-1/2} \bar{Q}  E_P ||_2  +  || \bar{P}^{-1/2} E_Q \bar{P}^{-1/2}   ||_2 +  || \bar{P}^{-1/2} E_Q  E_P ||_2  \\
 &  \quad \quad  \quad + ||   E_P \bar{Q}  \bar{P}^{-1/2}  ||_2  + || E_P \bar{Q} E_P ||_2 +  || E_P  E_Q  \bar{P}^{-1/2} ||_2
 +  || E_P E_Q E_P ||_2 \quad \text{(triangle inequality)}  \\ 
&  \leq    2 \;  || E_P ||_2 \;  || \bar{Q} ||_2 \;  ||\bar{P}^{-1/2}  ||_2   \; \;+  \;\;  || \bar{P}^{-1/2} ||_2^2  \;  || E_Q  ||_2  \;\;  \\
&  \quad \quad \quad  + \;\;  2 || \bar{P}^{-1/2} ||_2  \;   || E_P ||_2 \;  || E_Q ||_2 \;\;
+ \;  || E_P ||_2^2 \;  || E_Q ||_2  \;\;  +  \;\;   || E_P ||_2^2  \;  || \bar{Q}  ||_2  \quad \text{(submultiplicativity of $\norm{\cdot}_2$ norm)}\\
& \leq  \frac{ 2\sqrt{2} ( \Delta_{AD}^+)^{1/2}    }{ [\tau^+ p ( \frac{n}{2} -1+ \eta )]^{3/2}  }  \Big(  \frac{n}{2}p + \tau^- p (\frac{n}{2} -\eta )  \Big)  + \frac{ \Delta_{AD}^{-} }{ \tau^+ p (\frac{n}{2} -1 +\eta ) } +   
\frac{ 2\sqrt{2}  \Delta_{AD}^-   ( \Delta_{AD}^+)^{1/2} }{ [\tau^+ p ( \frac{n}{2} -1+ \eta )]^{3/2}  }   \\ 
&  \quad \quad \quad +  \frac{ 2 \Delta_{AD}^+   \Delta_{AD}^- }{ [\tau^+ p ( \frac{n}{2} -1+ \eta )]^{2} } + \frac{ 2 \Delta_{AD}^+  }{ [\tau^+ p ( \frac{n}{2} -1+ \eta )]^{2}  } \Big[ \frac{n}{2} p + \tau^- p ( \frac{n}{2} - \eta)  \Big].
\end{align*}
\end{proof}

\subsection{Step 4: Concentration bounds for $A^+, A^-, D^+, D^-$}
%
%
\begin{lemma} \label{lem:sponge_conc_k_2}
The following holds true. 
\begin{enumerate}[nolistsep]
\item For every $0 < \varepsilon \leq 1/2$, there is a constant $c_{\varepsilon} > 0$ such that 
\begin{equation*}
\prob \Big(    ||  A^+ - \EAp ||_2  \leq      
   \underbrace{ \big(  (1 + \varepsilon)  2 \sqrt{2} +1 \big) \sqrt{np}  \big) }_{\Delta_A}
   \Big)   \geq  1 - n  \exp{ \Big(  \frac{ - p n }{ c_{\varepsilon}    } \Big)  }.
\end{equation*}
 
\item For every $0 < \varepsilon \leq 1/2$, there is a constant $c_{\varepsilon} > 0$ such that
\begin{equation*}
\prob \Big(    ||  A^- - \EAn ||_2  \leq      
   \underbrace{ \big(  (1 + \varepsilon)  2 \sqrt{2} +1 \big) \sqrt{np}  \big) }_{\Delta_A}
   \Big)   \geq  1 - n  \exp{ \Big(  \frac{ - p n }{ c_{\varepsilon}    } \Big)  }.
\end{equation*}

\item If $ p > \frac{ 6 \log n }{\frac{n}{2} -1+ \eta} $ then 
\begin{equation*}
\quad  \quad     \prob \Big(    ||  D^+ - \EDp ||_2  \leq      
   \underbrace{  \sqrt{ 3 p n \log n} }_{\Delta_D}
   \Big)   \geq  1 - \frac{2}{n} .
\end{equation*}

\item If $ p > \frac{ 6 \log n }{\frac{n}{2} - \eta} $ then 
\begin{equation*}
\prob \Big(    ||  D^- - \EDn ||_2  \leq  \underbrace{  \sqrt{ 3 p n \log n} }_{\Delta_D} \Big) \geq  1 - \frac{2}{n} .  
\end{equation*}
\end{enumerate}
\end{lemma}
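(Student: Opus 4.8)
The plan is to handle the two adjacency bounds (parts~1--2) with the random-matrix spectral-norm estimate of Theorem~\ref{app:thm_symm_rand}, and the two degree bounds (parts~3--4) with the scalar Chernoff bound of Theorem~\ref{thm:chernoff_bern} followed by a union bound over the $n$ nodes.

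For part~1 I would set $X = \Ap - \EAp$. This matrix is symmetric, its diagonal vanishes (since $\Ap_{ii}=0$), and by the SSBM the entries $(X_{ij})_{i\le j}$ are independent and centered, so Theorem~\ref{app:thm_symm_rand} applies. Each $\Ap_{ij}$ is Bernoulli with parameter $p(1-\eta)$ or $p\eta$, so $\expec[X_{ij}^2]=\mathrm{Var}(\Ap_{ij})\le p$ and $\abs{X_{ij}}\le 1$; hence I can take the upper bounds $\tilde\sigma^{(u)}=\sqrt{np}$ and $\tilde\sigma_*^{(u)}=1$. Using the remark following Theorem~\ref{app:thm_symm_rand} to substitute these upper bounds, and choosing $t=\sqrt{np}$, inequality~\eqref{eq:afonso_conc} gives
\begin{equation*}
\prob\!\Big(\norm{X}_2 \ge \big((1+\varepsilon)2\sqrt2+1\big)\sqrt{np}\Big) \le n\exp\!\Big(-\tfrac{np}{c_\varepsilon}\Big),
\end{equation*}
which is exactly the claimed bound with $\Delta_A=((1+\varepsilon)2\sqrt2+1)\sqrt{np}$. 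Part~2 is identical after replacing $\Ap$ by $\An$, since $\An_{ij}$ is again Bernoulli with parameter in $\set{p\eta,p(1-\eta)}$, giving the same variance and boundedness estimates.

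For part~3 I would use that $\Dp-\EDp$ is diagonal, so $\norm{\Dp-\EDp}_2=\max_i\abs{\Dp_{ii}-d^+}$ where $d^+=p(\tfrac n2-1+\eta)$ by Lemma~\ref{lem:expecs_posneg_mats}. For fixed $i$, $\Dp_{ii}=\sum_{j\ne i}\Ap_{ij}$ is a sum of independent Bernoulli variables with mean $d^+$, so Theorem~\ref{thm:chernoff_bern} applies. The key choice is $\delta=\sqrt{6\log n/d^+}$: the hypothesis $p>6\log n/(\tfrac n2-1+\eta)$ is precisely $d^+>6\log n$, which guarantees $\delta\in(0,1)$ as required. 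With this $\delta$ the Chernoff bound yields failure probability $2\exp(-d^+\delta^2/3)=2/n^2$, while the deviation threshold satisfies $\delta d^+=\sqrt{6\,d^+\log n}\le\sqrt{3pn\log n}$ because $d^+\le pn/2$ (as $-1+\eta\le 0$). Since a larger threshold defines a smaller event, this gives $\prob(\abs{\Dp_{ii}-d^+}\ge\sqrt{3pn\log n})\le 2/n^2$, and a union bound over the $n$ nodes produces the stated $1-2/n$. Part~4 is the same computation with $d^-=p(\tfrac n2-\eta)$ and the hypothesis $p>6\log n/(\tfrac n2-\eta)$ ensuring $d^->6\log n$ and hence $\delta\in(0,1)$, while $d^-\le pn/2$ again gives $\sqrt{6\,d^-\log n}\le\sqrt{3pn\log n}$.

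The computations themselves are routine; the one genuinely load-bearing step is the choice of the Chernoff parameter in parts~3--4. The natural-looking choice $\delta d^+=\sqrt{3pn\log n}$ does \emph{not} make $\delta<1$ under the stated hypothesis, so one must instead fix $\delta$ by pinning the failure probability to $2/n^2$ (forcing $\delta=\sqrt{6\log n/d^+}$) and only afterwards relax the resulting sharp threshold $\sqrt{6\,d^+\log n}$ up to the cleaner $\sqrt{3pn\log n}$ via $d^+\le pn/2$. Getting this ordering right is exactly what ties the lower bound on $p$ to the constraint $\delta\in(0,1)$ needed for Theorem~\ref{thm:chernoff_bern}.
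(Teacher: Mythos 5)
Your proposal is correct and follows essentially the same route as the paper: Theorem~\ref{app:thm_symm_rand} with $\tilde\sigma\le\sqrt{np}$, $\tilde\sigma_*\le 1$ and $t=\sqrt{np}$ for the adjacency matrices, and the Chernoff bound with $\delta=\sqrt{6\log n/\mu}$ followed by a union bound and the relaxation $\mu\le pn/2$ for the degree matrices. The only cosmetic difference is that you bound each entry's variance crudely by $p$ rather than computing the per-block variances explicitly, which yields the same $\sqrt{np}$ estimate.
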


%
%
\begin{proof}

\textbf{Bounding $\norm{\Ap - \EAp}_2$.} 
Recall that $\Ap$ is a symmetric matrix with $\Ap_{ii} = 0$ and where the random variables $(\Ap_{ij})_{i < j}$ are 
independent and defined in \eqref{eq:Aijp_same} (when $i,j$ are in same cluster) and 
\eqref{eq:Aijp_different} (when $i,j$ are in different clusters). 
Let us denote $Z_{ij}^+ = A_{ij}^+  -  \expec{ [ A_{ij}^+ ] }$ so that $Z_{ii}^+ = 0$, $Z_{ij}^+ = Z_{ji}^+$ 
and $(Z_{ij}^+)_{i < j}$ are independent centered random variables defined as follows.

\vspace{3mm}
\begin{minipage}{0.55\linewidth}  
$ \quad  \quad  \quad$ \underline{ $i,j$ lie in \textbf{same} cluster}
\begin{equation*}
 Z_{ij}^+= \left\{
\begin{array}{rl}
1- p (1-\eta) \quad ; & \text{w. p. }  p (1-\eta)  \\
-p (1 - \eta)  \quad ; & \text{w. p. }  1 - p (1-\eta)
\end{array} \right. ,
\end{equation*}
\end{minipage}  
\begin{minipage}{0.45\linewidth}  
$ \quad  \quad \quad$  \underline{$i,j$ lie in \textbf{different} clusters}
\begin{equation*}
  Z_{ij}^+= \left\{
\begin{array}{rl}
 1-  p \eta  \quad ; & \text{w. p } \quad p  \eta  \\
-p \eta       \quad ; & \text{w. p }  \quad 1 - p  \eta 
\end{array} \right. .
\end{equation*}
\end{minipage}
\vspace{1mm}

For $i,j$ in the same cluster, we have
\begin{align*}
	\expec{ \big[ (Z_{ij}^+)^2 \big] } 
& = p (1-\eta) \big[  1 - p (1-\eta) \big]^2     +   	 \big[  1 - p (1-\eta) \big] p^2 (1-\eta)^2   \\ 
& = p (1-\eta) \big[  1 - p (1-\eta) \big]   \big[  1 - p (1-\eta) + p (1- \eta)  \big]   \\ 
& = p (1-\eta) \big[  1 - p (1-\eta) \big] .
\end{align*}

For $i,j$ in different clusters, we have
\begin{align*}
	\expec{ \big[ (Z_{ij}^+)^2 \big] } = p  \eta   (  1 - p \eta) ^2     + p^2 \eta^2  (1- p \eta)  = p \eta   (1 - p  \eta).
\end{align*}
Hence for each $i = 1,\dots,n$ we have that 
\begin{align*}
 \sqrt{\sum_{j=1}^{n}   \expec{ \big[ (Z_{ij}^+)^2 \big] }} 
& = \sqrt{p (1-\eta)  \big[  1 - p (1-\eta) \big]  \big( \frac{n}{2}-1 \big)   + \frac{n}{2} p \eta  (1 - p \eta)}   \\ 
& \leq \sqrt{\frac{n}{2} p \Big[ \underbrace{  (1-\eta)    \big[  1 - p (1-\eta) \big]}_{\leq 1}  +  \underbrace{ \eta  (1 - p \eta ) }_{\leq 1} \Big]} 
\leq  \sqrt{np}.
\end{align*}
%
Hence, $\tilde{\sigma}^{+} := \max_i \sqrt{\sum_{j=1}^{n}   \expec{ \big[ (Z_{ij}^+)^2 \big] }} \leq \sqrt{np}$. Moreover, $\tilde{\sigma}^{+}_{*} := \max_{i,j} \norm{Z_{ij}^+}_{\infty} \leq 1$. 
Therefore we can bound $\norm{Z^+}_2 = \norm{\Ap - \EAp}_2$ using Theorem \ref{app:thm_symm_rand} (with $t = \sqrt{np}$)
which tells us that for any given $0 < \varepsilon \leq 1/2$, 
\begin{equation*}
|| A^+ - \EAp ||_2 \leq \big(  (1 + \varepsilon)  2 \sqrt{2} +1 \big) \sqrt{np} 
\end{equation*}
with probability at least  $ 1 - n\exp{ \Big( \frac{ - p n }{c_{\varepsilon}} \Big)}$. 
Here $c_{\varepsilon} > 0$ depends only on $\varepsilon$.

\paragraph{Bounding $ \norm{\An  -  \EAn}_2$.}
Using the mixture model defined in  \eqref{eq:Aijn_same} and  \eqref{eq:Aijn_different} 
for the subgraph of negative edges, we can proceed in an identical fashion as above by 
replacing $\eta$  with $ 1 - \eta $. We then obtain for any given $0 < \varepsilon \leq 1/2$ that  
\begin{equation}
||  A^-  - \EAn ||_2  \leq   \big(  (1 + \varepsilon)  2 \sqrt{2} +1 \big) \sqrt{np} .
\end{equation}
with probability at least  $ 1 - n\exp{ \Big(\frac{ - p n }{c_{\varepsilon} } \Big)}$.

\paragraph{Bounding $  ||  \Dp  -  \EDp  ||_2$.}
Note that for any given $i$, $ D^+_{ii} = \sum_{j=1}^{n} A_{ij}^+$ with $ (A_{ij}^+)_{j=1}^n $ being independent 
Bernoulli random variables. Denoting $\mu = \expec{ [ D^+_{ii} ] } = p ( \frac{n}{2} - 1 + \eta ) $, 
we then obtain via standard Chernoff bounds (see Theorem \ref{thm:chernoff_bern}) that 
\begin{equation*}
 \prob( |  D^+_{ii} - \mu |  \geq \delta \mu  ) \leq 2 \exp \Big( - \frac{\mu \delta^2}{3}  \Big) 
 \end{equation*}
for any given $\delta \in (0,1)$. 
Letting $ \delta = \sqrt{  \frac{ 6 \log n}{ \mu}  }$ and assuming $ p > \frac{ 6 \log n }{\frac{n}{2} - 1 + \eta} $ 
(so $\delta \in (0,1)$), we have for any given $i$ that
\begin{equation*}
\prob \Big( \big| D_{ii}^+ - \mu  \big|   \geq  \sqrt{ 6 (\log n) \mu } \Big)   \leq 2 \exp{ \big(-2 \log n \big) } = \frac{2}{ n^2 }. 
\end{equation*}
Then by applying the union bound, we finally conclude that
\begin{equation*}
   ||  D^+ - \EDp  ||_2 \leq  \sqrt{ 6 p \Big( \frac{n}{2} - 1 + \eta \Big) \log n  } \leq  \sqrt{ 3 n p \log n }
\end{equation*}
with probability at least $ 1 - \frac{2}{n} $.

\paragraph{Bounding $  ||  D^-  -  \EDn  ||_2$.}
For this quantity, we obtain the same bound as above with $\eta$ replaced with $ 1-\eta $. So we have that 
\begin{equation*}
   ||  D^-  - \EDn  ||_2 \leq  \sqrt{ 6 p \Big( \frac{n}{2} - \eta \Big) \log n  } \leq  \sqrt{ 3 n p \log n }
\end{equation*}
with probability at least $ 1 - \frac{2}{n} $ if $ p > \frac{ 6 \log n }{\frac{n}{2} - \eta} $. This completes the proof.
\end{proof}

%
%
\subsection{Step 5: Putting it together}
From Lemma \ref{lem:sponge_conc_k_2}, we can see via the union bound that all the events 
hold simultaneously with probability at least 
\begin{equation}
    1 - \frac{4}{n}  - 2n \exp{  \Big(    \frac{-p n }{ c_{\varepsilon} } \Big) }, 
\end{equation}
provided $ p > \frac{6 \log n}{\min \set{\frac{n}{2} -1+ \eta, \frac{n}{2} - \eta}}$. 
For $0 \leq \eta < 1/2$, we have $\frac{n}{2} - 1+ \eta < \frac{n}{2} - \eta$. 
Moreover, if $ n \geq 6 $, then $\frac{n}{2} -1+ \eta > \frac{n}{4}$, 
and so the condition $ p \geq  \frac{24  \log n}{n}$ clearly 
implies $p > \frac{6 \log n}{\frac{n}{2} - 1 + \eta}$.

Let us now look at the requirements in Lemma \ref{lem:sponge_Tbar_pert}. Plugging 
$$\Delta_A = \big(  (1 + \varepsilon)  2 \sqrt{2} +1 \big) \sqrt{np}, \quad 
\Delta_D = \sqrt{ 3 p n \log n}$$ 
from Lemma \ref{lem:sponge_conc_k_2}, and using the definition of $\Delta_{AD}^{+}$, we obtain 
\begin{align*}
\Delta_{AD}^{+} 
& =   \big(  (1 + \varepsilon)  2 \sqrt{2} +1 \big) \sqrt{np} + \sqrt{ 3 p n \log n }  \\
& \leq  \Big(  \big(  (1 + \varepsilon)  2 \sqrt{2} +1 \big) \sqrt{p}  + \sqrt{3p} \Big)  \sqrt{n \log n }  \\
&=  \tce  \sqrt{ n p \log n } 
\end{align*}
where $\tce = (1 + \varepsilon) 2 \sqrt{2} +1  +\sqrt{3}$. Now note that 
if $ n \geq 6 $, then 
\begin{equation}
\frac{ \tau^+ p }{ 2 } \left(\frac{n}{2} - 1 + \eta \right)  
\geq \frac{ \tau^+ p }{ 2 } \left(  \frac{n}{3}  \right)  
= \frac{ \tau^+  n p }{ 6 }.
\end{equation} 
Therefore the condition $\Delta_{AD}^{+}  \leq  \frac{ \tau^+ p }{ 2 }  \Big(\frac{n}{2} -1+ \eta \Big)$
is satisfied if 
\begin{align*}
	  \tce  \sqrt{ n p \log n } \leq   \frac{ \tau^+ n p }{6}  \Leftrightarrow 
	   \sqrt{   \frac{np}{\log n}  } \geq   \frac{6   \tilde{c}_{\varepsilon}}{ \tau^+ } 
	  \Leftrightarrow  p \geq  \bigg(\frac{36 \; \tilde{c}_{\varepsilon}^2 }{ (\tau^+)^2 } \bigg) \frac{\log n}{n}.	
\end{align*} 
In an identical fashion, one can readily verify that the condition 
\begin{equation*}
		\Delta_{AD}^{-}  \leq    \frac{ \tau^- p }{ 2 } \Big(  \frac{n}{2} -\eta \Big) 
\end{equation*} 
is satisfied if $ p  \geq  \bigg(   \frac{36 \; \tilde{c}_{\varepsilon}^2 }{ (\tau^-)^2 } \bigg) \frac{\log n}{n}. $  
Since $\frac{n}{2} - 1+ \eta \geq \frac{n}{3}$ holds if $n \geq 6$, then by using this along with the bounds 
$\Delta_{AD}^{+},  \Delta_{AD}^{-}  \leq   \tce  \sqrt{n p \log n }$ 
in Lemma \ref{lem:sponge_Tbar_pert}, we obtain 
\begin{align*}
|| \underbrace{ P^{-1/2} Q  P^{-1/2} }_{T} -  \underbrace{ \bar{P}^{-1/2}   \bar{Q}   \bar{P}^{-1/2} }_{ \Tbar } ||_2   
& \leq  \frac{ 2 \sqrt{2}  \tilde{c}_{\epsilon}^{1/2}  ( n p \log n)^{1/4}  ( \frac{n}{2} p (1+ \tau^-) )  }{  \Big( \tau^+ p \frac{n}{3} \Big)^{3/2} } +  \frac{ \tce \sqrt{ n p \log n } }{ \tau^+ p \frac{n}{3} }   \\
&  \quad \quad \quad +  \frac{ 2 \sqrt{2} \;  \tce^{3/2}  (n p \log n)^{3/4} }{  \Big( \tau^+ p \frac{n}{3} \Big)^{3/2}  }     
    +  \frac{ 2 \tce^2 \;  (n p \log n) }{  \Big( \tau^+ p \frac{n}{3} \Big)^{2} } \\
&  \quad \quad \quad  \quad \quad \quad \quad  +  \frac{ 2 \tce  \sqrt{ n p \log n }  \Big( \frac{n}{2} p (1+\tau^-) \Big)  }{   \Big( \tau^+ p \frac{n}{3} \Big)^{3/2} } \\
& =  \frac{ 3^{3/2} \; \tce^{1/2}  (1+\tau^-) }{ (\tau^+)^{3/2} } \Big( \frac{\log n}{np} \Big)^{1/4}  + 
     \frac{ 3 \; \tce }{ \tau^+} \Big( \frac{\log n}{np} \Big)^{1/2}  \\
 &       + 
\frac{ 6^{3/2} \; \tce^{3/2} }{ (\tau^+)^{3/2} } \Big( \frac{\log n}{np} \Big)^{3/4}
+   \frac{ 18 \; \tce^{2}}{ (\tau^+)^{2} } \Big( \frac{\log n}{np} \Big) + 
\frac{ 9 \tce (1+\tau^-) } { ( \tau^+)^2 }  \Big(  \frac{\log n}{ np }  \Big)^{1/2}
\end{align*} 
Since $\frac{\log n}{np} \leq 1$ by assumption, the above bound simplifies to 
\begin{align*}
|| P^{-1/2} Q  P^{-1/2}  -   \bar{P}^{-1/2}   \bar{Q}   \bar{P}^{-1/2}  ||_2 
& \leq  \Big( \frac{ 3^{3/2} \sqrt{2} \; \tce^{1/2} (1+\tau^-) } {  (\tau^+)^{3/2}   } + 
\frac{3 \tce }{ \tau^+ } +  \frac{6^{3/2} \; \tce^{3/2} }{ (\tau^+)^{3/2} }  \\
& + \frac{18 \; \tce^{2} }{ (\tau^+)^{2} } + \frac{9 \; \tce (1+\tau^-) }{ (\tau^+)^{2} } 
\Big )   \Big(  \frac{\log n}{np} \Big)^{1/4}   \\
& = \bar{c}(\varepsilon, \tau^+, \tau^-)   \Big(  \frac{\log n}{np} \Big)^{1/4}. 
\end{align*} 
where $\bar{c}(\varepsilon, \tau^+, \tau^-)$ is as defined in the statement of Theorem \ref{thm:sponge_k_2_bot_k}.  
To summarize, so far, we have seen that $T =  \bar{T} + R =   \bar{P}^{-1/2} \bar{Q} \bar{P}^{-1/2}  + R $
with $ || R ||_2  \leq  \bar{c}(\varepsilon, \tau^+, \tau^-)  \Big(  \frac{\log n}{np} \Big)^{1/4}.$

Let $ \Big(\lambda_i(T), v_i(T) \Big) $ denote the eigenpairs of $T$ for $i=1,\dots,n$. 
Using Weyl's inequality \cite{Weyl1912} (see Theorem \ref{thm:Weyl}), we obtain 
\begin{equation} \label{eq:weyls_eigbd_T}
	\lambda_i(T) \in \bigg[  \lambda_i( \bar{T}) \pm  \bar{c}(\varepsilon, \tau^+, \tau^-)  
	\Big(  \frac{\log n}{np} \Big)^{1/4} \bigg]  \quad  \forall i = 1,\dots,n.
\end{equation}
In particular, this means that 
\begin{equation*}
	\lambda_{n-2}(T)  \geq 	\lambda_{n-2}(\Tbar) - \bar{c}(\varepsilon, \tau^+, \tau^-)  
	\Big(  \frac{\log n}{np} \Big)^{1/4} > \lambda_{n-1}(\Tbar)
\end{equation*}
if the following condition holds.
\begin{equation}
\bar{c}(\varepsilon, \tau^+, \tau^-) \Big(\frac{\log n}{np} \Big)^{1/4} 
< \underbrace{\lambda_{n-2}(\Tbar) -  \lambda_{n-1}(\Tbar)  }_{ = \lambda_{gap}}.
\label{eq:tce_star_cond}
\end{equation}
\begin{figure}[h!]
\begin{center}
\includegraphics[width=0.5\textwidth]{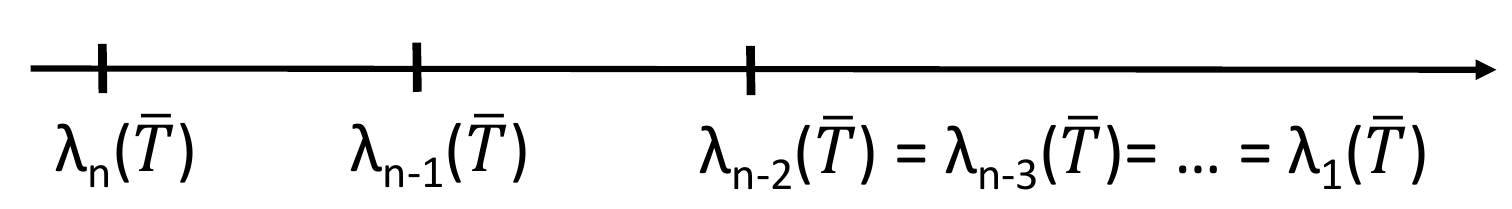}
\caption{Spectrum of $\Tbar.$}
\end{center}
\label{fig:FitLambdas}
\end{figure}

Now let $ V_2(T), V_2(\bar{T}) \in \mathbb{R}^{n \times 2} $ denote matrices whose columns 
are the eigenvectors corresponding to the  smallest two eigenvalues of $T, \Tbar$ respectively. 
We then obtain from the Davis-Kahan theorem \cite{daviskahan} (see Theorem \ref{thm:DavisKahan}) that
\begin{equation} \label{eq:sintheta_bd_sponge1}
	||  \sin \Theta \big( \calR(V_2(T)), \calR(V_2(\bar{T}))  \big)   ||_2 = \norm{( I - V_2(T) V_2(T)^T) V_2(\bar{T})}_2 
	\leq \frac{ || R ||_2 }{ \delta }  \leq  \frac{ \bar{c}(\epsilon, \tau^+, \tau^-)  \Big(  \frac{\log n}{np} \Big)^{1/4}  } { \delta }
\end{equation}
holds, provided $\delta :=  \min \Big\{ | \hat{\lambda} - \lambda |:  \lambda \in [ \lambda_{n}(\Tbar) , \lambda_{n-1}(\Tbar) ],  \hat{ \lambda } \in ( -\infty,  \underbrace{ \lambda_{n+1}(T)}_{ = -\infty} ] \cup  [ \lambda_{n-2}(T) , \infty )   \Big\} > 0$. 
If  \eqref{eq:tce_star_cond}  holds then we can see that 
\begin{align*}
\delta & =  \lambda_{n-2}(T) - \lambda_{n-1}(\Tbar)  \\
&\geq \lambda_{n-2}(\Tbar) -  \lambda_{n-1}(\Tbar)  -   \bar{c}(\epsilon, \tau^+, \tau^-)     \Big(  \frac{\log n}{np} \Big)^{1/4}  \quad \quad \text{(using \eqref{eq:weyls_eigbd_T})} \\ 
&> 0.
\end{align*}
Recall the lower bound on $\lambda_{n-2}(\Tbar) -  \lambda_{n-1}(\Tbar)$ from Lemma \ref{lem:sponge_Tbar_spectra}. 
Hence for any given $\epsilon \in (0,1)$, if the condition
\begin{align}
	\bar{c}(\varepsilon, \tau^+, \tau^-) \Big(  \frac{\log n}{np} \Big)^{1/4}   
	&\leq  
	\epsilon   \min  \Big\{   \frac{2}{3} \frac{ (1-\varepsilon_{\tau}) } { (1 + \tau^+  ) },  
	\frac{(1-2 \eta)}{3}   \frac{ (3 + \tau^+ + \tau^- ) }{ (1+\tau^+)^2 }    \Big\}  \label{eq:cbar_cond_1}\\
\Leftrightarrow 
p &\geq  	 \Bigg( \frac{ \bar{c}(\varepsilon, \tau^+, \tau^-) }{\epsilon   \min  \Big\{\frac{2}{3} \frac{ (1-\varepsilon_{\tau}) } { (1 + \tau^+  ) },  \frac{(1-2 \eta)}{3}   \frac{ (3 + \tau^+ + \tau^- ) }{ (1+\tau^+)^2 }    \Big\}   }    \Bigg)^4  \Big( \frac{\log n}{n} \Big)	\nonumber
\end{align}
holds, then $\delta$ can be lower bounded as
\begin{align} \label{eq:delt_low_bd}
\delta & \geq  (1 - \epsilon)  \min  \Big\{   \frac{2}{3} \frac{ (1-\varepsilon_{\tau}) } { (1 + \tau^+ ) },  \frac{(1-2 \eta)}{3}   \frac{ (3 + \tau^+ + \tau^- ) }{ (1+\tau^+)^2 }    \Big\}. 
\end{align}
Using \eqref{eq:delt_low_bd}, \eqref{eq:cbar_cond_1} in \eqref{eq:sintheta_bd_sponge1} we obtain the 
stated error bound on $\norm{( I - V_2(T) V_2(T)^T) V_2(\bar{T})}_2$. 
This completes the proof.
%
%
%
\section{Proof of Theorem \ref{thm:sponge_k_2_bot_k1_short}}
We will prove the following more precise version of Theorem \ref{thm:sponge_k_2_bot_k1_short} in this section.
\begin{theorem} \label{thm:sponge_k_2_bot_k1}
Assuming $\eta \in [0, 1/2)$ let $ \tau^+, \tau^- > 0$ satisfy 
$$\tau^- >  \left(\frac{\eta}{1-\eta}\right)   
\Big( \frac{ \frac{n}{2} -1 + \eta  }{ \frac{n}{2} - \eta } \Big) \tau^+.$$ 
Then it holds that $v_{n}(\Tbar) = \infovec$ with $\infovec$ defined in \eqref{eq:k_2_inform_vect}. 

Moreover, assuming $n \geq 6 $, for given $0 < \varepsilon \leq 1/2,  \epsilon \in (0,1) $ and $\varepsilon_{\tau} \in (0,1)$ let $\tau^- \geq  \frac{1}{\varepsilon_{\tau}} \left(\frac{\eta}{1-\eta}\right)   
\Big( \frac{ \frac{n}{2} -1 + \eta  }{ \frac{n}{2} - \eta } \Big) \tau^+$ and
\begin{align*}
p  \geq   
\max  \left\{ 
24,  
\frac{36 \tce^2}{ (\tau^+)^2},   
\frac{36 \tce^2}{ (\tau^-)^2},  
 \quad \left(  \frac{ \bar{c}(\varepsilon, \tau^+, \tau^-) }{ \epsilon   
\min  \Big\{ \frac{ \eta (\frac{1}{\varepsilon_{\tau}} - 1) } { 1 - \eta + \tau^+   },  \frac{(1-2 \eta)}{3}   \frac{ (3 + \tau^+ + \tau^- ) }{ (1+\tau^+)^2 }    \Big\}   }    \right)^4 
\right\}     
\Big( \frac{\log n}{n} \Big), 
\end{align*}
where $\tce$ and $\bar{c}(\varepsilon, \tau^+, \tau^-)$ are as defined in Theorem \ref{thm:sponge_k_2_bot_k}. 
Then for $c_{\varepsilon} > 0$  depending only on $\varepsilon$, 
it holds with probability at least 
$\Big( 1 - \frac{4}{n} - 2n \exp{ \big( \frac{ - p  n }{c_{\varepsilon}} \big) }  \Big)$ that

\vspace{-3mm}
\begin{equation*}
	\norm{( I - v_n(T) v_n(T)^T) w}_2  \leq  \frac{ \epsilon }{ 1 - \epsilon}. 
\end{equation*}
\end{theorem}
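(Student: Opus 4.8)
The plan is to mirror the five-step proof of Theorem~\ref{thm:sponge_k_2_bot_k} almost verbatim, since Lemma~\ref{lem:expecs_posneg_mats} (spectra of the expected matrices), Lemma~\ref{lem:sponge_Tbar_pert} (perturbation of $\Tbar$), and Lemma~\ref{lem:sponge_conc_k_2} (concentration of $A^{\pm}, D^{\pm}$) are all independent of how many bottom eigenvectors we retain. The only genuinely new ingredients are (i) identifying $\infovec$ as the \emph{single} smallest eigenvector of $\Tbar$, and (ii) the corresponding spectral gap, which is now the gap between $\lambda_n(\Tbar)$ and the \emph{second} smallest eigenvalue of $\Tbar$.

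First I would establish $v_n(\Tbar) = \infovec$. Recall from the proof of Lemma~\ref{lem:sponge_Tbar_spectra} that $\Tbar$ has three distinct eigenvalues $\Lambda_{\Tbar}^{(1)}$ (eigenvector $\tfrac{1}{\sqrt n}\ones$), $\Lambda_{\Tbar}^{(2)}$ (eigenvector $\infovec$), and the bulk value $\Lambda_{\Tbar}^{(3)}$. The computation \eqref{Stage2_c3} already gives $\Lambda_{\Tbar}^{(2)} < \Lambda_{\Tbar}^{(3)}$ whenever $\eta<1/2$ and $n\geq 2$, with no constraint on $\tau^{\pm}$. The new hypothesis $\tau^- > \big(\tfrac{\eta}{1-\eta}\big)\big(\tfrac{\frac{n}{2}-1+\eta}{\frac{n}{2}-\eta}\big)\tau^+$ is exactly inequality \eqref{Stage2_a1}, i.e. $\Lambda_{\Tbar}^{(1)} > \Lambda_{\Tbar}^{(2)}$. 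Together these give $\Lambda_{\Tbar}^{(2)} < \min\{\Lambda_{\Tbar}^{(1)},\Lambda_{\Tbar}^{(3)}\}$, so $\infovec$ is the eigenvector of the strictly smallest eigenvalue, proving $v_n(\Tbar)=\infovec$.

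Next I would lower bound the spectral gap $\lambda_{gap} := \lambda_{n-1}(\Tbar) - \lambda_n(\Tbar) = \min\{\Lambda_{\Tbar}^{(1)} - \Lambda_{\Tbar}^{(2)},\ \Lambda_{\Tbar}^{(3)} - \Lambda_{\Tbar}^{(2)}\}$. The second term is bounded below by $\tfrac{(1-2\eta)}{3}\tfrac{(3+\tau^++\tau^-)}{(1+\tau^+)^2}$, reusing the computation of $\Lambda_{\Tbar}^{(3)} - \Lambda_{\Tbar}^{(2)}$ in Lemma~\ref{lem:sponge_Tbar_spectra}. The first term is the new calculation: writing $a=\tau^+(\tfrac{n}{2}-1+\eta)$ and $b=\tau^-(\tfrac{n}{2}-\eta)$, one finds
\[
\Lambda_{\Tbar}^{(1)} - \Lambda_{\Tbar}^{(2)} = \frac{n\,[\,b(1-\eta)-a\eta\,]}{a\,[\,n(1-\eta)+a\,]}.
\]
The strengthened hypothesis $\tau^- \geq \tfrac{1}{\varepsilon_{\tau}}\big(\tfrac{\eta}{1-\eta}\big)\big(\tfrac{\frac{n}{2}-1+\eta}{\frac{n}{2}-\eta}\big)\tau^+$ yields $b(1-\eta)-a\eta \geq \eta\,\tau^+(\tfrac{n}{2}-1+\eta)\big(\tfrac{1}{\varepsilon_{\tau}}-1\big)$; substituting and bounding the denominator via $n(1-\eta)+a \leq n(1-\eta+\tau^+)$ gives $\Lambda_{\Tbar}^{(1)} - \Lambda_{\Tbar}^{(2)} \geq \tfrac{\eta(\frac{1}{\varepsilon_{\tau}}-1)}{1-\eta+\tau^+}$, which is the first branch of the advertised gap.

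Finally, Steps~3--5 carry over unchanged: Lemmas~\ref{lem:sponge_Tbar_pert} and~\ref{lem:sponge_conc_k_2} give $\norm{T-\Tbar}_2 = \norm{R}_2 \leq \bar{c}(\varepsilon,\tau^+,\tau^-)\big(\tfrac{\log n}{np}\big)^{1/4}$ on the same high-probability event, under the same lower bounds on $p$. I would then apply the Davis--Kahan theorem (Theorem~\ref{thm:DavisKahan}) with $r=s=n$: here the separation is $\delta = \lambda_{n-1}(T) - \lambda_n(\Tbar)$ (since $\lambda_{n+1}(T)=-\infty$), and Weyl's inequality (Theorem~\ref{thm:Weyl}) gives $\delta \geq \lambda_{gap} - \norm{R}_2$. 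Imposing $\norm{R}_2 \leq \epsilon\,\lambda_{gap}$ — which is precisely the stated lower bound on $p$ after inserting the gap estimate — yields $\delta \geq (1-\epsilon)\lambda_{gap}$ and hence $\norm{(I - v_n(T)v_n(T)^T)\infovec}_2 \leq \norm{R}_2/\delta \leq \epsilon/(1-\epsilon)$. I expect the single nontrivial step to be the lower bound on $\Lambda_{\Tbar}^{(1)} - \Lambda_{\Tbar}^{(2)}$ above, with the main point of care being the correct direction of the gap condition \eqref{Stage2_a1} relative to \eqref{Stage2_b2}; everything else is a transcription of the $k=2$, bottom-two-eigenvector argument to the one-dimensional setting.
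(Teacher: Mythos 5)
Your proposal is correct and follows essentially the same route as the paper: the paper likewise reduces everything to an analogue of Lemma~\ref{lem:sponge_Tbar_spectra} (its Lemma~\ref{lem:sponge_Tbar_spectra_botk_minus1}), obtains the identical expression $\Lambda_{\Tbar}^{(1)}-\Lambda_{\Tbar}^{(2)} = n[\tau^-(1-\eta)(\tfrac{n}{2}-\eta)-\tau^+\eta(\tfrac{n}{2}-1+\eta)]\,/\,\bigl(\tau^+(\tfrac{n}{2}-1+\eta)[n(1-\eta)+\tau^+(\tfrac{n}{2}-1+\eta)]\bigr)$ and the same lower bound $\eta(\tfrac{1}{\varepsilon_\tau}-1)/(1-\eta+\tau^+)$, and then reuses Lemmas~\ref{lem:sponge_Tbar_pert} and~\ref{lem:sponge_conc_k_2} together with Weyl and Davis--Kahan exactly as you describe. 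No gaps.
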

%
%
\begin{proof}[Proof of Theorem \ref{thm:sponge_k_2_bot_k1}]
The proof is identical to that of Theorem \ref{thm:sponge_k_2_bot_k} barring minor changes, hence we only 
highlight the differences. 

To begin with, Lemma \ref{lem:expecs_posneg_mats} holds as it is. Lemma \ref{lem:sponge_Tbar_spectra} changes however as we now 
seek conditions under which $w$ is the smallest eigenvector. This is stated in the following Lemma.
%
%
%
\begin{lemma}[Analogue of Lemma \ref{lem:sponge_Tbar_spectra}] \label{lem:sponge_Tbar_spectra_botk_minus1}
Let $\tau^+, \tau^- >0$ satisfy 
$\tau^-  >    \frac{\eta}{1-\eta} \bigg( \frac{  \frac{n}{2} - 1 + \eta }{ \frac{n}{2} - \eta  } \bigg) \tau^+$.  Then for $\eta \in [0, \frac{1}{2})$, the following is true.
\begin{enumerate}
\item $\lambda_{n}(\Tbar) = \frac{n\eta + \tau^- (\frac{n}{2} - \eta)}{n(1-\eta) + \tau^+ (\frac{n}{2} - 1 + \eta)}$ and
$\lambda_{l}(\Tbar) \in \set{\frac{\tau^-(\frac{n}{2} - \eta)}{\tau^+(\frac{n}{2} - 1 + \eta)}, 
\frac{ n + 2 \tau^- ( \frac{n}{2} - \eta )}{ n + 2 \tau^+ ( \frac{n}{2} - 1 +  \eta ) }},$ 
for $l=1,\dots,n-1$.

\item $v_n(\Tbar) =  \infovec$.
\end{enumerate}
Moreover, if $ n \geq 6$ and for a given $\varepsilon_{\tau} \in (0,1)$, 
$ \tau^-  \geq  \frac{1}{\varepsilon_{\tau}} \frac{\eta}{1-\eta} \bigg( \frac{\frac{n}{2} - 1 + \eta}{\frac{n}{2} - \eta} \bigg) \tau^+$ holds, 
then the \textit{spectral gap} ($\lambda_{gap}$) between $\lambda_{n}(\Tbar)$ and  
and $\set{\lambda_{n-1}(\Tbar), \dots, \lambda_{1}(\Tbar)}$ satisfies 
\begin{equation*}
\lambda_{gap} =  \lambda_{n-1}(\Tbar) -  \lambda_{n}(\Tbar) \geq \min 
\Big\{ \frac{ \eta (\frac{1}{\varepsilon_{\tau}} - 1) } { 1 - \eta + \tau^+   },  \frac{(1-2 \eta)}{3}   \frac{ (3 + \tau^+ + \tau^- ) }{ (1+\tau^+)^2 }    \Big\}.
\end{equation*}
\end{lemma}
The proof of the Lemma is deferred to end of this section. Carrying on, Lemma's \ref{lem:sponge_Tbar_pert}, 
\ref{lem:sponge_conc_k_2} remain unchanged so we now just need to combine the results of 
Lemma's \ref{lem:expecs_posneg_mats}, \ref{lem:sponge_Tbar_spectra_botk_minus1}, 
\ref{lem:sponge_Tbar_pert}, \ref{lem:sponge_conc_k_2}.  

Recall the bounds on the eigenvalues of $T$ as in \eqref{eq:weyls_eigbd_T}. This in particular implies 
that 
\begin{equation*}
	\lambda_{n-1}(T)  \geq 	\lambda_{n-1}(\Tbar) - \bar{c}(\varepsilon, \tau^+, \tau^-)  
	\Big(  \frac{\log n}{np} \Big)^{1/4} > \lambda_{n}(\Tbar)
\end{equation*}
if the following condition holds.
\begin{equation}
\bar{c}(\varepsilon, \tau^+, \tau^-) \Big(\frac{\log n}{np} \Big)^{1/4} 
< \underbrace{\lambda_{n-1}(\Tbar) -  \lambda_{n}(\Tbar)  }_{ = \lambda_{gap}}.
\label{eq:tce_star_cond_1}
\end{equation} 
We then obtain from the Davis-Kahan theorem \cite{daviskahan} (see Theorem \ref{thm:DavisKahan}) that
\begin{equation} \label{eq:sintheta_bd_sponge_kminus1_1}
	\norm{( I - w w^T) v_n(T)}_2 
  \leq  \frac{ \bar{c}(\epsilon, \tau^+, \tau^-)  \Big(  \frac{\log n}{np} \Big)^{1/4}  } { \delta }
\end{equation}
holds provided $\delta:= \min \Big\{ | \hat{\lambda} - \lambda |:  \lambda =  \lambda_{n}(\Tbar),  
\hat{ \lambda } \in  [ \lambda_{n-1}(T) , \infty )   \Big\} > 0$. Note that if \eqref{eq:tce_star_cond_1} holds, then 
$$\delta = \lambda_{n-1}(T) - \lambda_{n}(\Tbar) \geq \lambda_{n-1}(\Tbar) -  \lambda_{n}(\Tbar)  -   \bar{c}(\epsilon, \tau^+, \tau^-)     \Big(  \frac{\log n}{np} \Big)^{1/4}> 0.$$ 
Recall the lower bound on $\lambda_{n-1}(\Tbar) -  \lambda_{n}(\Tbar)$ from 
Lemma \ref{lem:sponge_Tbar_spectra_botk_minus1}. 
We then see for any given $\epsilon \in (0,1)$ that if the condition 
\begin{align}
\bar{c}(\varepsilon, \tau^+, \tau^-) \Big(\frac{\log n}{np} \Big)^{1/4} 
&\leq 
\epsilon \min \Big\{ \frac{ \eta (\frac{1}{\varepsilon_{\tau}} - 1) } { 1 - \eta + \tau^+ },  
\frac{(1-2 \eta)}{3}   \frac{ (3 + \tau^+ + \tau^- )}{ (1+\tau^+)^2 } \Big\} \label{eq:cbar_cond_2}\\
\Leftrightarrow p &\geq \left(  \frac{ \bar{c}(\varepsilon, \tau^+, \tau^-) }{ \epsilon   
\min  \Big\{ \frac{ \eta (\frac{1}{\varepsilon_{\tau}} - 1) } { 1 - \eta + \tau^+   },  \frac{(1-2 \eta)}{3}   \frac{ (3 + \tau^+ + \tau^- ) }{ (1+\tau^+)^2 } \Big\}} \right)^4 \frac{\log n}{n} \nonumber
\end{align}
holds then $\delta$ can be lower bounded as
\begin{align} \label{eq:delt_low_bd_kminus1}
\delta & \geq  (1 - \epsilon)  \Big\{ \frac{ \eta (\frac{1}{\varepsilon_{\tau}} - 1) } { 1 - \eta + \tau^+ },  
\frac{(1-2 \eta)}{3}   \frac{ (3 + \tau^+ + \tau^- )}{ (1+\tau^+)^2 } \Big\}. 
\end{align}
Finally, using \eqref{eq:cbar_cond_2}, \eqref{eq:delt_low_bd_kminus1} in \eqref{eq:sintheta_bd_sponge_kminus1_1}, we 
obtain the stated bound on $\norm{( I - w w^T) v_n(T)}_2$. This completes the proof.
\end{proof}

\begin{proof}[Proof of Lemma \ref{lem:sponge_Tbar_spectra_botk_minus1}]
In the proof of Lemma \ref{lem:sponge_Tbar_spectra}, recall that $\Lambda_{\Tbar}^{(2)}$ is the eigenvalue of 
$\Tbar$ corresponding to the eigenvector $w$. We can see from \eqref{Stage2_a1}, \eqref{Stage2_c3} that 
$\Lambda_{\Tbar}^{(2)} < \Lambda_{\Tbar}^{(1)}, \Lambda_{\Tbar}^{(3)}$ holds if 
$n \geq 2, \eta \in [0,1/2)$ and $\tau^-,\tau^+$ satisfy 
$ \tau^-  > \tau^+  \bigg( \frac{ \eta  ( \frac{n}{2} - 1 + \eta )   }{  (1-\eta) ( \frac{n}{2} - \eta )  } \bigg)$. 
 
Now, recall that if $n \geq 6$, then 
$\Lambda_{\Tbar}^{(3)} - \Lambda_{\Tbar}^{(2)} \geq \frac{ (1 - 2 \eta) }{ 3 }  \frac{ (3 + \tau^+ + \tau^-) }{ ( 1 + \tau^+ )^2 }.$ 
If $\tau^{-},\tau^+$ additionally satisfy 
\begin{equation} \label{eq:taupn_kminus1_cond}
\tau^-  \geq \frac{1}{\varepsilon_{\tau}} \tau^+  \bigg( \frac{ \eta  ( \frac{n}{2} - 1 + \eta )   }{  (1-\eta) ( \frac{n}{2} - \eta )  } \bigg), \quad \text{for } \varepsilon_{\tau} \in (0,1),
\end{equation}
then we can lower bound $\Lambda_{\Tbar}^{(1)} - \Lambda_{\Tbar}^{(2)}$ as follows.
\begin{align*}
\Lambda_{\Tbar}^{(1)} - \Lambda_{\Tbar}^{(2)}  
& =  \frac{ \tau^- (\frac{n}{2} - \eta) }{ \tau^+ (\frac{n}{2} -1+ \eta) }    
- \frac{ n\eta +\tau^- (\frac{n}{2} - \eta) }{ n (1-\eta) + \tau^+ (\frac{n}{2} -1+\eta)  }       \\
& = n \frac{ \tau^- (1-\eta) (\frac{n}{2} - \eta) - \tau^+ \eta (\frac{n}{2} -1+ \eta) } {\tau^+ (\frac{n}{2} -1+ \eta) \Big[  n (1-\eta) + \tau^+ (\frac{n}{2} -1+ \eta)   \Big] } \\
&\geq n \left[\frac{\eta \tau^+ (\frac{n}{2} -1+ \eta) (\frac{1}{\varepsilon_{\tau}} - 1)}{\tau^+ (\frac{n}{2} -1+ \eta) \Big[n(1-\eta) + \tau^+ (\frac{n}{2} -1+ \eta)\Big]} \right] \quad (\text{using } \eqref{eq:taupn_kminus1_cond})\\
&= \frac{n\eta (\frac{1}{\varepsilon_{\tau}} - 1)}{n(1-\eta) + \tau^+(\underbrace{\frac{n}{2} -1+ \eta}_{\leq n})} 
\geq \frac{\eta (\frac{1}{\varepsilon_{\tau}} - 1)}{1-\eta + \tau^+}.
\end{align*}
Hence the stated lower bound on the spectral gap follows, which completes the proof.
\end{proof}

\section{Proof of Theorem \ref{thm:k_2_signed_laplacian}} \label{sec:signed_laplacian_proof_k_2}
The proof is divided into the following steps.
\paragraph{Step $1$: Spectrum of $\mathbf{\ELbar}$.}
To begin with, we first observe for any $i,j$ that 
\begin{equation}
\expec{A_{ij}} = \left\{
\begin{array}{rl}
p (1-2\eta)  \quad ; & \text{if } \quad i,j \text{ lie in same cluster }  \\
-p (1 - 2\eta) \quad ; & \text{if } \quad i,j \text{ lie in different clusters} \\
0  \quad ; & \text{if } \quad i = j
\end{array} \right. .
\end{equation}
Due to the construction of $C_1, C_2$ as per the SSBM, this means that
\[
\EA  =  
\begin{tikzpicture}[mymatrixenv]
   \matrix[mymatrix] (m)  {
        p (1-2\eta)\ones\ones^T & -p(1 - 2\eta)\ones\ones^T  \\
        -p(1 - 2\eta)\ones\ones^T     &  p (1-2\eta)\ones\ones^T  \\
    };
    \mymatrixbraceright{1}{1}{$n/2$}
    \mymatrixbracetop{2}{2}{$n/2$}
\end{tikzpicture} - p(1 - 2\eta) I = M - p(1- 2\eta) I.
\]  
$M$ is clearly a rank $1$ matrix, indeed, $M = np(1 - 2\eta) \infovec \infovec^T$ 
where $\infovec$ is defined in \eqref{eq:k_2_inform_vect}. Therefore, we obtain 
\begin{equation}
\lambda_i(\EA) = \left\{
\begin{array}{rl}
p(n-1)(1-2\eta)  \quad ; & i = 1  \\
-p (1 - 2\eta) \quad ; & i = 2,\dots,n \\
\end{array} \right. , 
\end{equation}
and also $v_1(\EA) = v_1(M) = \infovec$. Moreover, one can easily check that $\EDbar = (n-1)p I$.
Therefore $\ELbar = \EDbar - \EA = (n-1)p I - \EA$ and hence
\begin{equation}
\lambda_i(\ELbar) = \left\{
\begin{array}{rl}
2\eta(n-1)p  \quad ; & i = n  \\
(n-1)p + p(1-2\eta) = (n-2\eta)p  \quad ; & i = 1,\dots,n-1 \\
\end{array} \right. , 
\end{equation}
with $v_n(\ELbar) = v_n(\EA) = \infovec$.

\paragraph{Step $2$: Bounding $\mathbf{\norm{\Lbar - \ELbar}_2}$.} Next, we will like to bound $\norm{\Lbar - \ELbar}_2$. 
Since $$\norm{\Lbar - \ELbar}_2 \leq \norm{\Dbar - \EDbar}_2 + \norm{A - \EA}_2,$$ we will bound the terms 
on the RHS individually starting with the first term.

Recall that $\Dbar_{ii} = \sum_{j \neq i} \abs{A_{ij}} = \sum_{ j \neq i} Z_{ij}$ where
$Z_{ij} = 1$ with probability $p$ and is $0$ with probability $1 - p$. Also, for a given $i$, 
note that $(Z_{ij})_{j=1, j\neq i}^n$ are i.i.d. Therefore from Chernoff bounds for 
sums of independent Bernoulli random variables (see Theorem \ref{thm:chernoff_bern}), 
it follows for any given $\delta \in (0,1)$ that 
\begin{align*}
\prob(\abs{\Dbar_{ii} - (n-1)p} \geq \delta(n-1)p) 
&\leq 2\exp\left(-\frac{(n-1)p\delta^2}{3}\right) \\
&\leq 2\exp\left(-\frac{np\delta^2}{6}\right) \quad \text{(if $n \geq 2$)}.
\end{align*}
If $p > \frac{12 \log n}{n}$ then we can set $\delta = \sqrt{\frac{12\log n}{np}}$ and apply the union bound. 
We then have that 
\begin{equation} \label{eq:Dbar_bound_k_2}
\norm{\Dbar - \EDbar}_2 = \max_{i} \abs{\Dbar_{ii} - \expec[\Dbar_{ii}]} \leq \sqrt{12 p n \log n} 
\end{equation}
with probability at least $1 - \frac{2}{n}$.

We now look to bound $\norm{A - \EA}_2$. Since $A$ is a random symmetric matrix with 
$(A_{ij})_{i \leq j}$ being independent, bounded random variables, we will use Theorem \ref{app:thm_symm_rand} to bound $\norm{A - \EA}_2$ 
with high probability. For given $i,j$ with $i \neq j$, if $i,j$ belong to the same cluster, then
\begin{equation}
  A_{ij} - \expec[A_{ij}] = \left\{
\begin{array}{rl}
1 - p(1-2\eta) \quad ; & \text{w. p } \quad p (1-\eta)  \\
-1 - (1-2\eta)p \quad ; & \text{w. p }  \quad p\eta \\
-p(1-2\eta) \quad ; & \text{w. p } \quad (1- p)
\end{array} \right.,
\label{eq:defA_same_cent}
\end{equation}
and if $i,j$ belong to different clusters, then 
\begin{equation}
  A_{ij} - \expec[A_{ij}]= \left\{
\begin{array}{rl}
 1 + (1 - 2\eta)p  \quad ; & \text{w. p } \quad p  \eta  \\
-1 + (1 - 2\eta)p \quad ; & \text{w. p }  \quad p (1-\eta) \\
 (1 - 2\eta)p  \quad ; & \text{w. p } \quad (1- p)
\end{array} \right. .
\label{eq:defA_diff_cent}
\end{equation}
In order to use Theorem \ref{app:thm_symm_rand}, we need to compute (upper bounds on) the quantities 
\begin{equation*}
\tilde \sigma := \max_i \sqrt{\sum_{j=1}^n \expec[(A_{ij} - \expec[A_{ij}])^2]}, \quad 
\tilde{\sigma}_{*} := \max_{i,j} \norm{  A_{ij} - \expec[A_{ij}]}_{\infty}.
\end{equation*}
Note that $\tilde{\sigma}_{*} \leq 1 + (1-2\eta)p \leq 2$. Moreover, for any $i \neq j$ (irrespective of 
whether in same cluster or not), we obtain from \eqref{eq:defA_same_cent},\eqref{eq:defA_diff_cent} that
\begin{align*}
\expec[(A_{ij} - \expec[A_{ij}])^2] 
&= (1 - p(1 - 2\eta)^2) p(1 - \eta) + (1 + (1-2\eta)p)^2 p\eta + (1-p) p^2 (1 - 2 \eta)^2 \\
&= [(1 + p^2(1 - 2\eta)^2 -2p(1-2\eta))(1-\eta) + (1 + (1-2\eta)^2p^2 2(1-2\eta)p)\eta]p \\ 
   &+ (1-p)p^2 (1-2\eta)^2 \\ 
&= [1 - p(1-2\eta)^2 (2-p)]p + (1-p)p^2(1-2\eta)^2 \\
&= (1-p(1-2\eta)^2)p \leq p.
\end{align*}
This gives us $\tilde \sigma \leq \sqrt{n-1} \sqrt{p} \leq \sqrt{np}$. Then using 
Theorem \ref{app:thm_symm_rand} with $t = \sqrt{np}$ we obtain for any $0 < \varepsilon \leq 1/2$ that 
\begin{align} \label{eq:A_concbd_k_2}
\prob(\norm{A - \EA}_2 \geq ((1+\varepsilon)2\sqrt{2} + 1) \sqrt{np}) 
\leq n\exp\left(-\frac{p n}{4 c_{\varepsilon}} \right),
\end{align}
where $c_{\varepsilon} > 0$ depends only on $\varepsilon$. Using \eqref{eq:Dbar_bound_k_2}, \eqref{eq:A_concbd_k_2} 
and applying the union bound, we have that 
\begin{align}
\norm{\Lbar - \ELbar}_2 
&\leq \underbrace{((1+\varepsilon)2\sqrt{2} + 1)}_{\geq \sqrt{12}}  \sqrt{np} + \sqrt{12 p n \log n}  \nonumber \\
&\leq 2 ((1+\varepsilon)2\sqrt{2} + 1) \sqrt{n p \log n} \label{eq:Lbar_bd_kunegis}
\end{align} 
holds with probability at least 
$1 - \frac{2}{n} - n \exp\left(- \frac{pn}{4 c_{\varepsilon}}  \right)$.
%
%

\paragraph{Step $3$: Using Davis-Kahan theorem.}
Say $\norm{\Lbar - \ELbar}_2 \leq \pert$ holds. Then from Weyl's inequality \cite{Weyl1912} (see Theorem \ref{thm:Weyl}), it holds 
that $\abs{\lambda_i(\Lbar) - \lambda_i(\ELbar)} \leq \norm{\Lbar - \ELbar}_2 \leq \pert$ for $i=1,\dots,n$. 
Moreover, from  the Davis-Kahan theorem (see Theorem \ref{thm:DavisKahan}), we have that 
\begin{equation} \label{eq:tempbd_sgnlap_1}
\norm{(I - v_n(\ELbar) v_n(\ELbar)^T) v_n(\Lbar)}_2 \leq \frac{\pert}{\abs{\lambda_{n-1}(\Lbar) - \lambda_n(\ELbar)}}
\end{equation}
holds if $\abs{\lambda_{n-1}(\Lbar) - \lambda_n(\ELbar)} > 0$. Now, 
\begin{align*}
 \lambda_{n-1}(\Lbar) - \lambda_n(\ELbar) 
&\geq \lambda_{n-1}(\ELbar) - \lambda_n(\ELbar) - \pert \\
&= (n - 2\eta)p - 2\eta(n-1)p - \pert \\
&= np(1 - 2\eta) - \pert > 0
\end{align*}
if $\pert < np(1-2\eta)$ holds. Therefore, for $0 < \epsilon < 1$, if 
$\pert \leq \epsilon np(1-2\eta)$ is satisfied, then from \eqref{eq:tempbd_sgnlap_1}, we obtain the bound  
$\norm{(I - v_n(\ELbar) v_n(\ELbar)^T) v_n(\Lbar)}_2 \leq \frac{\epsilon}{1-\epsilon}$.
Finally, from \eqref{eq:Lbar_bd_kunegis}, we have that 
$$\pert < np(1-2\eta) \Leftrightarrow p \geq \frac{4((1+\varepsilon)2\sqrt{2} + 1)^2}{\epsilon^2(1-2\eta)^2} \frac{\log n}{n}.$$ 
The above bound on $p$ also implies $p > 12\log n/n$ which we required earlier for deriving \eqref{eq:Dbar_bound_k_2}. 
This completes the proof.

\clearpage

\section{Additional experiments}
In this section, we show the results of additional supporting experiments: plots demonstrating how the parameters $\taup$ and $\taun$ affect the performance of SPONGE 
in a range of regimes, and plots comparing the performance of all algorithms on the SSBM under a wide range of parameters $k,p$ and $\eta$.

\subsection{Parameter analysis for $\tau^+$ and $\tau^-$}  
Here we plot performance of SPONGE 
when varying the parameters $\taup$ and $\taun$, to motivate our choices thereof.

In Figure \ref{tab:hm1}, we show the performance of SPONGE on SSBM graphs, when considering  $k-1$ eigenvectors, plotted for a range of values for $\taup$ and $\taun$. In most cases, we observe that performance is not too sensitive to the choice of these parameters, which could be interpreted as a strength of our approach. When there exist regions of both good and poor performance, we see that the region of good performance is concentrated around the axes, i.e. when either $\taup$ and $\taun$ are low. We note that the point $\taup=\taun=1$ always falls within the region of maximum recovery when it is present (with the exception of the top left plot).

\newcommand{\wid}{1.4in}
\newcolumntype{C}{>{\centering\arraybackslash}m{\wid}}
\begin{table*}\sffamily
\begin{tabular}{l*4{C}@{}}
  & $p=0.001$ & $p=0.023$ & $p=0.045$ & $p=0.1$ \\ 
$k=2$ & \includegraphics[width=\wid]{{{Figures/New_appendix_figures/SPONGE_km1_n5000_p0.001_k2_eta0.05_grid50_log}}} & \includegraphics[width=\wid]{{{Figures/New_appendix_figures/SPONGE_km1_n5000_p0.023_k2_eta0.05_grid50_log}}} & \includegraphics[width=\wid]{{{Figures/New_appendix_figures/SPONGE_km1_n5000_p0.045_k2_eta0.05_grid50_log}}} & \includegraphics[width=\wid]{{{Figures/New_appendix_figures/SPONGE_km1_n5000_p0.1_k2_eta0.05_grid50_log}}} \\ 
$k=3$ & \includegraphics[width=\wid]{{{Figures/New_appendix_figures/SPONGE_km1_n5000_p0.001_k3_eta0.05_grid50_log}}} & \includegraphics[width=\wid]{{{Figures/New_appendix_figures/SPONGE_km1_n5000_p0.023_k3_eta0.05_grid50_log}}} & \includegraphics[width=\wid]{{{Figures/New_appendix_figures/SPONGE_km1_n5000_p0.045_k3_eta0.05_grid50_log}}} & \includegraphics[width=\wid]{{{Figures/New_appendix_figures/SPONGE_km1_n5000_p0.1_k3_eta0.05_grid50_log}}} \\ 
$k=5$ & \includegraphics[width=\wid]{{{Figures/New_appendix_figures/SPONGE_km1_n5000_p0.001_k5_eta0.05_grid50_log}}} & \includegraphics[width=\wid]{{{Figures/New_appendix_figures/SPONGE_km1_n5000_p0.023_k5_eta0.05_grid50_log}}} & \includegraphics[width=\wid]{{{Figures/New_appendix_figures/SPONGE_km1_n5000_p0.045_k5_eta0.05_grid50_log}}} & \includegraphics[width=\wid]{{{Figures/New_appendix_figures/SPONGE_km1_n5000_p0.1_k5_eta0.05_grid50_log}}} \\ 
$k=13$ & \includegraphics[width=\wid]{{{Figures/New_appendix_figures/SPONGE_km1_n5000_p0.001_k13_eta0.05_grid50_log}}} & \includegraphics[width=\wid]{{{Figures/New_appendix_figures/SPONGE_km1_n5000_p0.023_k13_eta0.05_grid50_log}}} & \includegraphics[width=\wid]{{{Figures/New_appendix_figures/SPONGE_km1_n5000_p0.045_k13_eta0.05_grid50_log}}} & \includegraphics[width=\wid]{{{Figures/New_appendix_figures/SPONGE_km1_n5000_p0.1_k13_eta0.05_grid50_log}}} \\ 
\end{tabular}
\captionof{figure}{SSBM recovery of $\text{SPONGE}$ with $k-1$ eigenvectors as a function of $\tau^+$ and $\tau^-$ with $n = 5000$, $\eta = 0.05$ and varying values of $k$ and $p$.}
\label{tab:hm1}
\end{table*} 

Figure \ref{tab:hm2} shows the same experiments, but using the bottom $k$ eigenvectors for the SPONGE algorithm, instead of $k-1$. The range of values for $\taup$ and $\taun$ was kept the same as in Figure \ref{tab:hm1}. 
The plots look similar to those of Figure \ref{tab:hm1} in most cases, but with two main differences: firstly, as we see most clearly from the rightmost three plots in the top row, there are some marginal regions (close to the boundary of the $y$-axis) where recovery with $k-1$ eigenvectors was poor, and with $k$ eigenvectors is greatly improved. This is because previously, an informative eigenvector was being displaced by a non-informative one, and so by taking the extra eigenvector we capture this useful information. However, there are also some regions where recovery drops from near-perfect to mediocre. These are regions where we were already capturing all informative eigenvectors, so by taking another one we \emph{dilute} the quality of our embedding with what is effectively an extra dimension of noise. In general, if we pick a natural parameter choice such as $\taup=\taun=1$, our recovery score is not improved, and in some cases is worsened, if we use $k$ eigenvectors instead of $k-1$. For example, for $k=2$, and $p=\{ 0.023, 0.045, 0.1 \}$, using $k-1$ eigenvectors gives better results than using $k$ eigenvectors for a wide range of values as we move further away from the origin, except for the region where $\taun$ is very small (note the black stripe in the top three left plots in Figure \ref{tab:hm1}), in which case the opposite statement holds true.

\begin{table*}\sffamily
\begin{tabular}{l*4{C}@{}}
  & $p=0.001$ & $p=0.023$ & $p=0.045$ & $p=0.1$ \\ 
$k=2$ & \includegraphics[width=\wid]{{{Figures/New_appendix_figures/SPONGE_k_n5000_p0.001_k2_eta0.05_grid50_log}}} & \includegraphics[width=\wid]{{{Figures/New_appendix_figures/SPONGE_k_n5000_p0.023_k2_eta0.05_grid50_log}}} & \includegraphics[width=\wid]{{{Figures/New_appendix_figures/SPONGE_k_n5000_p0.045_k2_eta0.05_grid50_log}}} & \includegraphics[width=\wid]{{{Figures/New_appendix_figures/SPONGE_k_n5000_p0.1_k2_eta0.05_grid50_log}}} \\ 
$k=3$ & \includegraphics[width=\wid]{{{Figures/New_appendix_figures/SPONGE_k_n5000_p0.001_k3_eta0.05_grid50_log}}} & \includegraphics[width=\wid]{{{Figures/New_appendix_figures/SPONGE_k_n5000_p0.023_k3_eta0.05_grid50_log}}} & \includegraphics[width=\wid]{{{Figures/New_appendix_figures/SPONGE_k_n5000_p0.045_k3_eta0.05_grid50_log}}} & \includegraphics[width=\wid]{{{Figures/New_appendix_figures/SPONGE_k_n5000_p0.1_k3_eta0.05_grid50_log}}} \\ 
$k=5$ & \includegraphics[width=\wid]{{{Figures/New_appendix_figures/SPONGE_k_n5000_p0.001_k5_eta0.05_grid50_log}}} & \includegraphics[width=\wid]{{{Figures/New_appendix_figures/SPONGE_k_n5000_p0.023_k5_eta0.05_grid50_log}}} & \includegraphics[width=\wid]{{{Figures/New_appendix_figures/SPONGE_k_n5000_p0.045_k5_eta0.05_grid50_log}}} & \includegraphics[width=\wid]{{{Figures/New_appendix_figures/SPONGE_k_n5000_p0.1_k5_eta0.05_grid50_log}}} \\ 
$k=13$ & \includegraphics[width=\wid]{{{Figures/New_appendix_figures/SPONGE_k_n5000_p0.001_k13_eta0.05_grid50_log}}} & \includegraphics[width=\wid]{{{Figures/New_appendix_figures/SPONGE_k_n5000_p0.023_k13_eta0.05_grid50_log}}} & \includegraphics[width=\wid]{{{Figures/New_appendix_figures/SPONGE_k_n5000_p0.045_k13_eta0.05_grid50_log}}} & \includegraphics[width=\wid]{{{Figures/New_appendix_figures/SPONGE_k_n5000_p0.1_k13_eta0.05_grid50_log}}} \\ 
\end{tabular}
\captionof{figure}{SSBM recovery of the $\text{SPONGE}$ algorithm with $k$ eigenvectors as a function of $\tau^+$ and $\tau^-$ with $n = 5000$, $\eta = 0.05$ and varying values of $k$ and $p$.}
\label{tab:hm2}
\end{table*}

\vspace{-1mm}
\subsection{Numerical experiments on the SSBM}
\vspace{-1mm}

In Figure \ref{tab:scores1} we plot the performance of \textsc{SPONGE} and \textsc{SPONGE}$_{sym}$ against five benchmark algorithms, on graphs generated from the SSBM, with equal size planted clusters, and with performance measured by the Adjusted Rand Index (ARI) against the ground truth. 
We fix the parameter values $p=\{0.001, 0.01, 0.1\}$ and $k=\{2,5,10,20,50\}$, and plot the ARI score against the flip probability $\eta$. We see that for $k=2$, the symmetric Signed Laplacian $\bar L_{sym}$ of Kunegis et al. \cite{kunegis2010spectral} can tolerate the highest $\eta$. However, as $k$ increases, the previous state-of-the-art algorithms are quickly overtaken, first by  \textsc{SPONGE}, and then \textsc{SPONGE}$_{sym}$. When $k$ is large (rows $k=20$ and $k=50$), \textsc{SPONGE}$_{sym}$ outperforms all other algorithms by a large margin.

\renewcommand{\wid}{1.8in}
\begin{table*}\sffamily
\begin{tabular}{l*3{C}@{}}
  & $p=0.001$ & $p=0.01$ & $p=0.1$ \\ 
$k=2$ &\includegraphics[width=\wid]{{{Figures/SSBM_graphs_all/eta_recovery_n10000_p0.001_k2_Suniform}}} & \includegraphics[width=\wid]{{{Figures/SSBM_graphs_all/eta_recovery_n10000_p0.01_k2_Suniform}}} &  \includegraphics[width=\wid]{{{Figures/SSBM_graphs_all/eta_recovery_n10000_p0.1_k2_Suniform}}} \\ 
$k=5$ & \includegraphics[width=\wid]{{{Figures/SSBM_graphs_all/eta_recovery_n10000_p0.001_k5_Suniform}}} & \includegraphics[width=\wid]{{{Figures/SSBM_graphs_all/eta_recovery_n10000_p0.01_k5_Suniform}}} &  \includegraphics[width=\wid]{{{Figures/SSBM_graphs_all/eta_recovery_n10000_p0.1_k5_Suniform}}} \\ 
$k=10$ & \includegraphics[width=\wid]{{{Figures/SSBM_graphs_all/eta_recovery_n10000_p0.001_k10_Suniform}}} & \includegraphics[width=\wid]{{{Figures/SSBM_graphs_all/eta_recovery_n10000_p0.01_k10_Suniform}}} &  \includegraphics[width=\wid]{{{Figures/SSBM_graphs_all/eta_recovery_n10000_p0.1_k10_Suniform}}} \\ 
$k=20$ & \includegraphics[width=\wid]{{{Figures/SSBM_graphs_all/eta_recovery_n10000_p0.001_k20_Suniform}}} & \includegraphics[width=\wid]{{{Figures/SSBM_graphs_all/eta_recovery_n10000_p0.01_k20_Suniform}}} &  \includegraphics[width=\wid]{{{Figures/SSBM_graphs_all/eta_recovery_n10000_p0.1_k20_Suniform}}} \\ 
$k=50$ & \includegraphics[width=\wid]{{{Figures/SSBM_graphs_all/eta_recovery_n10000_p0.001_k50_Suniform}}} & \includegraphics[width=\wid]{{{Figures/SSBM_graphs_all/eta_recovery_n10000_p0.01_k50_Suniform}}} &  \includegraphics[width=\wid]{{{Figures/SSBM_graphs_all/eta_recovery_n10000_p0.1_k50_Suniform}}} \\ 
\end{tabular}

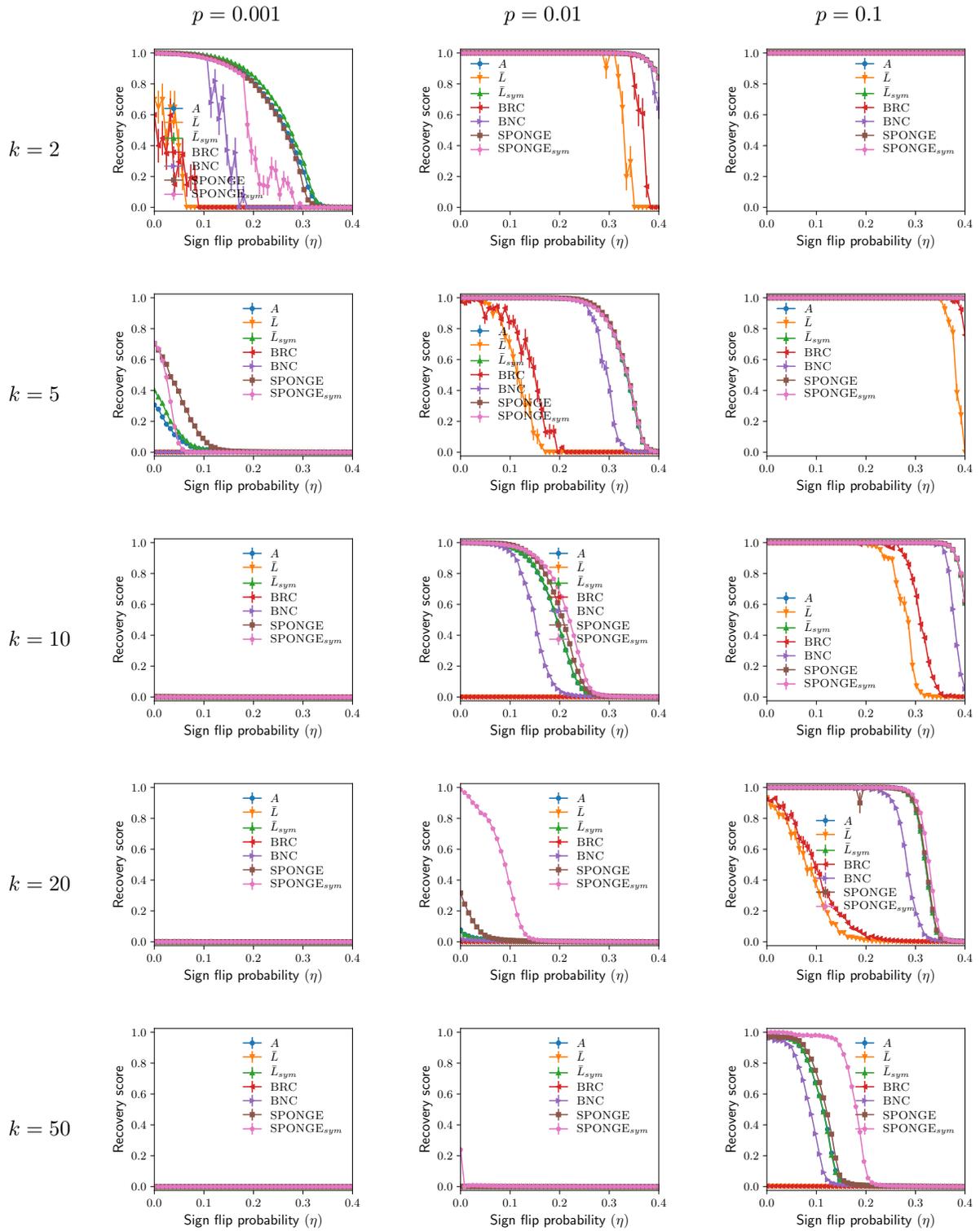
\captionof{figure}{ Adjusted Rand Index achieved by several algorithms as a function of the noise level $\eta$ for different values of $p$ and $k$,  $n=10000$, and clusters of fixed equal size.}
\label{tab:scores1}
\end{table*}

\clearpage

\subsection{Numerical experiments on real data sets}

This section details the results of our numerical experiments on two additional data sets.

\vspace{-2mm}

\paragraph{Correlations of financial market returns - S\&P 500.}
We detail here the results of our experiment on financial equity time series data, corresponding to constituents of S\&P 500. The procedure for obtaining the signed network is the same as the one for S\&P 1500, discussed in the main text.

We consider time series price data for $n=500$ stocks in the S\&P 
500 Index, during 2003-2015, containing approximately $n_d=3000$ trading days. We work with daily log returns of the prices 
     \begin{equation}
        R_{i,t} = \log \frac{ P_{i,t} }{ P_{i,t-1}},
     \end{equation} 
where $P_{i,t}$ denotes the market close price of instrument $i$ on day $t$. Next,  we compute the daily market excess return for each instrument, 
 \begin{equation} 
    \tilde{R}_{i,t} = R_{i,t} -  R_{ \text{SPY},t},  \forall i=1, \ldots, n, \; t=1,\ldots,n_d
\end{equation} 
where $R_{\text{SPY},t}$ denotes the daily log return of  \textsc{SPY}, the S\&P 500 index ETF used as a proxy for the market. 
We then calculate the Pearson correlation coefficient between historical return for each pair of companies, and use that as an edge weight in our signed graph.
Figure \ref{fig:SP500_k_10_20} shows that, for $k=\{ 10, 20 \}$, we are able to recover the clustering structure covering the entire network. 

\begin{figure}[!htp]
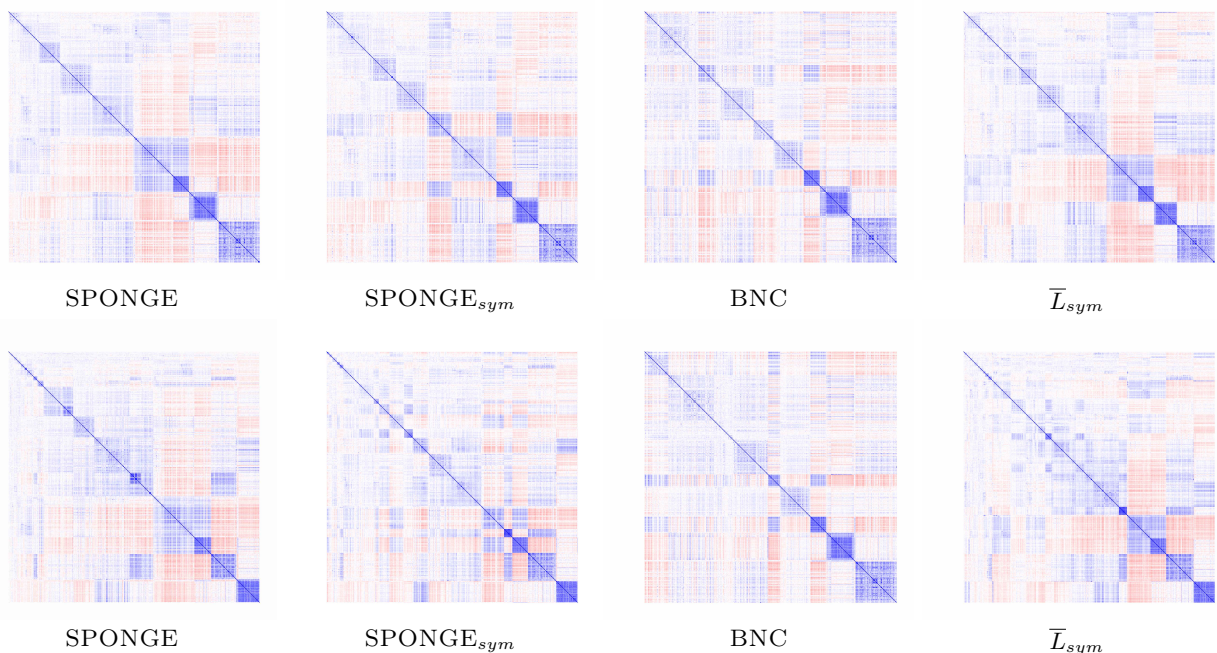

\captionsetup[subfigure]{labelformat=empty}
\begin{centering}
\subcaptionbox{\textsc{SPONGE}
}{\includegraphics[width=0.24\columnwidth]{{{Figures/real_data_graphs_low_k/SP500adjSPONGE_k10}}}}	
\subcaptionbox{ \small  \textsc{SPONGE}$_{sym}$ 
}{\includegraphics[width=0.24\columnwidth]{{{Figures/real_data_graphs_low_k/SP500adjSPONGEsym_k10}}}}	
\subcaptionbox{\textsc{BNC}  
}{\includegraphics[width=0.24\columnwidth]{{{Figures/real_data_graphs_low_k/SP500adjBNC_k10}}}}	
\subcaptionbox{$\bar{L}_{sym}$ 
}{\includegraphics[width=0.24\columnwidth]{{{Figures/real_data_graphs_low_k/SP500adjLap_k10}}}}	
\end{centering}
\begin{centering}
\subcaptionbox{\textsc{SPONGE}}{\includegraphics[width=0.24\columnwidth]{{{Figures/real_data_graphs_high_k/SP500adjSPONGE_k20}}}}	
\subcaptionbox{\textsc{SPONGE}$_{sym}$  
}{\includegraphics[width=0.24\columnwidth]{{{Figures/real_data_graphs_high_k/SP500adjSPONGEsym_k20}}}}	
\subcaptionbox{\textsc{BNC} 
}{\includegraphics[width=0.24\columnwidth]{{{Figures/real_data_graphs_high_k/SP500adjBNC_k20}}}}	
\subcaptionbox{$\bar{L}_{sym}$  
}{\includegraphics[width=0.24\columnwidth]{{{Figures/real_data_graphs_high_k/SP500adjLap_k20}}}}	
\par\end{centering}
\captionsetup{width=0.99\linewidth}
\caption{ Adjacency matrix of the S\&P 500 data set sorted by cluster membership, for $k=\{10,20\}$.
}
\label{fig:SP500_k_10_20}
\end{figure}

\paragraph{Foreign Exchange correlations.}

The \textsc{SPONGE} algorithms   proved to work  
particularly well in the setting of clustering a Foreign Exchange matrix
derived from 
daily  Special Drawing Rights (SDR)  exchange value rates \cite{FX_IMF_SDR}. 
Indeed, as shown in Figure \ref{fig:forexClusters}, only \textsc{SPONGE}  and \textsc{SPONGE}$_{sym}$ were able to recover four neat clusters associated respectively to the EURO  \euro, US Dollar \$, UK  Pound Sterling  \textsterling, and Japanese Yen  \yen. 
These are precisely the four currencies that, in certain percentage weights, define the value of the SDR reserve. Figure \ref{fig:currencymap} shows that the recovered clusters align well with their geographic locations.

\begin{figure}
\captionsetup[subfigure]{labelformat=empty}
	\begin{centering}
		\subcaptionbox{\textsc{SPONGE} }{\includegraphics[width=0.24\columnwidth]{{{Figures/real_data_graphs_low_k/FOREXadjGEadd_k4}}}}	
		\subcaptionbox{\textsc{SPONGE}$_{sym}$  }{\includegraphics[width=0.24\columnwidth]{{{Figures/real_data_graphs_low_k/FOREXadjGEmul_k4}}}}	
		\subcaptionbox{\textsc{BNC} }{\includegraphics[width=0.24\columnwidth]{{{Figures/real_data_graphs_low_k/FOREXadjBNC_k4}}}}	
		\subcaptionbox{$\bar{L}_{sym}$ }{\includegraphics[width=0.24\columnwidth]{{{Figures/real_data_graphs_low_k/FOREXadjLap_k4}}}}	
		\par
		\end{centering}
	\captionsetup{width=0.99\linewidth}
	\caption{ 
	Adjacency matrix of the Forex data set, sorted by cluster membership, for $k = 4$ clusters. We remark that \textsc{BNC} and especially $\bar{L}_{sym}$ return less meaningful results.
}
\label{fig:forexClusters}
\end{figure}

\begin{figure}[h]
	\vspace{.3in}
	\centerline{\includegraphics[width=.75 \linewidth]{./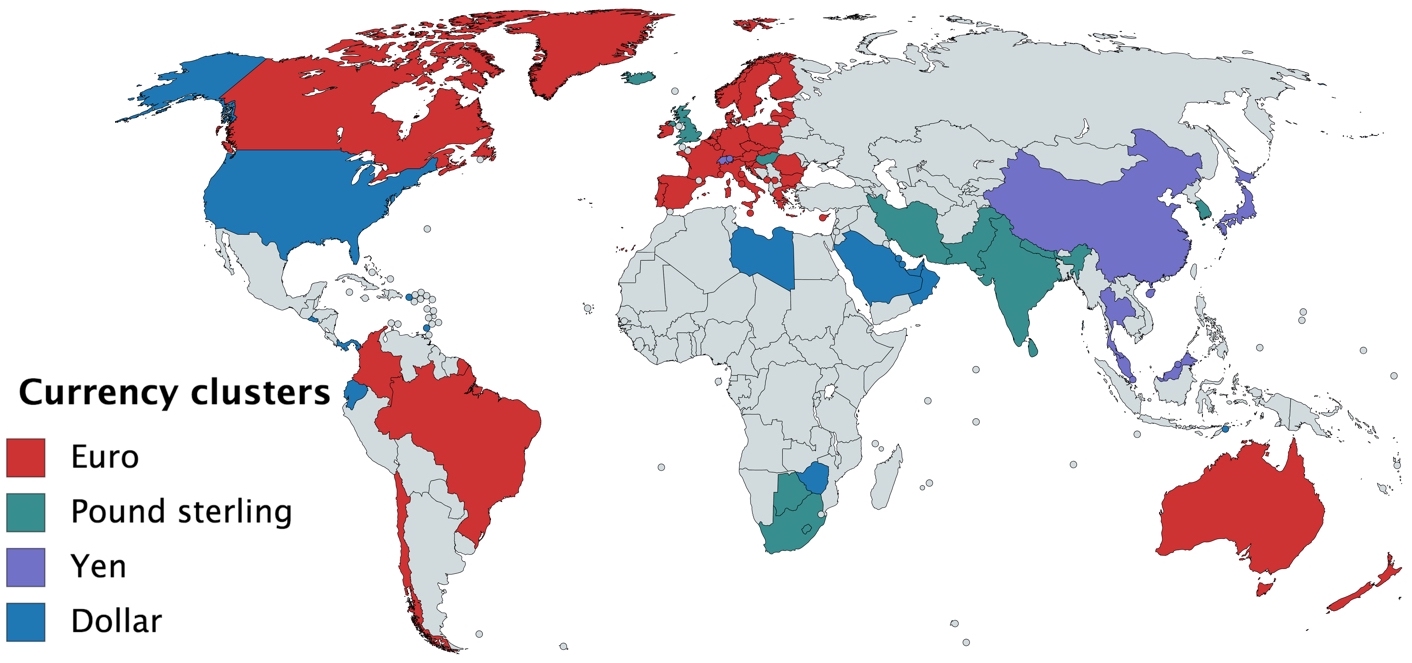}}
	\captionsetup{width=1.05\linewidth}
	\caption{\textsc{SPONGE}$_{sym}$  
	clustering of the $50$ currencies with $k = 4$ clusters.}
	\label{fig:currencymap}
\end{figure}

\end{document}